\newcommand{\R}{\mathbb{R}}
\newcommand{\N}{\mathbb{N}}
\newcommand{\X}{\mathcal{X}}
\newcommand{\1}{\mathbbm{1}}
\renewcommand{\P}{\mathbb{P}}
\newcommand{\E}{\mathbb{E}}
\newcommand{\Q}{\mathbb{Q}}
\newcommand{\qed}{$\hfill\blacksquare$}
\DeclareMathOperator*{\argmin}{arg\,min}
\newcommand{\veit}[1]{{\color{black}{\textbf{Veit:} #1}}}
\begin{document}

\title{Connections and Equivalences between the Nystr\"om Method and Sparse Variational Gaussian Processes}

\author{\name Veit Wild \email veit.wild@keble.ox.ac.uk \\
       \addr Department of Statistics, University of Oxford, UK 
       \AND
       \name Motonobu Kanagawa \email motonobu.kanagawa@eurecom.fr \\
       \addr Data Science Department, EURECOM,  France 
       \AND
       \name Dino Sejdinovic \email 
       dino.sejdinovic@adelaide.edu.au \\
       \addr School of Computer and Mathematical Sciences, University of Adelaide, Australia
       }
       
\editor{}
% To activate this command, modify \def\@starteditor in jmlr2.sty

\maketitle

\begin{abstract}%   <- trailing '%' for backward compatibility of .sty file
We investigate the connections between sparse approximation methods for making kernel methods and Gaussian processes (GPs) scalable to \textcolor{black}{large-scale} data, focusing on the Nystr\"om method and the Sparse Variational Gaussian Processes (SVGP). 
While sparse approximation methods for GPs and kernel methods share some algebraic similarities, the literature lacks a deep understanding of how and why they are related. This may pose an obstacle to the communications between the GP and kernel communities, making it difficult to transfer results from one side to the other. Our motivation is to remove this obstacle, by clarifying the connections between the sparse approximations for GPs and kernel methods. In this work, we study the two popular approaches, the Nystr\"om and SVGP approximations, in the context of a regression problem, and establish various connections and equivalences between them. In particular, we provide an RKHS interpretation of the SVGP approximation, and show that the Evidence Lower Bound of the SVGP contains the objective function of the Nystr\"om approximation, revealing the origin of the algebraic equivalence between the two approaches. We also study recently established convergence results for the SVGP and how they are related to the approximation quality of the Nystr\"om method.
\end{abstract}

\begin{keywords}
 Gaussian Processes, Kernel Methods, Sparse Approximation, Nystr\"om Method, Sparse Variational Gaussian Processes
\end{keywords}

\tableofcontents

\section{Introduction}

Gaussian processes (GPs) and kernel methods are the two principled learning approaches that make use of {\em positive definite kernels}, and have been studied extensively in statistics and machine learning. On one hand, GP-based approaches \citep{RasmussenWilliams} employ a kernel to induce the corresponding GP, in order to define a prior distribution of the ground-truth latent function of interest. Given data, Bayes' rule is then applied to obtain the posterior distribution of the latent function. 
On the other hand, kernel methods \citep{scholkopf2002learning} make use of a kernel to induce the corresponding Reproducing Kernel Hilbert Space (RKHS) as a ``hypothesis space.'' Given data, empirical risk minimization is then performed in the RKHS to estimate the ground-truth function of interest. Although the GP and kernel approaches have different modeling philosophies, there are indeed deep connections and equivalences between them, which extend beyond a superficial similarity \citep{Par61,kimeldorf1970correspondence,Berlinet2004,kanagawa2018gaussian}.

The elegance of the GP and kernel approaches is that the infinite-dimensional learning problems can be reduced to the corresponding finite-dimensional problems. 
However, this comes with a cost: a naive implementation for supervised learning usually leads to the computational complexity that is cubic in the data size.
%the computational complexity of either approach is usually cubic or at least quadratic \citep{gardner2018gpytorch} with respect to the data size. 
This unfavorable scaling property has motivated the developments of several approximation methods to make the GP and kernel approaches scalable. {\em Sparse approximation} methods, which approximate the solution of interest using a set of input points smaller than training data, are among the most popular and successful approximation approaches.
These approaches have been studied since the earliest developments of the GP and kernel approaches \citep[e.g.,][]{williams2001using,csato2002sparse,smola2000sparse, seeger2003fast}.  

As the GP and kernel communities grow, sparse approximation methods for either approach tend to be developed independently to those for the other approach.
For instance, consider the Sparse Variational Gaussian Process (SVGP) approach of \citet{titsias2009variational,titsias2009techreport}, which is one of the most successful and widely used sparse approximation methods for GPs.
The SVGP is derived in the framework of variational Bayesian inference, so that the sparse approximation is to be chosen to minimize the KL divergence to the exact GP posterior. As such, the developments in SVGP \citep[e.g.,][]{hensman2013gaussian,hensman2015scalable,matthews2016sparse,burt2019rates,Rossi21sparse} have proceeded almost independently of the corresponding literature on sparse approximations for kernel methods.  Similarly, the recent advances in using and understanding the Nystr\"om method \citep{williams2001using}, which is one of the most popular sparse approximations in kernel methods, have been made independently to those of sparse GP approximations. The majority of these advances focus on an efficient approximation of the kernel matrix \citep[e.g.,][]{drineas2005nystrom,belabbas2009spectral,gittens2016revisiting,Derezinski2020Improved} or empirical risk minimization in the RKHS with a reduced basis \citep[e.g,][]{bach2013sharp,alaoui2015fast,rudi2015less,rudi2017falkon,Meanti_NeurIPS2020_kernel}. 
This separation of two lines of research is arguably due to the difference in the notations and modelling philosophies of GPs and kernel methods. 
The separation makes it difficult to transfer useful and interesting results from one side to the other, and the communities might have missed an important advance that may have been obtained otherwise. The motivation of the current work is to overcome this potential difficulty by bridging the two lines of research.

{\color{black}
This work investigates connections between sparse approximation methods for GPs and kernel methods, focusing on two fundamental methods: the SVGP and Nystr\"om approximations.  We consider the regression setting, where the exact solutions of Gaussian process regression (GPR) and kernel ridge regression (KRR) are to be approximated.  
Both the SVGP and Nystr\"om approximations use $m$ input locations for approximating the exact solutions, where $m$ is one's choice and smaller than the training data size $n$.  
They can be computed in the $O(m^2 n + m^3)$ complexity and thus can be much faster than the naive implementations requiring the $O(n^3)$ complexity.

The SVGP defines a class of GPs (the variational family used for approximation), parameterized by $m$ input locations called {\em inducing points} and by other parameters defining the means and covariances of the function values at the inducing points.  
%The SVGP defines a class of parametric GPs (the variational family),   each of which is parameterized by $m$ input locations $z_1, \dots, z_m$ (called {\em inducing points}),  associated $m$ real outputs $\mu_1, \dots, \mu_m$  and a $m \times m$ covariance matrix $\Sigma$.   
%For each parametric GP, the $\mu_1, \dots, \mu_m$ and $\Sigma$ define the means and covariances of the function values at the inducing $z_1, \dots, z_m$.
%The mean and covarinace functions of each parametric GP are respectively parameterized by $\mu_1, \dots, \mu_m$ and $\Sigma$, which then define the joint distribution of the function values of the parametric GP at the inducing points $z_1, \dots, z_m$.
%define the means and covariances of the function values of the parametric GP at the locations $z_1, \dots, z_m$;  these random function values are called the {\em inducing variables} in the literature. 
For fixed inducing inputs, the mean and covariance parameters are obtained by minimizing the Kullback-Leibler (KL) divergence between the thus parametrized GP and the exact posterior of GPR, or equivalently by maximizing the Evidence Lower Bound (ELBO); the resulting optimal parametrized GP is the SVGP approximation.
%For fixed $z_1, \dots, z_m$,  the outputs $\mu_1, \dots, \mu_m$ and covariance matrix $\Sigma$ are optimized so as to minimize the KL divergence between the parametric GP and the exact posterior GP, or equivalently by maximizing the Evidence Lower Bound (ELBO); the resulting optimal parametric GP is the SVGP approximation.   
On the other hand, the Nystr\"om method uses $m$ inducing points (or  {\em landmark points}) to define a $m$-dimensional subspace of the RKHS, and solves the regularized least-squares problem of KRR in this subspace; the resulting solution is the Nystr\"om approximation.

We first show that the ELBO objective for the parametric mean function of the SVGP is equivalent to the objective function for the Nystr\"om KRR (Section \ref{sec:eq_opt_mean_cov}). Consequently, the optimized mean function of the SVGP approximation is identical to the Nystr\"om KRR estimator.  This equivalence shows that the SVGP mean function solves a regularized least squares problem on the subspace in the RKHS spanned by inducing points.
 It is parallel to the well-known equivalence between GPR and KRR \citep{kimeldorf1970correspondence,kanagawa2018gaussian}, which the SVGP and Nystr\"om respectively approximate.

RKHS interpretations are also provided for the covariance function of the SVGP (Section \ref{sec:var_mean_cov}).  The ELBO objective for the parametric covariance function of the SVGP is analyzed  in terms of the RKHS geometry. The analysis elucidates how the covariance function of the SVGP approximates the posterior covariance function of the exact GPR. Moreover, the variance function of the SVGP  is shown to be equal to the sum of the worst-case error of the Nystr\"om KRR in the RKHS and that of a kernel interpolant on inducing points.  This equivalence enables understanding how SVGP quantifies uncertainties from a function-space viewpoint and how the choice of the kernel impacts uncertainty estimates.

Lastly, connections are established for the approximation quality of the SVGP and Nystr\"om approximations.   It is shown that the approximation error of the SVGP (as measured by the KL divergence) is closely related to the approximation error of the Nystr\"om (as measured by the regularized empirical risk). 
By leveraging this connection, we demonstrate how upper and lower bounds for the  SVGP approximation error by \citet{BurtJMLR20} can be translated into the corresponding bounds for the Nystr\"om approximation error (Sections \ref{sec:theory-connection-Nystrom} and \ref{sec:lower-bound-approx}). 
Similarly, we show how an approximation error bound for the Nystr\"om as measured by the RKHS distance, which is novel and may be of independent interest, can be translated into  an approximation error bound for the {\em derivatives} of the mean function of the SVGP (Section \ref{sec:RKHS-error-bound}).

To summarize, our contributions are in providing novel interpretations for the SVGP and Nystr\"om approximations by establishing their connections and equivalences.  These interpretations will help understand either approach from the perspective of the other approach, enable translating results from one side to the other, and encourage new algorithmic developments.
}

\textcolor{black}{
This paper is organized as follows. We provide relevant background on Gaussian processes and kernel methods in Section \ref{sec:background}, and on the SVGP and Nystr\"om approximations in Section \ref{sec:sparse-approx-methods}. We investigate the equivalences between the SVGP and Nystr\"om approximations in Section \ref{chapter_large_scale_approx}, and study the connections between their approximation properties in Section \ref{chapter_convergence_bounds}. We conclude in Section \ref{sec:conclusion}. The appendix provides proofs for the main theoretical results, and a derivation of the optimal variational parameters of the SVGP based on \citet{khan2021bayesian} on the Bayesian learning rule.
}

%This paper is organized as follows. Section \ref{sec:background} provides relevant background on Gaussian processes and kernel methods.  Section \ref{chapter_large_scale_approx} investigates the connections and equivalences between the SVGP and the Nystr\"om approximations. Section \ref{chapter_convergence_bounds} studies the approximation properties of the SVGP and Nystr\"om approximations and their connections. Section \ref{sec:conclusion} concludes.  Table \ref{tab:results} provides an overview of connections between the SVGP and Nystr\"om approximations. 

%Section 2 provides relevant background on GPR and KRR. %It can be skipped by readers familiar with the subject.
%Section \ref{chapter_large_scale_approx} develops the approximation theory for KRR and GPR. For KRR we describe two ways to view the Nystr\"om approximation, which we prove to be equivalent. For GPR we follow the variational Bayesian approach developed by \cite{titsias2009variational} and later refined by \cite{matthews2016sparse} to deal with the infinite nature of the related objects in a mathematically principled way. In the end, the Nystr\"om approximation turns out to coincide with the posterior mean of the variational Bayesian approximation. In Section \ref{chapter_convergence_bounds}, we develop concentration rates for the (expected) RKHS distance between the KRR estimator and its Nystr\"om approximation. We further parallel these results to the work of \cite{burt2019rates} and show that they are equivalent in a meaningful sense.

\subsection{Notation} \label{sec:notation}
We use the following notation in this paper.
Let $\mathbb{N}$ be the set of natural numbers, $\mathbb{R}$ be the real line, and $\mathbb{R}^d$ for $d \in \mathbb{N}$ be the $d$-dimensional Euclidean space. For any $v \in \mathbb{R}^d$, $\| v \|$ denotes the Euclidean norm. 

Let $\mathcal{X}$ be a nonempty set.
For a function $f : \mathcal{X} \to \mathbb{R}$ and $X := (x_1,\dots,x_n) \in \mathcal{X}^n$ with $n \in \mathbb{N}$, denote by $f_X$ the $n$-vector consisting of function values evaluated at points in $X$: $f_X := (f(x_1), \dots, f(x_n))^\top \in \mathbb{R}^n$. 
Similarly, for a function with two arguments $k: \mathcal{X} \times \mathcal{X} \to \mathbb{R}$,  and  $X:=(x_1,...,x_n) \in \X^n$ and $Z:=(z_1,...,z_m)\in \X^m$ with $n, m \in \mathbb{N}$, define $k_{XZ}\in \R^{n \times m}$ by ($k_{XZ})_{i,j} = k(x_i, z_j)$ for $i=1,\dots,n$, $j = 1,\dots, m$.
For a matrix $X:=(x_1,...,x_n) \in \X^n$, let $k_X(x) := (k(x_1,x), \dots, k(x_n, x))^\top \in \mathbb{R}^n$ for any $x \in \mathcal{X}$ and denote by $k_X(\cdot)$ the vector-valued function $x \in \mathcal{X} \mapsto k_X(x) \in \mathbb{R}^n$.

For a symmetric matrix $\Sigma$, denote by $\Sigma \succ 0$ and $\Sigma \succeq 0$ that $\Sigma$ is positive definite and positive semi-definite, respectively.
  For $\mu \in \mathbb{R}^n$ and $\Sigma \in \mathbb{R}^{n \times n}$ with $\Sigma \succeq 0$, denote by $\mathcal{N}(\mu, \Sigma)$ the Gaussian distribution on $\mathbb{R}^n$ with mean vector $\mu$ and covariance matrix $\Sigma$.
 Let $\mathcal{N}( \cdot \mid  \mu, \Sigma) $ be its probability density function.
For a matrix $A \in \mathbb{R}^{n \times n}$, ${\rm tr}(A)$ and $\det (A)$ denote its trace and determinant, respectively.

%We provide a quick introduction to the topic to set notation and define the necessary concepts that will be used in the paper. \paragraph{Notation}

% We assume an underlying probability space $(\Omega,\mathcal{A},\mathbb{P})$ on which all random mappings like $Y: \Omega \to \mathcal{Y}$ and $F: \Omega \to \mathcal{F}$ are defined. 

%Let $(\Omega,\mathcal{A},\mathbb{P})$ be a probability space on which all random variables are defined. 
Let $\mathcal{Y}$ be a measurable space, $Y \in \mathcal{Y}$ be a random variable, and $\P$ be a probability measure on $\mathcal{Y}$. 
We write $Y \sim \P$ to mean that $Y$ follows $\P$.
For a measurable function $g:\,\mathcal{Y} \to \mathbb{R}$, denote by $\int g(y)  d\P(y)$ its integral with respect to $\P$ and by $\mathbb{E}[g(Y)]$ the expectation of $g(Y)$.
When $\P$ has a density function $p: \mathcal{Y} \to \mathbb{R}$ with respect to a reference measure $\lambda$ on $\mathcal{Y}$ (e.g., the Lebesgue measure when $\mathcal{Y} = \mathbb{R}^n$), the integral is denoted by $\int g(y) p(y)  d\lambda(y)$.

\section{Kernels and Gaussian Processes for Regression} \label{sec:background}

This section briefly reviews reproducing kernel Hilbert spaces (RKHS) and Gaussian processes (GP).
In particular, we focus on the respective approaches to  {\em regression}, namely kernel ridge regression (KRR) and Gaussian process regression (GPR), which are reviewed in Sections \ref{sec:Background-KRR} and \ref{ch_GPR}, respectively. 
We review their connections in Section \ref{sec:connections_exact}.

We first describe the regression problem.
Let $\mathcal{X}$ be a non-empty set.
Suppose we are given $n \in \mathbbm{N}$ paired observations 
\begin{align*}
(x_1,y_1),\ldots,\allowbreak(x_n,y_n) \in \mathcal{X} \times \mathbbm{R}.    
\end{align*}
We assume that there exists a function $f_0: \mathcal{X} \to \mathbb{R}$ such that
\begin{equation} \label{eq:regression}
y_i = f_0(x_i) + \varepsilon_i, \quad i = 1,\dots,n.
\end{equation}
where $\varepsilon_1, \dots, \varepsilon_n \in \mathbb{R}$ are independent, zero-mean, noise variables.
This $f_0$ is called {\em regression function}.
The task of regression is to estimate (or learn) $f_0$ from the training data $(x_i,y_i)_{i=1}^n$.
We will often write $X := (x_1,\dots,x_n) \in \mathcal{X}^n$ and $y := (y_1, \dots, y_n) \in \mathbb{R}^n$.

%independent draws from an unknown distribution $\mathbbm{P}^{(X,Y)}$ on $\mathcal{X} \times \mathbb{R}$. 
%Define the {\em regression function} $f_0: \mathcal{X} \to \mathbb{R}$ as the conditional mean of the output given an input:
%$$f_0(x) := \mathbb{E}[Y | X = x], \quad x \in \mathcal{X},$$
%where $(X,Y) \sim \mathbbm{P}^{(X,Y)}$.

\subsection{Kernel Ridge Regression (KRR)} 
\label{sec:Background-KRR}

\subsubsection{Kernels and RKHSs}
We review here basics of kernels and RKHSs. 
For details, we refer to \citet{scholkopf2002learning,hofmann2008kernel,SteChr2008}.

Let $\mathcal{X}$ be an arbitrary non-empty set. 
A symmetric function $k: \mathcal{X}\times \mathcal{X} \to \mathbb{R}$ is called  {\em positive definite kernel}, if for every $n \in \mathbb{N}$ and every $X= (x_1,...,x_n) \in \mathcal{X}^n$, the induced kernel matrix $k_{XX} = (k(x_i, x_j))_{i,j=1}^n \in \mathbb{R}^{n \times n}$  is positive semi-definite.
We may simply call such $k$ {\em kernel}.
Examples of kernels on $\mathcal{X}\subset \mathbb{R}^d$ include  the Gaussian or square-exponential kernel 
$k(x,x') = \exp(- \frac{\|x-x'\|^2}{\gamma^2} )$  for $ \gamma > 0 $, and polynomial kernels $k(x, x') = ( x^\top x' + c )^m$ for $m \in \mathbb{N} $ and $c \geq 0$. 

%For constants $\alpha>0$ and $h>0$ the Mat\'ern kernel is defined for $ x,x'\in \mathcal{X}$ as
%$k_{\alpha,h}(x,x'):=  \frac{1}{2^{\alpha-1} \Gamma(\alpha)} ( \frac{2 \alpha \| x -x' \|}{h} )^\alpha K_\alpha ( \frac{2 \alpha \| x -x' \|}{h} )$,  where $\Gamma$ is the gamma function and $K_\alpha$ is the modified Bessel function of the second kind of order $\alpha$.

By the Moore-Aronszajn theorem \citep{aronszajn1950theory},  
  any positive definite kernel $k$ is uniquely associated with a Hilbert space $\mathcal{H}_k$
of real-valued functions  $f:\mathcal{X} \to \mathbb{R}$ called {\em reproducing kernel Hilbert space (RKHS)} that satisfies
	\begin{enumerate}
		\item $k(\cdot,x) \in \mathcal{H}_k$ for every $x \in \mathcal{X}$ and
		\item $f(x) = \langle f,k(\cdot,x)\rangle_{\mathcal{H}_k} \ $  for every $f \in \mathcal{H}_k$ and $x \in \mathcal{X}$\ ({\em reproducing property}),
	\end{enumerate}
	where $\left<f, g \right>_{\mathcal{H}_k}$ for $f, g \in \mathcal{H}_k $ denotes the inner product in $\mathcal{H}_k$, and  $k(\cdot, x)$ denotes the function of the first argument with $x$ being fixed: $x'\mapsto k(x', x)$, for $x'\in \mathcal{X}$.
   This $k(\cdot,x)$ is interpreted as a feature vector of $x$, and the kernel  $k(x, x') = \left<k(\cdot,x), k(\cdot,x')  \right>_{\mathcal{H}_k}$ as a similarity of inputs $x$ and $x'$.  
	The kernel $k$ is called {\em reproducing kernel} of $\mathcal{H}_k$.  
    
  The RKHS $\mathcal{H}_k$ consists of functions that can be approximated arbitrarily well by a linear combination of feature vectors of the form $\sum_{i=1}^n c_i k(\cdot, x_i)$ for some $c_1, \dots, c_n \in \mathbb{R}$ and $x_1, \dots, x_n \in \mathcal{X}$. In other words, the set of functions 	
   \begin{align*}
		&\mathcal{H}_0:=
		\Big\{ \sum_{i=1}^{n} \alpha_i k(\cdot, x_i) \mid \ n \in \mathbb{N}, \  \alpha_1,...,\alpha_n  \in
		\mathbb{R},\   x_1,...,x_n \in \mathcal{X}
	 \Big\}. 
	\end{align*}
    is a dense subset of $\mathcal{H}_k$ \citep[e.g.,][Section 2.2.1]{hofmann2008kernel}. 
	%For $f=\sum_{i=1}^{n} \alpha_i k(\cdot, x_i) \in \mathcal{H}_0$ and $g=\sum_{i=1}^{m} \beta_i k(\cdot, y_i) \in \mathcal{H}_0$, define an inner product 
	 %\begin{align*}
	 %    \langle f,g \rangle_{\mathcal{H}_0} &:= \sum_{i=1}^{n} \sum_{j=1}^{m} \alpha_i \beta_j k(x_i,y_j). %\big\langle \sum_{i=1}^{n} \alpha_i k(\cdot, x_i), \sum_{i=1}^{m} \beta_i k(\cdot, y_i) \big\rangle_{\mathcal{H}_k} 
	% \end{align*}
	 %Then we have
	 %$$ \mathcal{H}_k = \overline{\mathcal{H}_0},$$
	 %i.e., the RKHS $\mathcal{H}_k$ is the closure of $\mathcal{H}_0$ with respect to the norm induced by the inner product $\langle \cdot, \cdot \rangle _{\mathcal{H}_0}$.

%It is easy to check that the bilinear form is indeed an inner product on $\mathcal{H}_0$ and well-defined in the sense that it is independent of the representation one chooses for $f$ and $g$.

\subsubsection{Regression Approach}\label{sec:KRR}

Kernel ridge regression (KRR) is an approach to  regression using a kernel $k$ and its RKHS $\mathcal{H}_{k}$. 
The KRR estimator $\hat{f}$ of the regression function $f_0$ in \eqref{eq:regression} is defined as the solution of the following {\em regularized least-squares} problem 
	\begin{equation}\label{eq_krr}
		\hat{f}= \underset{f \in \mathcal{H}_k}{\text{argmin }} \frac{1}{n} \sum_{i=1}^{n}  (y_i-f(x_i))^2 + \lambda ||f||^2_{\mathcal{H}_k},
	\end{equation}	
	where $\lambda > 0$ is a regularization constant.
	%To gain an intuitive understanding for the necessity of the regularization, suppose that the function space $\mathcal{H}_k$ is potentially very large and therefore the unregularised ERM tends to interpolate the data points. 
Since the RKHS norm $\| f \|_{\mathcal{H}_k}$ quantifies the smoothness of $f$ and becomes large for non-smooth $f$,  the regularization term in (\ref{eq_krr}) makes the optimal solution $\hat{f}$ smooth; the constant $\lambda$ determines the strength of this smoothing effect. 

%mposes a certain degree of smoothness onto the solution $\hat{f}$ and hence prevents it from overfitting to the training data $(x_1,y_1), \dots, (x_n,y_n)$. 

	Let $X:=(x_1,...,x_n) \in \mathcal{X}^n$ and $y:=(y_1,...,y_n)^\top \in \mathbb{R}^n$. 
    By the representer theorem \citep{scholkopf2001generalized}, the solution $\hat{f}$ is given as a linear combination of $k(\cdot,x_1)$,  $\dots,k(\cdot,x_n)$. 
    Hence, the optimization problem \eqref{eq_krr} reduces to that of the coefficients of the linear combination.
As a result, the estimator is given by
\begin{align}\label{eq_form_2}
	\hat{f}=\sum_{i=1}^{n} \alpha_i  k(\cdot,x_i), \quad \text{with}\quad \alpha := (\alpha_1,...,\alpha_n)^\top := 	(k_{XX}+n\lambda I_n)^{-1} y \in \mathbb{R}^n,
	\end{align}
where $I_n \in \mathbb{R}^{n \times n}$ is the identity matrix.
%where  $\alpha :=(\alpha_1,...,\alpha_n)^\top \in \mathbb{R}^n$ is given by
	%\begin{equation*}
	%	\alpha = 	(k_{XX}+n\lambda I_n)^{-1} y,
%	\end{equation*}
The prediction of KRR at any $x \in \mathcal{X}$ is compactly written as 
\begin{equation} \label{eq:KRR-compact}
    	\hat{f}(x)  = k_X(x)^\top \alpha = k_X(x)^\top	(k_{XX}+n\lambda I_n)^{-1} y,
\end{equation}
	where $k_X(x) = (k(x_1,x), \dots, k(x_n, x))^\top \in \mathbb{R}^n$.	
%The elegance of KRR is that we arrive at a simple, closed form expression for the infinite-dimensional optimisation problem. 
%Unfortunately this comes at a cost: The inversion of the kernel matrix $k_{XX}$ scales cubically in the number of data points. To expand the applicability of kernel-methods to larger data-sets approximation procedures are deployed. We review the common Nystr\"om approximation in the next section.

\subsection{Gaussian Process Regression (GPR)}\label{ch_GPR}

\subsubsection{Gaussian Processes}
Gaussian processes (GPs) are one of the main workhorses of Bayesian nonparametric statistics and machine learning, as they can be used to place a prior distribution over functions.
See \citet{RasmussenWilliams} for more details.

Let $\mathcal{X}$ be a non-empty set, $m : \mathcal{X} 
\to \mathbb{R}$ a function and $k: \mathcal{X} \times \mathcal{X} \to \mathbb{R}$ a positive definite kernel. 
A random function $F: \mathcal{X} \to \mathbb{R}$ is called {\em Gaussian process (GP)} with mean function $m$ and covariance kernel $k$, if for all $n \in \N$ and all $X=(x_1,..., x_n) \in \X^n$, the random vector $F_X:= (F(x_1), \dots, F(x_n) )^\top \in \R^n$ satisfies $F_X \sim \mathcal{N}(m_X, k_{XX})$,
i.e., $F_X$ follows the Gaussian distribution with mean vector $m_X = (m(x_1), \dots, m(x_n))^\top \in \mathbb{R}^n$ and covariance matrix $k_{XX} = (k(x_i,x_j))_{i,j=1}^n \in \mathbb{R}^{n\times n}$.
In this case, we write $F \sim GP(m, k)$.

%By definition, we have 
%\begin{align*}
   % m(x) &= \mathbb{E} \left[ F(x) \right], \quad x \in \X, \\
   % k(x,x') &= \mathbb{E} \left[(F(x) - m(x)) (F(x') - m(x')) \right], \quad x, x' \in \mathcal{X} 
%\end{align*}

%A measurable map $F: \Omega \times \mathcal{X} \to \mathbb{R}$ is called a {\em Gaussian process (GP)}, if for all $n \in \N$ and all $X=(x_1,..., x_n) \in \X^n$, the random vector $F_X:= (F): \Omega \to \R^n$ is multivariate Gaussian.
%\footnote{To use the capital $X$ for the (deterministic) vector $X \in \mathcal{X}^n$ will be the only exception from the rule to use capital letters for random elements and small letters for a concrete sample.}

%We can associate mean and covariance functions to a given GP, by defining for $x,x' \in \X$
%\begin{align*}
%	m(x) &:= \mathbb{E}\big(F(\cdot,x)\big), \ \ \  \\ 
%	k(x,x') &:=\mathbb{C}\big(F(\cdot,x),F(\cdot,x')\big) .
%\end{align*} 
%By definition of GPs,  $F_X \sim \mathcal{N}(m_X,k_{XX})$ for every finite $X \in \X^n$, where $m_X:=\big(m(x)\big)_{x \in X}$. 

For any function $m: \X \to \mathbb{R}$ and kernel $k: \X \times \X \to \mathbb{R}$, there exists a GP whose mean function is $m$ and covariance function is $k$. %\footnote{This is a consequence of the Kolmogorov consistency theorem \cite[see e.g.][Chapter 2.4]{tao2011introduction}.} 
Therefore, by choosing $m$ and $k$, one can implicitly define the corresponding GP,  $F \sim GP(m,k)$.
This is how a GP is used to define a prior distribution in Bayesian nonparametrics.

%It is easy to see, that $k$ is indeed a kernel, i.e. the associated kernel matrices are symmetric and positive semi-definite. On the other hand, for a given kernel $k$ and an arbitrary function $m: \X \to \R$, a GP can be found, whose associated mean and covariance functions are $m$ and $k$ respectively. This is a consequence of the Kolmogorov consistency theorem \cite[see e.g.][Chapter 2.4]{tao2011introduction}. We use the notation GP($m$,$k$) to denote the measure corresponding to a GP for an arbitrary function $m$ and arbitrary kernel $k$.

%GPs are usually perceived as `random functions', which stems from the fact that $x \in \mathcal{X} \mapsto F(\omega,x)$ for fixed $\omega \in \Omega$ is a function mapping from $\mathcal{X}$ to $\R$. In that sense our prior knowledge is encoded in the mean function $m$ and the kernel $k$, which describe the average function and the covariance between function values.

\subsubsection{Regression Approach}\label{sec:regression_approach}
{\em Gaussian process regression (GPR)} is a Bayesian nonparametric approach to the regression problem. 
In GPR, the regression function $f_0$ in Eq.~\eqref{eq:regression} is the quantity of interest and modeled as a random function $F$. 
The prior distribution is given by a GP 
\begin{align} \label{eq:GP-prior}
    F \sim GP(m,k),
\end{align}
where the mean function $m$ and covariance function $k$ are chosen to encode \textcolor{black}{one's prior knowledge/assumption about the regression function $f_0$.}
The observation model of $F$ for the observations $y = (y_1, \dots, y_n)^\top$ is given by
\begin{align} \label{eq:likelihood}
y_i = F(x_i) + \varepsilon_i, \quad i = 1,\dots, n,
\end{align} 
where  $\varepsilon_i \sim \mathcal{N} (0, \sigma^2)$ is an independent Gaussian noise with variance $\sigma^2 > 0$.
%\footnote{In GPR, the noise assumption can be weaker, e.g., $\varepsilon_1, \dots, \varepsilon_n$ can be dependent and/or their variances can be different. In this paper, we consider this simplest noise model to investigate connections to the KRR.} 

By Bayes' rule, the posterior distribution of $F$ given $y$, under the prior \eqref{eq:GP-prior}, is given by another GP  \citep[e.g.,][]{RasmussenWilliams}:
\begin{align*}
    F \mid y \sim GP(\bar{m},\bar{k}),
\end{align*}
where $\bar{k}: \mathcal{X}\times\mathcal{X} \to \mathbb{R}$ and $\bar{m}: \mathcal{X} \to \mathbb{R}$ are defined as	
\begin{align}
		\bar{m}(x)&:= m(x) + k_{X}(x)^\top(k_{XX}+\sigma^2 I_n )^{-1} (y-m_X), \label{eq_bar_m} \\
		\bar{k}(x,x')&:= k(x,x') - k_{X}(x)^\top(k_{XX}+\sigma^2 I_n )^{-1} k_{X}(x') \label{eq_bar_k},
\end{align}
where $k_{X}(x) = (k(x_1,x), \dots, k(x_n, x))^\top \in \mathbb{R}^n$.
We call $GP(\bar{m},\bar{k})$ the {\em posterior GP}, $\bar{m}$ the {\em posterior mean function} and $\bar{k}$ the {\em posterior covariance function}.
We use the following notation for the probability measure of the posterior GP:  
	\begin{equation} \label{eq:GP-posterior}
		 \P^{F|y}  := GP(\bar{m},\bar{k}).
	\end{equation}
The posterior mean  $\bar{m}(x)$ serves as a predictor of the ground-truth function value $f_0(x)$, while the posterior variance $ \bar{k}(x,x)$ quantifies its uncertainty.

\subsection{Connections between the Exact Solutions for KRR and GPR}\label{sec:connections_exact}

\textcolor{black}{ 
It is well-known that there are connections between KRR and GPR \citep{kimeldorf1970correspondence,kanagawa2018gaussian}, which are briefly summarized here.}
%There is a well-known equivalence between the KRR estimator and the posterior mean function of GPR \citep{kimeldorf1970correspondence,kanagawa2018gaussian}. 

\textcolor{black}{
First, there is an equivalence between the predictors of KRR and GPR.
Suppose that the same kernel $k$ is used in KRR and GPR, and that
 the prior mean function of GPR is zero, $m(x) = 0$. Then the posterior mean function $\bar{m}(x)$ of GPR in \eqref{eq_bar_m} is identical to the KRR estimator  $\hat{f}(x)$ in \eqref{eq:KRR-compact}, provided that the regularization constant $\lambda$ in KRR satisfies $\sigma^2 = n \lambda$.  
 This equivalence provides a Bayesian interpretation of the KRR estimator, and a least-squares interpretation of the GPR posterior mean function. }

\textcolor{black}{
Second,  the posterior variance $\bar{k}(x,x)$ of GPR can be interpreted as a {\em worst case error} of KRR predictions in the RKHS of a certain augmented kernel defined from the GP covariance kernel $k$ and the noise variance $\sigma^2$ \citep[Section 3.4]{kanagawa2018gaussian}.
This interpretation enables discussing how the RKHS is related to the uncertainty estimates of GPR. 
Further connections between the two approaches can be found in \citet{kanagawa2018gaussian}.}

\textcolor{black}{
This paper investigates whether these parallels between KRR and GPR extend to their {\em sparse approximations}, which are reviewed in the next section.}

%which have been developed (largely independently within the two research communities) to deal with the unfavourable computational properties of KRR and GPR. 

{\color{black}

}

\section{Sparse Approximation Methods}
\label{sec:sparse-approx-methods}

\textcolor{black}{
This section reviews sparse approximation methods for KRR and GPR, specifically the {\em Nyst\"om method} in Section \ref{sec:Nystrom} and  {\em Sparse Variational Gaussian Processes (SVGP)} in Section \ref{sec:SVGP-GP}. 
While closed-form expressions are available for KRR and GPR, their computations require the $O(n^3)$ complexity for training data size $n$, which is challenging when $n$ is large. 
A sparse approximation method reduces this computational complexity by approximating the exact solution based on a smaller set of input points $z_1, \dots, z_m \in \X$, where $m  < n$. 
The Nystr\"om and SVGP approximations are commonly used sparse approximation methods for kernel methods and GPs, respectively. We review them here as a preliminary  for our investigation of their connections in the subsequent sections. 
}

%The elegance of KRR and GPR is that closed form expressions are respectively available for the solution of optimization or Bayesian inference in potentially infinite dimensional function spaces. Unfortunately, this comes with high computational costs, since both methods involve the inversion of a regularized kernel matrix, which leads to the computational complexity of $\mathcal{O}(n^3)$, where $n$ is the size of training data $(x_i, y_i)_{i=1}^n$.  Both kernel and GP communities have been developing a variety of approximation methods for making their respective approaches scalable.  One of the most successful approaches is {\em sparse approximation}, which approximates the solution of interest using a smaller set of input points $z_1, \dots, z_m \in \X$, where $m \in \mathbb{N}$ may be much smaller than the original data size $n$. We investigate here connections between the sparse approximation methods for KRR and GPR. Specifically, we focus on the connections between the {\em Nystr\"om approximation} and the {\em Sparse Variational Gaussian Process (SVGP) approximation}, which are respectively popular sparse approximation methods for the kernel and GP-based approaches.

\subsection{Nystr\"om Approximation}\label{sec:Nystrom}
The Nystr\"om method was first proposed by \citet{williams2001using} for scaling up kernel-based learning algorithms.
It has been successfully used in a variety of applications including manifold leaning \citep{talwalkar2008large,talwalkar2013large}, computer vision \citep{fowlkes2004spectral,belabbas2009spectral}, and approximate sampling \citep{affandi2013nystrom}, to name a few. 
Recent studies make use of the Nystr\"om method to enable KRR to handle millions to billions of data points \citep[e.g.,][]{rudi2017falkon,Meanti_NeurIPS2020_kernel}.

We describe here the use of the Nystr\"om approximation in KRR. 
In particular, we consider a popular version classically known as the {\em subset of regressors} \citep[Chapter 7]{wahba1990spline}, which has been widely used both in practice and theory \citep[e.g.,][]{smola2000sparse,rudi2015less,rudi2017falkon,Meanti_NeurIPS2020_kernel}.   
As before, let $(x_i, y_i)_{i=1}^n \subset \mathcal{X} \times \mathbb{R}$ be training data, and let $X:=(x_1,...,x_n) \in \mathcal{X}^n$ and $y:=(y_1,...,y_n)^\top \in \mathbb{R}^n$.

For $m \in \mathbb{N}$, let $z_1, \dots, z_m \in \X$ be a set of input points based on which we approximate the KRR solution. 
These points $z_1, \dots, z_m$ are usually a subset of training input points $x_1, \dots, x_n$ in the kernel literature, but we allow for $z_1, \dots, z_m$ to be generic points in $\X$ for a later comparison with the GP counterpart.
Write $Z = (z_1, \dots, z_m) \in \mathcal{X}^m$.
Suppose that the kernel matrix $k_{ZZ} = (k(z_i, z_j) )_{i,j =1}^m \in \mathbb{R}^{m \times m}$ is invertible.

Let $M \subset \mathcal{H}_k$ be the finite dimensional subspace spanned by $k(\cdot,z_1), \dots, k(\cdot,z_m)$:
\begin{align}
  M:&=  {\rm span} (k(\cdot, z_1), \dots, k(\cdot, z_m)) :=\left\{ \sum_{j=1}^m \alpha_j k(\cdot,z_j) \mid \alpha_1 ,...,\alpha_m \in \R  \right\} \label{eq_span_M}.
\end{align}
We replace the hypothesis space $\mathcal{H}_k$ in the KRR objective function \eqref{eq_krr} by this subspace $M$, and define its solution $\bar{f}$ as the Nystr\"om approximation of the KRR solution $\hat{f}$:
	\begin{equation} \label{eq:Nystrom-opt}
	\bar{f} :=	\underset{f \in M}{\argmin } \ \frac{1}{n} \sum_{i=1}^{n} \big(y_i - f(x_i) \big)^2 + \lambda \|f \|^2_{\mathcal{H}_k}.
	\end{equation}
In other words, we approximately solve the minimization problem of KRR by searching for the solution  of the form
\begin{equation*}
	f = \sum_{i=1}^{m} \beta_i k(\cdot, z_i) = k_Z(\cdot)^\top \beta
\end{equation*}
for some coefficients $\beta := (\beta_1,\dots, \beta_m)^\top \in \R^m$, where $k_Z(\cdot) := (k(\cdot, z_1), \dots, k(\cdot, z_m))^\top$. 
Inserting this expression in \eqref{eq:Nystrom-opt}, the optimization problem now becomes
\begin{equation*}
\min_{ \beta \in \R^m } \frac{1}{n} \| y-k_{XZ}\beta \|^2 + \lambda \beta^\top k_{ZZ} \beta,
\end{equation*}
where $k_{XZ} \in \mathbb{R}^{n \times m}$ with $(k_{XZ})_{i,j} = k(x_i, z_j)$ and $k_{ZZ} \in \mathbb{R}^{m \times m}$ with $(k_{ZZ})_{i,j} = k(z_i, z_j)$.
Taking the first order derivative with respect to $\beta$ leads to the condition 
\begin{equation*}
-\frac{2}{n} k_{ZX}y +  \frac{2}{n} k_{ZX} k_{XZ} \beta + 2 \lambda k_{ZZ} \beta 	= 0,
\end{equation*}
which is satisfied for 
\begin{equation*}
\beta = \big(k_{ZX} k_{XZ} + n \lambda k_{ZZ} \big)^{-1} k_{ZX} y .
\end{equation*}
This leads to the following expression of the Nystr\"om approximation:
\begin{equation} \label{eq_nystroem}
	\bar{f}(x) = k_{Z}(x)^\top (n\lambda k_{ZZ} + k_{ZX} k_{XZ})^{-1} k_{ZX} y.
\end{equation}
This approximation can be computed with the complexity of  $\mathcal{O}(nm^2+m^3)$  instead of $\mathcal{O}(n^3)$, since the inversion of a $n\times n$ matrix is replaced by that of a $m \times m$ matrix. \textcolor{black}{(The complexity $O(nm^2)$ is that of  the matrix multiplication $k_{ZX} k_{XZ}$, which is the dominating part as $m < n$.)} 
This grants significant computational gains, if $m$ is much smaller than $n$ and hence allows KRR to be applied to large data sets. 
Of course, how to choose $m$ and the input points $z_1, \dots, z_m$ depends not only on the computational budget but also on how accurately $\bar{f}$ approximates the KRR solution $\hat{f}$. We discuss this issue in Section \ref{chapter_convergence_bounds}.

%Our presentation is in the tradition of these approaches, but it should be noticed that it deviates from the typical perspectives insofar as that a strict Hilbert space view is taken.  The results presented in this Section are proven independently and from first principles, and are included as they elucidate the important properties of the two approaches which lead to the connections we later establish. However, we believe that they are to an extent \emph{folklore} results, and that many experts will be aware of them.

\subsection{Sparse Variational Gaussian Processes}  \label{sec:SVGP-GP}

We review here the Sparse Variational Gaussian Process (SVGP) approach  by \cite{titsias2009variational} based on a measure-theoretic formulation\footnote{\textcolor{black}{The main reason for presenting in the measure-theoretic formulation is to treat the variational mean $m^\nu(x)$ and covariance  $k^\nu(x,x')$ introduced below as {\em functions}; this is needed for establishing the equivalences and connections to the RKHS counterparts, which are also functions.}} suggested by \cite{matthews2016sparse}.
There have been many works on sparse approximations for scaling up GP-based methods.
In a nutshell, there are two common approaches: either the generative model is approximated and inference is performed exactly \citep{seeger2003fast,snelson2006sparse, snelson2007local} or the generative model is left unaltered and inference is done approximately \citep{csato2002sparse,titsias2009variational}. 
%The latter approach is typically perceived to be better justified, as the likelihood should not depend on the data size in a Bayesian framework.
%According to \cite{hensman_blog} \moto{This blog no longer exists; we need find an alternative}, the variational approach is philosophically justified, works well in practice and has therefore evolved to become the dominant approximation framework for inference in models related to Gaussian processes.
In this work, we mainly focus on the SVGP approximation by \citet{titsias2009variational}, which is the latter approach. 
%but we also discuss the {\em Deterministic Training Conditional (DTC)} approximation \citep{seeger2003fast}, one of the former approaches, as this will provide us a more insight about the SVGP approximation.
We refer to \citet{bauer2016understanding} and \citet[Part I]{bauer2020advances} for a systematic comparison of the two approaches.

Since we focus on the basic framework of \cite{titsias2009variational} and its comparison to the kernel counterpart in a regression setting, we do not discuss sparse variational GP approaches to the classification problem \citep{hensman2015scalable} and other (more recent) developments \cite[e.g.,][]{Hensman2015MCMC,hensman2017variational,dutordoir20-sparse,adam2020doubly,shi2019sparse,Rossi21sparse,tran21sparse}.
%We focus on the regression problem, and therefore omit classification \citep{hensman2015scalable} and other (more recent) developments in variational approaches to Gaussian process approximation \citep{hensman2017variational}.
See e.g.,~\cite{leibfried2020tutorial} for an overview over variational GP approaches.

We first recall the setting of GPR using a measure-theoretic notation. 
As before, let $(x_i, y_i)_{i=1}^n \subset \mathcal{X} \times \mathbb{R}$ be training data and let $X = (x_1,\dots,x_n) \in \mathcal{X}^n$ and $y = (y_1,\dots,y_n)^\top \in \mathbb{R}^n$.
For simplicity, we assume the zero prior mean function, $m(x) = 0$.
We denote by $\P$  the probability measure of a Gaussian process $F \sim GP(0,k)$. 
For any finite set of points $D := (d_1, \dots, d_\ell) \in \mathcal{X}^\ell$ with $\ell \in \N$, let $\P_D$ be the corresponding distribution of $F_D := (F(d_1), \dots, F(d_\ell) )^\top$ on $\mathbb{R}^\ell$, which is $\P_D = \mathcal{N}(0,k_{DD})$ by definition.

\subsubsection{Variational Family}\label{subsec:var_family}
We first introduce a variational family of probability measures of functions on $\mathcal{X}$, from which we search for a computationally tractable approximation of the GP posterior $\P^{F|y} =  GP(\bar{m}, \bar{k})$ in \eqref{eq:GP-posterior}.
Let $m \in \mathbb{N}$ be fixed, and $\Gamma$ be a set of {\em variational parameters} defined by  
\begin{align*}
\Gamma := \{  \nu:= (Z, \mu, \Sigma) \mid ~ & Z := (z_1, \dots, z_m) \in \mathcal{X}^m, k_{ZZ}  \text{ is invertible, }\\ 
& \mu \in \mathbb{R}^m, \Sigma \in \mathbb{R}^{m \times m}_{\succ 0}  \}    
\end{align*}
where $\R^{m \times m}_{\succ 0}$ stands for symmetric and positive definite matrices in $\R^{m \times m}$. 
The points $Z = (z_1, \dots, z_m)$ are the so-called {\em inducing inputs}, based on which we approximate the posterior GP.
On the other hand, $\mu$ and $\Sigma$ are parameters for the distribution of function values at $z_1, \dots, z_m$.   

We then define a variational family 
%\begin{equation*}
    $\mathcal{Q}_\Gamma := \{ \Q^\nu \ | \ \nu \in \Gamma \}$
%\end{equation*}
 as a set of Gaussian processes parametrized by the triplet\footnote{\textcolor{black}{Note that we assume that the kernel $k$ is fixed, and hence the variational family is parametrized by this triplet. If we consider the learning of the parameters of the kernel $k$ as well, then they may be included as parameters of the variational family.}} $\nu = (Z, \mu, \Sigma)$ defined as follows:
\begin{align}\label{eq_VGP_par_fam}
     \Q^\nu &:= GP(m^\nu, k^\nu), \\
     m^\nu(x) &:= k_{Z}(x)^\top k_{ZZ}^{-1} \mu, \label{eq:mean-variational} \\
     k^\nu(x,x') &:=  k(x,x') - k_{Z}(x)^\top k_{ZZ}^{-1} k_{Z}(x')    + k_{Z}(x)^\top k_{ZZ}^{-1} \Sigma k_{ZZ}^{-1} k_{Z}(x'). \label{eq:cov-variational}
\end{align}
Each variational distribution \eqref{eq_VGP_par_fam} is defined so as to satisfy the following properties, where $F^\nu \sim GP(m^\nu, k^\nu)$ denotes the corresponding GP sample function:
\begin{enumerate}
    \item The function values\footnote{Note that $F_Z^\nu$ is usually called {\em inducing variables} and is denoted with symbol ${\bf u}$ in the literature.} $F^\nu_Z := (F^\nu(z_1), \dots, F^\nu(z_m))^\top \in \mathbb{R}^m$ at the inducing inputs $z_1,\dots,z_m$ follow the Gaussian distribution with mean vector $\mu \in \mathbb{R}^m$ and covariance matrix $\Sigma \in \mathbb{R}^{m \times m}_{\succ 0}$, i.e., $F^\nu_Z \sim \mathcal{N}(\mu, \Sigma)$.
    We denote by $\Q^\nu_Z$ by the distribution of $F^\nu_Z$, i.e., $\Q^\nu_Z = \mathcal{N}(\mu, \Sigma)$.
    \item The conditional distribution of the process $F^\nu$  given $(z_i, F^\nu(z_i))_{i=1}^m$ is identical to the conditional distribution of $F \sim GP(m, k)$  given $(z_i, F(z_i))_{i=1}^m$:
    \begin{equation} \label{eq:Titsias-cond}
            F^\nu  \mid (z_i, F^\nu(z_i))_{i=1}^m \ \stackrel{d}{=} \ F   \mid (z_i, F(z_i))_{i=1}^m.
    \end{equation}
\end{enumerate}

\subsubsection{Evidence Lower Bound and Optimal Variational Parameters}\label{sec:ELBO_opt_para}

The aim of variational inference is to obtain a distribution $\Q^{\nu^*}$ from the variational family $\mathcal{Q}_\Gamma$ that best approximates the posterior measure $\P^{F|y}$ in terms of the Kullback-Leibler (KL) divergence, without explicitly computing the posterior. 
That is, we want to compute $\nu^* \in \Gamma$ such that
\begin{equation}
    \nu^* \in \arg\min_{\nu \in \Gamma} \  KL( \Q^\nu \ \| \ \P^{F|y}).
\end{equation}
where $KL( \Q^\nu \ \| \ \P^{F|y})$ is the KL divergence between $\Q^\nu$ and $\P^{F|y}$ defined by
$$
KL(\Q^\nu\ \| \ \P^{F|y}) := \int  \log \left( \frac{d\Q^\nu}{d\P^{F|y}} (f) \right) d\Q^\nu(f).  
$$
with $\frac{d\Q^\nu}{d\P^{F|y}}$ being the Radon-Nikodym derivative of $\Q_\nu$ with respect to $\P^{F|y}$, which exists by the construction of $\Q^\nu$ \citep[Section 3.3]{matthews2016sparse}.

%In order for the variational approximation to proceed, we need to be able to evaluate the Kullback-Leibler divergence between $\Q^{F|\nu}$ and $\P^{F|y}$, which is in general a  difficult task. However, it is key to notice that $\Q^{F|\nu}$ is absolutely continuous with respect to prior $\P^{F}$ with the following Radon-Nikodym density
%\begin{align}\label{eq_IPA}
%    \frac{d \Q^{F|\nu}}{d \P^{F}}(f) = \frac{d \Q^{F_Z|\nu}}{d \P^{F_Z}} \big(\pi_Z(f)\big), \ f \in \R^{\X}.
%\end{align}
%Equation \eqref{eq_IPA} was the starting point of the analysis in \cite{matthews2016sparse}. We took a different path by deriving the form of $\Q^{F|\nu}$ through the classical \cite{titsias2009variational} argument and showing in appendix \ref{appendix_B} that this implies \eqref{eq_IPA}.  By combining Bayes' rule and \eqref{eq_IPA}, 
\citet[Eq.~15]{matthews2016sparse} show that this KL divergence can be written as
\begin{align}
        &KL \big(\Q^{\nu} \ \| \ \P^{F|y} \big) 
        = \log p(y) - \mathcal{L}(\nu), \label{eq_VGPR_ELBO}
\end{align}
where $p(y)$ is the marginal likelihood, or the {\em Evidence}, of observing $y = (y_1,\dots,y_n)^\top$ under the prior $F \sim GP(m,k)$ and the observation model $y_i \sim \mathcal{N}(F(x_i), \sigma^2)$, while $\mathcal{L}(\nu)$ is the {\em Evidence Lower Bound (ELBO)} defined as
    \begin{align} \label{eq_def_L}
        \mathcal{L}(\nu) := - KL \big( \Q_Z^\nu \  \| \  \P_Z  \big) + \E_{F^\nu \sim \Q^\nu} \big[ \log p(y|F^\nu_X) \big],
    \end{align}
where
\begin{itemize}
    \item  $KL \big( \Q_Z^\nu \  \| \  \P_Z \big)$ is the standard KL divergence between $\Q_Z^\nu  = \mathcal{N} (\mu, \Sigma)$, which is the marginal distribution of $F^\nu_Z \in \mathbb{R}^m$ of the parameterized Gaussian process $F^\nu \sim \Q^\nu = GP(m^\nu, k^\nu)$ in \eqref{eq_VGP_par_fam}, and $\P_Z = \mathcal{N}(0, k_{ZZ})$, which is the marginal distribution of $F_Z \in \mathbb{R}^m$ of the prior Gaussian process $F \sim \P = GP(0,k)$:
\begin{align}
& KL \big( \Q_Z^\nu \  \| \  \P_Z  \big) = \int_{\mathbb{R}^m} \log \left( \frac{d\Q^\nu_Z}{d\P_Z} (f_Z) \right) d\Q^\nu_Z (f_Z) \nonumber \\    
& = \frac{1}{2} \left( {\rm tr} (k_{ZZ}^{-1} \Sigma) + \mu^\top k_{ZZ}^{-1} \mu - m + \log \left(  \frac{ {\rm det} (k_{ZZ})  }{ {\rm det} (\Sigma)  } \right) \right),  \label{eq:KL-Gaussians}
\end{align}
where the last identity is the well-known expression of the KL divergence between multivariate Gaussian densities (see, e.g., Appendix A.5 of \citealt{RasmussenWilliams}).

\item $\E_{F^\nu \sim \Q^\nu}[ \log p(y|F^\nu_X)]$ is the {\em marginal log likelihood} of observing $y = (y_1, \dots, y_n)^\top$ under the observation model $y_i = F^\nu(x_i) + \varepsilon_i$ with independent $\varepsilon_i \sim \mathcal{N}(0, \sigma^2)$ and the parametrized process $F^\nu \sim \Q^\nu$:
\begin{align}
    & \E_{F^\nu \sim \Q^\nu}[ \log p(y|F^\nu_X)]  = - n \log ( \sqrt{2\pi \sigma^2} ) - \E_{F^\nu \sim \Q^\nu}\Big[ \sum_{i=1}^n  \frac{ (y_i - F^\nu (x_i) )^2 }{2\sigma^2} \Big] \label{eq:ELBO-fit-term}
\end{align}
where $p(y|F^\nu_X) := \mathcal{N}(y ; F_X^\nu, \sigma^2 I_n )$ is the Gausian density of the likelihood function \eqref{eq:likelihood}.     
\end{itemize}

\begin{comment}
One can show that the ELBO $\mathcal{L}(\nu)$ can be computed analytically as
    \begin{align*}
        \mathcal{L}(\nu) &= -\frac{1}{2} \Big[ \mathrm{tr}\big(k_{ZZ}^{-1} \Sigma \big) + \mu^\top k_{ZZ}^{-1} \mu - m + \log \Big( \frac{\det k_{ZZ}}{\det \Sigma} \Big) \Big] \\
        &+ \log \mathcal{N}(y \mid \mu_X, \sigma^2 I_n ) - \frac{1}{2 \sigma^2} \mathrm{tr}(\Sigma_X),
    \end{align*}
where $\mu^\nu_X := \mathbb{E}_{Q_X^\nu}[F^\nu_X]$ and $\Sigma^\nu_X := \mathbb{E}_{Q_X^\nu} [ (F^\nu_X - \mu^\nu_X) (F^\nu_X - \mu^\nu_X)^\top ] $. 
\end{comment}
%with $ \mu_X = \E_{\Q} [F_X] $ and $     \Sigma_X = \mathbb{C}_{\Q} [F_X]$ as defined in \eqref{eq_mu_D} and \eqref{eq_sigma_D}.

Since the marginal likelihood $p(y)$ under the original GP prior does not depend on the variational parameters $\nu$, the minimization of the KL divergence \eqref{eq_VGPR_ELBO} is equivalent to the maximization of the ELBO $\mathcal{L}(\nu)$ in \eqref{eq_def_L}. 
%This bound can be optimised by means of any available optimisation algorithm to determine the optimal variational parameter $\nu^*$. 
\citet[Eq.(10)]{titsias2009variational} show that,\footnote{See Appendix A of \citet{titsias2009techreport} or \citet[Section 2]{hensman2013gaussian} for the derivation. \textcolor{black}{We also show alternative derivations based on \citet{khan2021bayesian} in Appendix \ref{ap:alter_der}.} } for fixed inducing points $Z$, the optimal parameters $\mu^*$ and $\Sigma^*$ that maximize the ELBO are given analytically as
\begin{align}
	\mu^*&:= k_{ZZ} (\sigma^2 k_{ZZ} + k_{ZX}k_{XZ})^{-1} k_{ZX} y \label{eq:optimal-mu} \\
	\Sigma^*&:=k_{ZZ} ( k_{ZZ} + \sigma^{-2} k_{ZX}k_{XZ})^{-1}  k_{ZZ}  \label{eq:optimal-sigma}
\end{align}
and the resulting ELBO, denoted by $\mathcal{L}^*$, is 
\begin{align}
 \mathcal{L}^*  = & - \frac{1}{2} \log {\rm det} (q_{XX} + \sigma^2 I_n) - \frac{1}{2} y^\top (q_{XX} + \sigma^2 I)^{-1} y \nonumber \\
& - \frac{n}{2} \log 2 \pi - \frac{1}{2\sigma^2} {\rm tr} (k_{XX} - q_{XX}), \label{eq:ELBO-optimal}
\end{align}
where $q$ is the approximate kernel in \eqref{eq_approx_q}.
Inserting these expressions in the definition of the variational distribution \eqref{eq_VGP_par_fam}, the optimal variational approximation (for fixed inducing points $Z$) is given by $GP(m^*$, $k^*)$ with\footnote{\textcolor{black}{Note that these $m^*$ and $k^*$ are also equal to the predictive mean and covariance functions of the so-called Deterministic Training Conditional (DTC) approximation \citep{seeger2003fast} \citep[Section 5]{quinonero2005unifying}; see \citet[Sections 2.3 and 2.4]{bauer2020advances}. Therefore, RKHS interpretations for these functions also apply to the DTC approximation. However, the DTC's objective function for hyperparameter optimization is different from that of the SVGP, i.e., the ELBO in \eqref{eq:ELBO-optimal}. Therefore RKHS interpretations for the ELBO \eqref{eq:ELBO-optimal} is only valid for the SVGP.}}
\begin{align}
	m^*(x) &:= k_{Z}(x)^\top (\sigma^2 k_{ZZ} + k_{ZX} k_{XZ})^{-1} k_{ZX}^{} y \label{eq_optimal_m} \\
	k^*(x,x')&:= k(x,x') - k_Z(x)^\top k_{ZZ}^{-1} k_Z(x') \nonumber\\
	&+  k_Z(x)^\top  ( k_{ZZ} + \sigma^{-2} k_{ZX}k_{XZ})^{-1}  k_Z(x'). \label{eq_var_kernel}
\end{align}

The computational complexity of obtaining the mean function $m^*$ and the covariance function $k^*$ is $\mathcal{O}(nm^2 + m^3)$, which can be much smaller than the complexity $\mathcal{O}(n^3)$ of the exact posterior as long as the number of inducing points $m$ is much smaller than the training data size $n$.
%In Section \ref{chapter_convergence_bounds}, guidance on how to choose $m$ in relation to the data size $n$ is given.

%A derivation of this is also provided in appendix \ref{appendix_B}.

The ELBO  \eqref{eq:ELBO-optimal} with optimal $\mu^*$ and $\Sigma^*$ is a key quantity, as it can be used  i) as a criterion for optimizing the inducing inputs $z_1, \dots, z_m$ and ii) for theoretically analyzing the quality of variational approximation.
%In Section \ref{chapter_convergence_bounds}, we study its RKHS interpretation and its connection to the quality of the Nystr\"om approximation.

\textcolor{black}{
We have seen that the Nystr\"om  and SVGP  approximations are derived from quite different principles; the former approximately solves KRR by replacing the RKHS by its subspace spanned by the inducing points, while the latter variationally approximates the GP posterior using variational GPs defined with the inducing points. 
How are these two approximations  related? This is the question we  investigate in the rest of the paper.
}

\section{Connections between the SVGP and Nystr\"om Approximations}\label{chapter_large_scale_approx}

\textcolor{black}{
This section studies how the Nystr\"om and SVGP approximations are related. We first analyze the equivalence between the Nystr\"om KRR and the optimized mean function of the SVGP in Section  \ref{sec:eq_opt_mean_cov}. We then introduce orthogonal projections on the RKHS subspace in Section \ref{sec:charac_approx_kernel}, and investigate RKHS interpretations of the SVGP covariance function in Section \ref{sec:var_mean_cov}.
}

\subsection{Equivalence between the Nyst\"om KRR and the SVGP Posterior Mean}\label{sec:eq_opt_mean_cov}

\textcolor{black}{
We first study how the Nystr\"om KRR \eqref{eq_nystroem} is related to the optimized mean function \eqref{eq_optimal_m} of the SVGP. 
Their equivalence immediately follows from their forms, as summarized below.}

%The variational mean $m^{\nu}$ and $k^{\nu}$ have so far been analysed for arbitrary $\mu \in \R^m$ and $\Sigma \in \R^{m\times m}_{>0}$. As described in section \ref{sec:ELBO_opt_para} the optimal choices for $\mu$ and $\Sigma$ for fixed inducing points $Z$ can be found analytically which gives:
%\begin{align}
%	m^*(x) &:= k_{Z}(x)^\top (\sigma^2 k_{ZZ} + k_{ZX} k_{XZ})^{-1} k_{ZX}^{} y \label{eq:optimal_m}  \\
%	k^*(x,x')&:= k(x,x') - k_Z(x)^\top k_{ZZ}^{-1} k_Z(x') \label{eq:optimal_kernel}\\
%	&+  k_Z(x)^\top  ( k_{ZZ} + \sigma^{-2} k_{ZX}k_{XZ})^{-1}  k_Z(x') 
%\end{align}
%By comparing \eqref{eq:optimal_m} with the Nystr\"om approximation in \eqref{eq_nystroem} we can deduce the following result about the prediction functions in SVGP and Nyst\"om KRR. 

\begin{theorem} \label{theo:equivalence-pred}
  Let $k$ be a kernel, and suppose that data $(x_i, y_i)_{i=1}^n \subset \X \times \mathbb{R}$ are given. 
  Let $Z = (z_1, \dots, z_m) \in \mathcal{X}^m$ be fixed inducing inputs such that the kernel matrix $k_{ZZ} = (k(z_i, z_j))_{i,j = 1}^m \in \mathbb{R}^{m \times m}$ is invertible. 
\begin{itemize}
    \item Let $m^*$ be the optimized mean function \eqref{eq_optimal_m} of the SVGP approximation using $Z$ as inducing points for defining the variational family \eqref{eq_VGP_par_fam}.
    \item Let $\bar{f}$ be the Nystr\"om approximation \eqref{eq_nystroem} using $Z$ as landmark points for defining the subspace $M =  {\rm span}(k(\cdot, z_1), \dots, k(\cdot, z_m))$. 
\end{itemize}
 Then we have $m^* = \bar{f}$, provided $\sigma^2 = n \lambda$ for the noise variance $\sigma^2$ in \eqref{eq:likelihood} and the regularization constant $\lambda$ in \eqref {eq:Nystrom-opt}.
\end{theorem}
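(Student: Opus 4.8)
The plan is to simply compare the two closed-form expressions and observe that the hypothesis $\sigma^2 = n\lambda$ makes them literally identical. By \eqref{eq_optimal_m}, the optimized SVGP mean function is
$m^*(x) = k_Z(x)^\top \big(\sigma^2 k_{ZZ} + k_{ZX}k_{XZ}\big)^{-1} k_{ZX}\, y$ for every $x \in \X$, while by \eqref{eq_nystroem} the Nystr\"om approximation is
$\bar f(x) = k_Z(x)^\top \big(n\lambda\, k_{ZZ} + k_{ZX}k_{XZ}\big)^{-1} k_{ZX}\, y$. Substituting $\sigma^2 = n\lambda$ into the former yields exactly the latter, for all $x \in \X$, hence $m^* = \bar f$ as functions on $\X$.

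Before performing this substitution I would record that both expressions are well defined. Since $k_{ZZ}$ is a kernel matrix it is positive semi-definite, and being invertible by assumption it is in fact positive definite; moreover $k_{ZX}k_{XZ} = k_{ZX}k_{ZX}^\top \succeq 0$. Therefore $\sigma^2 k_{ZZ} + k_{ZX}k_{XZ} \succ 0$ whenever $\sigma^2 > 0$, and likewise $n\lambda\, k_{ZZ} + k_{ZX}k_{XZ} \succ 0$ whenever $\lambda > 0$; both matrices are thus invertible, so \eqref{eq_optimal_m} and \eqref{eq_nystroem} make sense under the stated assumptions.

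There is essentially no real obstacle here: the substance of the statement is already contained in the two derivations carried out earlier in the excerpt, namely the optimal variational parameters of \citet{titsias2009variational} recalled in Section \ref{sec:ELBO_opt_para}, and the subset-of-regressors solution derived in Section \ref{sec:Nystrom}. If one wished to avoid quoting \eqref{eq_optimal_m} directly, one could instead start from $m^*(x) = k_Z(x)^\top k_{ZZ}^{-1}\mu^*$ with $\mu^*$ as in \eqref{eq:optimal-mu} and simplify $k_{ZZ}^{-1}\mu^* = k_{ZZ}^{-1} k_{ZZ} (\sigma^2 k_{ZZ} + k_{ZX}k_{XZ})^{-1} k_{ZX} y = (\sigma^2 k_{ZZ} + k_{ZX}k_{XZ})^{-1} k_{ZX} y$, recovering the same expression; but the direct comparison above is the shortest route. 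This equivalence is the sparse analogue of the classical KRR--GPR correspondence recalled in Section \ref{sec:connections_exact}, again with the identification $\sigma^2 = n\lambda$.
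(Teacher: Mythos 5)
Your proposal is correct and matches the paper's own argument: the paper states that the equivalence ``immediately follows from their forms,'' i.e.\ the closed-form expressions \eqref{eq_optimal_m} and \eqref{eq_nystroem} coincide verbatim once $\sigma^2 = n\lambda$ is substituted. Your additional check that both matrices being inverted are positive definite is a harmless (and welcome) bit of extra care, but does not change the route.
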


\textcolor{black}{
The condition $\sigma^2 = n \lambda$ in Theorem \ref{theo:equivalence-pred} is the same as that required for the equivalence between KRR and GPR, as mentioned in Section \ref{sec:connections_exact}. }

%\subsection{Investigating the ELBO in SVGP}\label{sec:ELBO_SVGP}

\textcolor{black}{
Now the question is why the equivalence in Theorem \ref{theo:equivalence-pred} holds, even though the Nystr\"om and SVGP approximations are the solutions to seemingly unrelated optimization problems. 
To investigate this question, we closely inspect the ELBO  in \eqref{eq_def_L},   the objective function of the SVGP, as summarized in Theorem \ref{theo:ELBO-expression} below.
The proof can be found in Appendix \ref{sec:proof-elbo-interp}. }

%The variational ELBO defined in \eqref{eq_def_L} is the objective function used in SVGP inference. We will examine it more closely and observe that key quantities admit an RKHS interpretation.

\begin{theorem} \label{theo:ELBO-expression}
Let $\nu = (Z, \mu, \Sigma) \in \mathcal{X}^m \times \mathbb{R}^m \times \mathbb{R}^{m \times m}_{> 0}$ be such that the kernel matrix $k_{ZZ} \in \mathbb{R}^{m \times m}$ is invertible, and let $\mathcal L(\nu)$ be the ELBO in \eqref{eq_def_L}. 
Then we have
\begin{align}
 -2 \sigma^2  \mathcal L(\nu)  & =    \sum_{i=1}^n \left( y_i - k_{Z}(x_i)^\top k_{ZZ}^{-1}  \mu \right)^2 + \sigma^2  \mu k_{ZZ}^{-1} \mu \label{eq:kernel_inter_ap}  \\
& + \sum_{i=1}^n k_Z(x_i)^\top k_{ZZ}^{-1} \Sigma k_{ZZ}^{-1} k_{Z}(x_i) \label{eq:q_nu_term}  \\
& \quad + \sigma^2 \left( {\rm tr}(k_{ZZ}^{-1} \Sigma)   + \log \left( \frac{ {\rm det} (k_{ZZ}) }{ {\rm det} (\Sigma) }\right) - m \right) \label{eq:KL_term_ap} \\
&  + \sum_{i=1}^n \left(  k(x_i, x_i) - k_Z(x_i)^\top k_{ZZ}^{-1} k_Z(x_i) \right) \label{eq:residual_term_ap} . 
\end{align}

\begin{comment}
\begin{align}
-2 \sigma^2  \mathcal L(\nu)  & =    \sum_{i=1}^n \left( y_i - k_{Z}(x_i)^\top k_{ZZ}^{-1}  \mu \right)^2 + \sigma^2  \mu k_{ZZ}^{-1} \mu  \label{eq:RKHS_ELBO_1} \\
& +   \sum_{i=1}^n q^{\nu}(x_i,x_i) \label{eq:RKHS_ELBO_2}\\
& + \sigma^2 KL\big( \mathcal{N}(0,\Sigma) \| N(0,k_{ZZ}) \big) \label{eq:RKHS_ELBO_3} \\
& + \sum_{i=1}^n \| k(\cdot,x_i) - P_M\big( k(\cdot,x_i) \big) \|^2_{\mathcal{H}_k} \label{eq:RKHS_ELBO_4}
\end{align}
\end{comment}

\end{theorem}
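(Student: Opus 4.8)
The plan is to start from the definition of the ELBO in \eqref{eq_def_L}, namely $\mathcal{L}(\nu) = - KL(\Q^\nu_Z \| \P_Z) + \E_{F^\nu \sim \Q^\nu}[\log p(y | F^\nu_X)]$, multiply through by $-2\sigma^2$, and expand each of the two terms explicitly, then regroup the resulting pieces into the four displayed lines. There are no deep ideas here; the work is bookkeeping, but one has to be careful about which terms come from which source.

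First I would handle the expected log-likelihood term. Using \eqref{eq:ELBO-fit-term}, we have $\E_{F^\nu \sim \Q^\nu}[\log p(y|F^\nu_X)] = -n\log\sqrt{2\pi\sigma^2} - \frac{1}{2\sigma^2}\sum_{i=1}^n \E_{F^\nu}[(y_i - F^\nu(x_i))^2]$. The $n\log\sqrt{2\pi\sigma^2}$ piece does not appear on the right-hand side of the claimed identity, so I need to double-check the normalization: indeed the statement as written omits it, so either the paper is stating the identity up to that additive constant or, more likely, that constant is meant to be absorbed — I would flag this and proceed with the variance-dependent terms, which are what the four lines capture. For each $i$, $\E_{F^\nu}[(y_i - F^\nu(x_i))^2] = (y_i - m^\nu(x_i))^2 + \mathrm{Var}_{F^\nu}(F^\nu(x_i)) = (y_i - m^\nu(x_i))^2 + k^\nu(x_i,x_i)$. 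Now substitute $m^\nu(x_i) = k_Z(x_i)^\top k_{ZZ}^{-1}\mu$ from \eqref{eq:mean-variational}, which produces the first summand of \eqref{eq:kernel_inter_ap} after multiplying by $-2\sigma^2 \cdot (-\frac{1}{2\sigma^2}) = 1$. For the variance, substitute $k^\nu(x_i,x_i) = k(x_i,x_i) - k_Z(x_i)^\top k_{ZZ}^{-1} k_Z(x_i) + k_Z(x_i)^\top k_{ZZ}^{-1}\Sigma k_{ZZ}^{-1} k_Z(x_i)$ from \eqref{eq:cov-variational}; the last piece becomes exactly line \eqref{eq:q_nu_term}, and the first two pieces become exactly line \eqref{eq:residual_term_ap}.

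Next I would handle the KL term. From \eqref{eq:KL-Gaussians}, $KL(\Q^\nu_Z \| \P_Z) = \frac{1}{2}({\rm tr}(k_{ZZ}^{-1}\Sigma) + \mu^\top k_{ZZ}^{-1}\mu - m + \log(\det k_{ZZ}/\det\Sigma))$. Multiplying by $-2\sigma^2$ and flipping the sign in front (since the ELBO has $-KL$, and we are computing $-2\sigma^2\mathcal{L}$, the net coefficient on $KL$ is $+2\sigma^2$) gives $\sigma^2({\rm tr}(k_{ZZ}^{-1}\Sigma) + \mu^\top k_{ZZ}^{-1}\mu - m + \log(\det k_{ZZ}/\det\Sigma))$. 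The term $\sigma^2\mu^\top k_{ZZ}^{-1}\mu$ is precisely the second summand of \eqref{eq:kernel_inter_ap} (note the statement writes $\mu k_{ZZ}^{-1}\mu$, presumably a typo for $\mu^\top k_{ZZ}^{-1}\mu$), and the remaining three terms ${\rm tr}(k_{ZZ}^{-1}\Sigma) + \log(\det k_{ZZ}/\det\Sigma) - m$, scaled by $\sigma^2$, are exactly line \eqref{eq:KL_term_ap}. Assembling: line \eqref{eq:kernel_inter_ap} collects the two $\mu$-dependent quadratic terms (one from the fit term, one from the KL term); \eqref{eq:q_nu_term} and \eqref{eq:residual_term_ap} come from splitting $k^\nu(x_i,x_i)$; \eqref{eq:KL_term_ap} is the residual of the KL term.

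The main obstacle, such as it is, is purely a matter of care rather than difficulty: keeping track of the signs and the factor $-2\sigma^2$ consistently across the two terms, and verifying the decomposition of the variational variance $k^\nu(x_i,x_i)$ into the "$q^\nu$ part'' \eqref{eq:q_nu_term} and the "orthogonal residual'' part \eqref{eq:residual_term_ap} — the latter being $\sum_i \|k(\cdot,x_i) - P_M k(\cdot,x_i)\|_{\mathcal{H}_k}^2$ in the RKHS language hinted at in the commented-out alternative display, which follows from the reproducing property and the fact that $k_Z(x_i)^\top k_{ZZ}^{-1} k_Z(x_i) = \|P_M k(\cdot,x_i)\|_{\mathcal{H}_k}^2$. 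I would also want to reconcile the missing $-n\log\sqrt{2\pi\sigma^2}$ constant, either by noting it is dropped or by correcting the statement. Once those are pinned down, the identity is immediate by collecting terms.
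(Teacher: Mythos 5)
Your proposal is correct and follows essentially the same route as the paper's proof: expand the Gaussian KL term via \eqref{eq:KL-Gaussians}, expand the expected log-likelihood via a bias--variance decomposition, and regroup (the paper conditions on $F_Z^\nu$ first and then integrates over $\Q_Z^\nu$, whereas you use the marginal mean $m^\nu(x_i)$ and variance $k^\nu(x_i,x_i)$ directly, but these are the same computation by the law of total variance). Your flag about the normalization constant is also legitimate: the term $n\sigma^2\log(2\pi\sigma^2)$ coming from $-2\sigma^2\cdot\bigl(-n\log\sqrt{2\pi\sigma^2}\bigr)$ is indeed absent from the right-hand side of the stated identity, so the equality holds only up to an additive constant independent of $\nu$ --- a point the paper's own proof silently passes over and which is harmless for all downstream uses of the theorem.
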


 %The term \eqref{eq:q_nu_term} is equal to $\sum_{i=1}^n q^{\nu}(x_i,x_i)$ by definition of $q^{\nu}$ in \eqref{eq:def_q_nu}.   The term \eqref{eq:KL_term_ap} can easily be identified with $\sigma^2 KL \big( \mathcal{N}(0,\Sigma) \, \| \, \mathcal{N}(0,k_{ZZ}) \big)$. 

\textcolor{black}{
Theorem \ref{theo:ELBO-expression} disentangles the ELBO \eqref{eq_def_L} into i) the part \eqref{eq:kernel_inter_ap}  that depends only on the mean parameter $\mu$ and the inducing inputs $Z$, ii) the part \eqref{eq:q_nu_term} \eqref{eq:KL_term_ap} that depends only on the covariance parameter $\Sigma$ and $Z$, and iii) the part \eqref{eq:residual_term_ap} that depends only on $Z$.  
We focus here on the part i)  to understand the equivalence in Theorem \ref{theo:equivalence-pred}. The other parts are analyzed later. 
}

\begin{figure}
    \centering
    \includegraphics[width=0.7\linewidth]{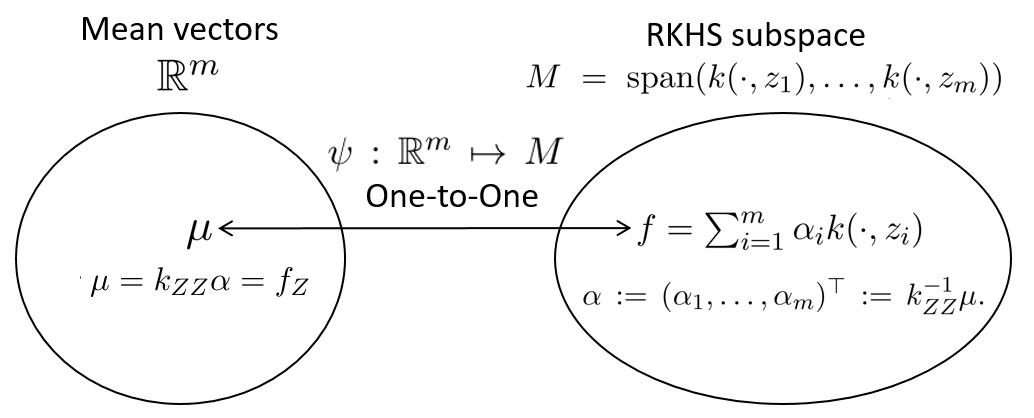}
    \caption{\textcolor{black}{Illustration of the relation between the SVGP mean vector $\mu \in \mathbb{R}^m$ and the corresponding function $f = \sum_{i=1}^m \alpha_i k(\cdot, z_i)$ in the RKHS subspace $M = {\rm span}( k(\cdot, z_1), \dots, k(\cdot, z_m) )$, where $\alpha := (\alpha_1, \dots, \alpha_m)^\top := k_{ZZ}^{-1} \mu$. } }
    \label{fig:mean-vec-subspace}
\end{figure}

\textcolor{black}{
As a preliminary, notice that the SVGP mean function  $m^{\nu} = k_Z(\cdot)^\top k_{ZZ}^{-1} \mu$  in \eqref{eq:mean-variational} lies in the subspace $M = {\rm span}( k(\cdot, z_1), \dots, k(\cdot, z_m) )$, because it can be written $m^\nu = \sum_{i=1}^m \alpha_i k(\cdot, z_i)$ for $\alpha := (\alpha_1, \dots, \alpha_m)^\top :=  k_{ZZ}^{-1} \mu$.  
Define then a map $\psi : \mathbb{R}^m \mapsto M$ that maps the mean vector $\mu \in \mathbb{R}^m$ to the mean function $m^\nu$:
\begin{equation} \label{eq:injective-map}
\psi(\mu) := k_Z(\cdot)^\top k_{ZZ}^{-1} \mu \in M, \quad \mu \in \mathbb{R}^m.
\end{equation}
This map is {\em one-to-one}, because  each $f \in M$ can be written $f = \sum_{i=1}^m \alpha_i k(\cdot,z_i)$ for uniquely associated $\alpha := (\alpha_1, \dots, \alpha_m)^\top \in \mathbb{R}^m$ and we have $f = \psi(\mu)$ for   $\mu = k_{ZZ} \alpha = f_Z \in \mathbb{R}^m$, which is the evaluations of $f$ at the inducing points $Z = (z_1, \dots, z_m)$ (see Figure \ref{fig:mean-vec-subspace} for an illustration.).
Therefore,   the inverse map $\psi^{-1}: M \mapsto \mathbb{R}^m$  is given by:
$$
\psi^{-1} (f) = f_Z \in \mathbb{R}^m, \quad f \in M. 
$$
}

\textcolor{black}{
With this preparation, and letting   $f := \psi(\mu) \in M$, the part \eqref{eq:kernel_inter_ap} can be written   
\begin{align}
 & \sum_{i=1}^n \left( y_i - k_{Z}(x_i)^\top k_{ZZ}^{-1}  \mu \right)^2 + \sigma^2  \mu k_{ZZ}^{-1} \mu  
 = \sum_{i=1}^n \left( y_i - f(x_i )   \right)^2 + \sigma^2   \|f\|_{\mathcal{H}_k}^2. \label{eq:SVGP-Nystorm-1205}
\end{align}
%where  $f : \mathcal{X} \mapsto \mathbb{R}$ is defined as
%$$
%f = \sum_{j=1}^m \alpha_j k(\cdot, z_j)  \in M = {\rm span}(k(\cdot,z_1), \dots, k(\cdot, z_m)). 
%$$ 
The right expression of \eqref{eq:SVGP-Nystorm-1205} is exactly the objective function \eqref{eq:Nystrom-opt}  of the Nystr\"om KRR  if $\sigma^2 = n \lambda$.
Hence, the negative ELBO as an objective function of $\mu \in \mathbb{R}^m$ is equivalent to the Nystr\"om KRR  objective as a function  $f = \psi(\mu) \in M$.  Consequently, they lead to the identical solution  in Theorem \ref{theo:equivalence-pred}.    
These findings are summarized below. 
}

%On other hand, given an arbitrary $f \in M$, there exist coefficients $\alpha = (\alpha_1, \dots, \alpha_m)^\top \in \mathbb{R}^m$ such that $f = \sum_{j=1}^m \alpha_m k(\cdot, z_j)$, and the expression in \eqref{eq:SVGP-Nystorm-1204} can be obtained from \eqref{eq:SVGP-Nystorm-1205} by the reparametrization $\mu = f_Z = k_{ZZ} \alpha$. 
%Hence,   the maximization of the ELBO \eqref{eq_def_L} with respect to the mean parameter $\mu \in \mathbb{R}^m$ is equivalent to the minimization of the KRR objective with respect to the SVGP mean function  $\psi(\mu) \in M$.
 
\textcolor{black}{
\begin{corollary}  \label{coro:equiv-predict}
Let $\nu = (Z, \mu, \Sigma) \in \mathcal{X}^m \times \mathbb{R}^m \times \mathbb{R}^{m \times m}_{\succ 0}$ be such that the kernel matrix $k_{ZZ} \in \mathbb{R}^{m \times m}$ is invertible, $M = {\rm span}(k(\cdot,z_1), \dots, k(\cdot, z_m)) \subset \mathcal{H}_k$ be the subspace, and let $\mathcal L(\nu) $ be the ELBO in \eqref{eq_def_L}.
Let $\psi:  \mathbb{R}^m \mapsto M $ be the one-to-one map in \eqref{eq:injective-map}. 
Then we have for $f = \psi(\mu) \in M$ 
$$
- 2 \sigma^2 \mathcal{L}(\nu) = \sum_{i=1}^n \left( y_i - f(x_i) \right)^2 + \sigma^2 \|  f \|_{\mathcal{H}_k}^2 + L(Z,\Sigma), 
$$
where  $L(Z,\Sigma) \in \mathbb{R}$ is a constant not depending on $\mu$. 
In particular, we have $\psi(\mu^*) = f^*$, where
\begin{align*}
    \mu^* := \arg\max_{\mu \in \mathbb{R}^m} \mathcal{L}(  \nu ), \quad f^* & := \argmin_{f \in M}  \sum_{i=1}^n (y_i -  f (x_i)  )^2 + \sigma^2 \| f \|_{\mathcal{H}_k}^2.
\end{align*}
\end{corollary}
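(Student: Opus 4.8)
The plan is to read the corollary off Theorem \ref{theo:ELBO-expression} together with the bijection $\psi$ introduced just before the statement; there is essentially no new work beyond Theorem \ref{theo:ELBO-expression} itself, whose proof is deferred to Appendix \ref{sec:proof-elbo-interp}. First I would collect the three groups of terms \eqref{eq:q_nu_term}, \eqref{eq:KL_term_ap} and \eqref{eq:residual_term_ap} of Theorem \ref{theo:ELBO-expression} into a single quantity $L(Z,\Sigma)$. By inspection each of these summands depends only on $Z$ and $\Sigma$ and never on the mean vector $\mu$, so $L(Z,\Sigma)$ is a well-defined $\mu$-independent constant, and Theorem \ref{theo:ELBO-expression} becomes
\[
-2\sigma^2 \mathcal{L}(\nu) = \sum_{i=1}^n \big( y_i - k_Z(x_i)^\top k_{ZZ}^{-1}\mu \big)^2 + \sigma^2\, \mu^\top k_{ZZ}^{-1}\mu + L(Z,\Sigma).
\]

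Next I would translate the two $\mu$-dependent terms into RKHS quantities using the map $\psi$. Setting $f := \psi(\mu) = \sum_{j=1}^m \alpha_j k(\cdot, z_j) \in M$ with $\alpha := k_{ZZ}^{-1}\mu$, the reproducing property gives $f(x_i) = \langle f, k(\cdot, x_i)\rangle_{\mathcal{H}_k} = k_Z(x_i)^\top \alpha = k_Z(x_i)^\top k_{ZZ}^{-1}\mu$ for each $i$, and $\|f\|_{\mathcal{H}_k}^2 = \alpha^\top k_{ZZ}\alpha = \mu^\top k_{ZZ}^{-1}\mu$. Substituting these two identities into the display above yields exactly
\[
-2\sigma^2 \mathcal{L}(\nu) = \sum_{i=1}^n \big( y_i - f(x_i) \big)^2 + \sigma^2 \|f\|_{\mathcal{H}_k}^2 + L(Z,\Sigma),
\]
which is the claimed identity.

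For the statement about optimizers, I would argue as follows. Since $L(Z,\Sigma)$ does not depend on $\mu$ and $\psi$ is a bijection from $\mathbb{R}^m$ onto $M$, maximizing $\mathcal{L}(\nu)$ over $\mu \in \mathbb{R}^m$ is equivalent to minimizing $J(f) := \sum_{i=1}^n (y_i - f(x_i))^2 + \sigma^2 \|f\|_{\mathcal{H}_k}^2$ over $f \in M$, with the two problems related by $\psi$. To upgrade the correspondence of optimal sets to the pointwise equality $\psi(\mu^*) = f^*$, I would observe that both problems have a unique solution: $J$ is strictly convex on $M$ because $\sigma^2 > 0$ and $\|\cdot\|_{\mathcal{H}_k}$ is a norm, and $M$ is a finite-dimensional (hence closed) linear subspace; pulling this back through the linear bijection $\psi$ shows $\mu \mapsto -\mathcal{L}(\nu)$ is strictly convex. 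Hence $\mu^*$ and $f^*$ are each uniquely determined and must satisfy $\psi(\mu^*) = f^*$.

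The only points that need a line of care are (i) verifying that none of \eqref{eq:q_nu_term}--\eqref{eq:residual_term_ap} secretly depends on $\mu$, which is immediate from their form, and (ii) the uniqueness of the two minimizers, which is what turns ``the optimal sets correspond under $\psi$'' into the stated equality of optimal points. Everything else is the algebra of the reproducing property already recorded in the discussion preceding the corollary. As a closing remark, combining this with the closed form \eqref{eq_nystroem} of the Nystr\"om estimator and the identification $\sigma^2 = n\lambda$ recovers Theorem \ref{theo:equivalence-pred}, so no separate argument for that theorem is needed.
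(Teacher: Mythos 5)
Your proposal is correct and follows essentially the same route as the paper: the paper derives the corollary from Theorem \ref{theo:ELBO-expression} by grouping the $\mu$-independent terms \eqref{eq:q_nu_term}--\eqref{eq:residual_term_ap} into $L(Z,\Sigma)$ and rewriting the $\mu$-dependent part via the bijection $\psi$ exactly as in \eqref{eq:SVGP-Nystorm-1205}. Your additional remark on strict convexity and uniqueness of the two optimizers is a welcome bit of extra rigor that the paper leaves implicit, but it does not change the argument.
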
 
}

\textcolor{black}{
It has been shown that the equivalence between the optimized SVGP mean function and the Nystr\"om KRR  in Theorem \ref{theo:equivalence-pred} stems from the equivalence between their objective functions.    Specifically, the optimized SVGP mean function is interpreted as minimizing the KRR objective on the subspace $M = {\rm span}(k(\cdot, z_1), \cdots, k(\cdot, z_m))$.  
}

\textcolor{black}{
We now turn our attention to the SVGP covariance function \eqref{eq_var_kernel} and investigate its connection to the Nystr\"om approximation.  To this end, we need the notion of {\em orthogonal projections} onto the subspace $M$, which is defined next.
}

\subsection{Orthogonal Projections onto the Subspace}\label{sec:charac_approx_kernel}

\textcolor{black}{
As a preliminary for analyzing the SVGP covariance function, we introduce orthogonal projections onto the RKHS subspace $M =  {\rm span}(k(\cdot,z_1), \dots, k(\cdot, z_m))$. 
We show that such projections define an approximate kernel whose RKHS is identical to $M$, and the use of this kernel in KRR leads to the Nystr\"om KRR. 
%while its use in GPR leads to the {\em Deterministic Training Conditional (DTC)}, a classic sparse approximation approach \citep{seeger2003fast}. 
These points are summarized here, as they will be useful later.
}

\begin{figure}
    \centering
    \includegraphics[width=0.6\linewidth]{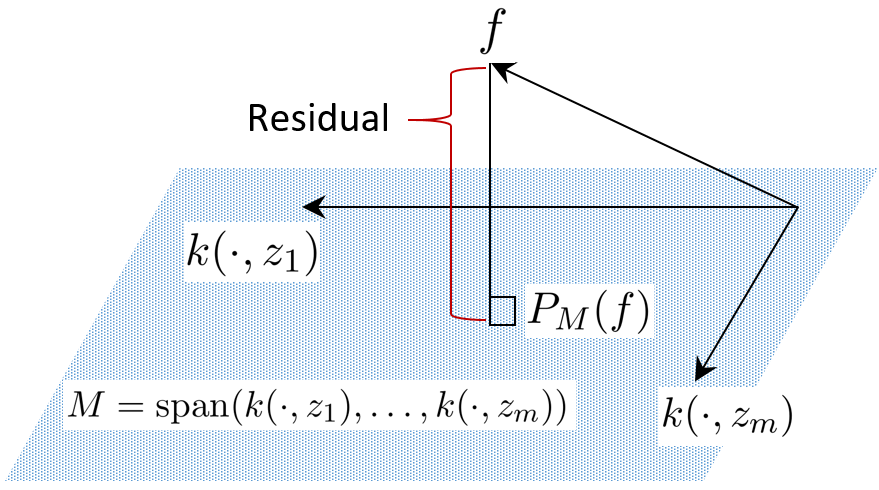}
    \caption{\textcolor{black}{Illustration of the orthogonal projection $P_M(f)$ of an RKHS element $f \in \mathcal{H}$ onto the subspace $M = {\rm span}(k(\cdot,z_1), \dots, k(\cdot,z_m))$.}}
    \label{fig:projection}
\end{figure}

\textcolor{black}{
Define $P_M: \mathcal{H}_k \mapsto M$ as the orthogonal projection operator that maps a given $f \in \mathcal{H}_k$ to its best approximation $P_M(f)$ in the subspace   $M$. 
That is, $P_M(f)$ is the element in $M$  having the minimum RKHS distance to $f$ (see Figure \ref{fig:projection}):
\begin{align*}
\left\| f - P_M(f) \right\|_{\mathcal{H}_k} =  \min_{g \in M} \left\| f - g \right\|_{\mathcal{H}_k} = \min_{\alpha_1,\dots,\alpha_m \in \mathbb{R}} \left\| f - \sum_{i=1}^m \alpha_i k(\cdot, z_i) \right\|_{\mathcal{H}_k}.  
\end{align*} 
If the feature maps $k(\cdot,z_1), \dots, k(\cdot, z_m)$ are linearly independent, which is equivalent to the kernel matrix $k_{ZZ} \in \mathbb{R}^{m \times m}$ being invertible, it can be shown that the minimizer $\alpha_1, \dots, \alpha_m$ for the right expression is unique and given by $\alpha := (\alpha_1, \dots, \alpha_m)^\top = k_{ZZ}^{-1} f_Z$ with $f_Z = (f(z_1), \dots, f(z_m))^\top$.\footnote{\textcolor{black}{By the reproducing property, we have $\left\| f - \sum_{i=1}^m \alpha_i k(\cdot, z_i) \right\|_{\mathcal{H}_k}^2 = \| f \|_{\mathcal{H}_k}^2 - 2f_Z^\top k_{ZZ} \alpha + \alpha^\top k_{ZZ} \alpha$, where $\alpha = (\alpha_1, \dots, \alpha_n)$. Taking the derivative with respect to $\alpha$ and equating it to $0$  yields the expression.}}  
Thus the orthogonal projection is given as
\begin{equation} \label{eq:projection-form}
	P_M\big(f\big) = k_Z(\cdot)^\top k_{ZZ}^{-1} f_Z.
\end{equation}
}
%which we show in Appendix \ref{ap_proof_ortho_proj}.

%For any $f\in \mathcal{H}_k$, denote by $P_M\big(f\big) \in M$ the {\em orthogonal projection} of $f$ onto the subspace $M = {\rm span}(k(\cdot,z_1), \dots, k(\cdot, z_m))$: 
%According to the Hilbert Projection Theorem \citep[see e.g.][]{rudin2006real} this will be optimal in the sense that
%\begin{equation*}
%P_M\big(f\big) := \underset{g \in M}{\text{argmin }} \| f - g \|_{\mathcal{H}_k}. 
%\end{equation*}
%which is the best approximation of $f$ by an element in $M$. 

One can use the orthogonal projection operator $P_M$ to define an approximate kernel. Note that the kernel $k(x,x')$ can be written as the inner product between the feature maps $k(\cdot, x)$ and $k(\cdot, x')$:
\begin{equation*}
	k(x,x')=  \big\langle k(\cdot, x), k(\cdot, x') \big\rangle_{\mathcal{H}_k}, \ \ \ x,x' \in \X.
\end{equation*}
By replacing $k(\cdot, x)$ and $k(\cdot, x')$ by their best approximations $P_M(k(\cdot, x))$ and $P_M(k(\cdot, x'))$, we can define an approximation $q(x,x')$ of $k (x,x')$ as
\begin{align}
	q(x,x') &:= \big\langle P_M \big( k(\cdot, x) \big), P_M \big( k(\cdot, x') \big) \big\rangle_{\mathcal{H}_k} \nonumber \\
	&= \big\langle  k_{Z}(x)^\top k_{ZZ}^{-1} k_{Z}(\cdot) ,
	 k_{Z}(x')^\top k_{ZZ}^{-1} k_{Z}(\cdot)
 \big\rangle_{\mathcal{H}_k} \nonumber
	\\
	&= k_{Z}(x)^\top k_{ZZ}^{-1} k_{Z}(x'), \ \ \ x,x' \in \X  \label{eq_approx_q},
\end{align}
where the second line follows from \eqref{eq:projection-form}, and the third uses the reproducing property.

Since $q(x,x')$ in \eqref{eq_approx_q} is a positive definite kernel, it induces its own RKHS $\mathcal{H}_q$. 
As the following lemma shows, this RKHS  $\mathcal{H}_q$  is nothing but the subspace $M$, and the inner product $\left<\cdot, \cdot, \right>_{\mathcal{H}_q}$ of $\mathcal{H}_q$ is the same as that of the original RKHS $\mathcal{H}_k$. 
The proof can be found in Appendix \ref{ap_proof_Hq=M}.

\begin{lemma}\label{lemma_M=H_q}
Let $Z = (z_1,\dots,z_m) \in \mathcal{X}^m$  be such that the kernel matrix $k_{ZZ}$ is invertible.
Let $M = {\rm span}(k(\cdot, z_1), \dots, k(\cdot, z_m))$ be the subspace of $\mathcal{H}_k$ in \eqref{eq_span_M}, and $\mathcal{H}_q$ be the RKHS of the approximate kernel $q$ in \eqref{eq_approx_q}. 
Then we have $M=\mathcal{H}_q$ as a set of functions, and 
\begin{align*}
    \langle f, g \rangle_{\mathcal{H}_q} = \langle f, g \rangle_{\mathcal{H}_k}, \quad \forall f,g \in M=\mathcal{H}_q .
\end{align*}
In particular,  we have
$$
\| f \|_{\mathcal{H}_q} = \| f \|_{\mathcal{H}_k}, \quad \forall f \in  M = \mathcal{H}_q .
$$
\end{lemma}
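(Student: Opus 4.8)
\textbf{Proof plan for Lemma \ref{lemma_M=H_q}.}

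The plan is to identify $\mathcal{H}_q$ with $M$ by exploiting the fact that $q = \langle P_M(k(\cdot,x)), P_M(k(\cdot,x'))\rangle_{\mathcal{H}_k}$ is an inner product of feature maps that already live in $M$, together with the standard characterization of an RKHS as the closure of the span of its own feature maps. First I would recall that, by the Moore–Aronszajn construction, $\mathcal{H}_q$ is the completion of $\mathcal{H}_0^q := {\rm span}(q(\cdot, x) \mid x \in \mathcal{X})$ with the inner product determined by $\langle q(\cdot,x), q(\cdot,x')\rangle_{\mathcal{H}_q} = q(x,x')$. Using \eqref{eq_approx_q} and \eqref{eq:projection-form}, each generator is $q(\cdot,x) = k_Z(x)^\top k_{ZZ}^{-1} k_Z(\cdot) = P_M(k(\cdot,x)) \in M$; since the vectors $\{k_Z(x) : x \in \mathcal{X}\}$ span all of $\mathbb{R}^m$ (otherwise $k_{ZZ}$ would be singular), the generators $q(\cdot,x)$ span all of $M$. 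Hence $\mathcal{H}_0^q = M$ as sets, and $M$ is already finite-dimensional ($\dim M = m$ since $k_{ZZ}$ is invertible), so no completion is needed: $\mathcal{H}_q = M$ as a set of functions.

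Next I would verify that the two inner products agree. It suffices to check equality on the generators $q(\cdot, x)$, which span $M$, and then extend by bilinearity. On generators,
\begin{align*}
\langle q(\cdot,x), q(\cdot,x') \rangle_{\mathcal{H}_q} = q(x,x') = \langle P_M(k(\cdot,x)), P_M(k(\cdot,x'))\rangle_{\mathcal{H}_k} = \langle q(\cdot,x), q(\cdot,x')\rangle_{\mathcal{H}_k},
\end{align*}
where the first equality is the reproducing property in $\mathcal{H}_q$, the second is the definition \eqref{eq_approx_q}, and the third uses $q(\cdot,x) = P_M(k(\cdot,x))$. Since both $\langle\cdot,\cdot\rangle_{\mathcal{H}_q}$ and $\langle\cdot,\cdot\rangle_{\mathcal{H}_k}$ restricted to $M$ are bilinear forms on the finite-dimensional space $M$, and they coincide on a spanning set, they coincide on all of $M$; taking $f = g$ gives the norm identity.

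A cleaner alternative, which I would present if preferred, is to define an explicit linear map $T : M \to \mathbb{R}^m$, $T(f) = k_{ZZ}^{-1/2} f_Z$, and to note that it is an isometry from $(M, \langle\cdot,\cdot\rangle_{\mathcal{H}_k})$ onto $\mathbb{R}^m$ — indeed for $f = \sum_i \alpha_i k(\cdot,z_i)$ with $\alpha = k_{ZZ}^{-1} f_Z$ we have $\|f\|_{\mathcal{H}_k}^2 = \alpha^\top k_{ZZ} \alpha = f_Z^\top k_{ZZ}^{-1} f_Z = \|T(f)\|^2$ — and the same computation applied to $q$ shows $T$ is also an isometry for $\langle\cdot,\cdot\rangle_{\mathcal{H}_q}$; composing gives the claim. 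The only genuinely delicate point is making sure the RKHS $\mathcal{H}_q$ really is $M$ rather than merely a subset or superset: this is where invertibility of $k_{ZZ}$ is used twice — once to guarantee the feature maps span all of $M$ (so $\mathcal{H}_q \supseteq M$), and once to guarantee $\dim M = m < \infty$ so that no completion enlarges $\mathcal{H}_0^q$ (so $\mathcal{H}_q \subseteq M$). I expect this bookkeeping around the Moore–Aronszajn completion, rather than any computation, to be the main thing to get right.
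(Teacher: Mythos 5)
Your proposal is correct and follows essentially the same route as the paper's proof: both identify the span of the generators $q(\cdot,x)=P_M(k(\cdot,x))$ with $M$ (the paper uses $q(\cdot,z_j)=k(\cdot,z_j)$ directly where you use the spanning of $\{k_Z(x)\}$ over $\mathbb{R}^m$), both dispose of the Moore--Aronszajn completion via finite-dimensionality, and both then match the inner products, with your bilinearity-on-a-spanning-set step replacing the paper's explicit coefficient computation. No gaps.
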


By using Lemma \ref{lemma_M=H_q}, the Nystr\"om KRR in \eqref{eq:Nystrom-opt} can be rewritten as KRR using the approximate kernel $q$ in \eqref{eq_approx_q} as summarized as follows.

\begin{theorem}%[Nystr\"om Approximation]
\label{thm_nystroem} 
	Let  $X:=(x_1,...,x_n) \in \mathcal{X}^n$ and $y:=(y_1,...,y_n)^\top \in \mathbb{R}^n$ be given.
 Let $Z = (z_1,\dots,z_m) \in \mathcal{X}^m$  be such that the kernel matrix $k_{ZZ}$ is invertible, and $\bar{f}$ be the Nystr\"om approximation $\bar{f}$ of KRR in \eqref{eq:Nystrom-opt}. Then we have
	\begin{equation*}
	\bar{f} = \underset{f \in \mathcal{H}_q}{\argmin } \ \frac{1}{n} \sum_{i=1}^{n} \big(y_i - f(x_i) \big)^2 + \lambda \|f \|^2_{\mathcal{H}_q}
	\end{equation*}
	where $q$ is the approximate kernel defined in  \eqref{eq_approx_q}.
\end{theorem}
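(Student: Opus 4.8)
The plan is to deduce the claim directly from Lemma~\ref{lemma_M=H_q}. That lemma identifies the RKHS $\mathcal{H}_q$ of the approximate kernel $q$ in \eqref{eq_approx_q} with the subspace $M = {\rm span}(k(\cdot,z_1),\dots,k(\cdot,z_m))$ both as a set of functions and, on that common set, as normed spaces: $\|f\|_{\mathcal{H}_q} = \|f\|_{\mathcal{H}_k}$ for every $f \in M = \mathcal{H}_q$. Consequently the two optimisation problems --- the Nystr\"om problem \eqref{eq:Nystrom-opt}, whose feasible set is $M$ and whose regulariser is $\lambda\|f\|^2_{\mathcal{H}_k}$, and the KRR problem with kernel $q$ in the theorem statement, whose feasible set is $\mathcal{H}_q$ and whose regulariser is $\lambda\|f\|^2_{\mathcal{H}_q}$ --- have \emph{exactly the same feasible set} and \emph{exactly the same objective functional}: the data-fit term $\frac1n\sum_{i=1}^n(y_i - f(x_i))^2$ is the same function of $f$ in both, and the two regularisers coincide pointwise on $M = \mathcal{H}_q$ by the norm identity. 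Hence the two problems are literally identical, and their minimisers coincide, which is the asserted equality $\bar{f} = \argmin_{f\in\mathcal{H}_q}\frac1n\sum_i(y_i-f(x_i))^2 + \lambda\|f\|^2_{\mathcal{H}_q}$.

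To conclude that the problems share the \emph{same} solution (and not merely the same optimal value), I would observe that each objective is strictly convex on the respective Hilbert space: $f \mapsto \lambda\|f\|^2$ is strictly convex for $\lambda > 0$ and the least-squares term is convex, so the sum is strictly convex and admits at most one minimiser. Existence is also clear --- e.g.\ by coercivity and weak lower semicontinuity, or more concretely by invoking the representer theorem inside the finite-dimensional space $M$, where the problem reduces to the strictly convex quadratic in $\beta \in \mathbb{R}^m$ already solved in the derivation of \eqref{eq_nystroem}. Since both problems are the same, their unique minimisers agree.

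I do not expect a real obstacle here: the entire content of the proof is transporting the norm identity of Lemma~\ref{lemma_M=H_q} into the KRR objective, after noting that $\mathcal{H}_q$ is well defined because $q$ in \eqref{eq_approx_q} is positive definite (being a Gram-type kernel $q(x,x') = \langle P_M(k(\cdot,x)), P_M(k(\cdot,x'))\rangle_{\mathcal{H}_k}$). As an optional sanity check one could instead verify the identity through closed forms: the representer theorem in $\mathcal{H}_q$ gives the KRR-with-$q$ solution $x \mapsto q_X(x)^\top(q_{XX} + n\lambda I_n)^{-1}y$, and substituting $q_X(x) = k_{XZ}k_{ZZ}^{-1}k_Z(x)$ and $q_{XX} = k_{XZ}k_{ZZ}^{-1}k_{ZX}$ together with a push-through matrix identity recovers \eqref{eq_nystroem}; but the Hilbert-space argument above makes this computation unnecessary.
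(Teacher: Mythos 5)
Your proposal is correct and follows exactly the paper's route: the paper derives Theorem \ref{thm_nystroem} as an immediate consequence of Lemma \ref{lemma_M=H_q}, since $M = \mathcal{H}_q$ as sets with identical norms means the two optimisation problems are literally the same. Your additional remarks on uniqueness of the minimiser and the optional closed-form/Woodbury verification are sound but not needed beyond what the paper states.
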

Theorem \ref{thm_nystroem} shows that the Nystr\"om KRR can be interpreted as KRR using the approximate kernel  \eqref{eq_approx_q}.

%Theorem \ref{thm_nystroem} shows that the Nystr\"om approximation \eqref{eq_nystroem} is the solution of the KRR with the approximate kernel $q$ in \eqref{eq_approx_q}.  
%By Lemma \ref{lemma_M=H_q}, any $f \in \mathcal{H}_q$ can be written as 
%\begin{align*}
%f(x) & = \left< f, q(\cdot, x) \right>_{\mathcal{H}_q} = \left< f, q(\cdot, x) \right>_{\mathcal{H}_k}  \\
%&= \left< f, k_{Z}(\cdot)^\top k_{ZZ}^{-1} k_{Z}(x) \right>_{\mathcal{H}_k}  = f_Z^\top k_{ZZ}^{-1} k_Z(x).
%\end{align*}
%The last expression coincides with the expression \eqref{eq:projection-form} of the orthogonal projection onto $M$, and also with the {\em kernel-based interporator} obtained from noise-free observations $(z_j, f(z_j))_{j=1}^m$.

%This corresponds to the KRR estimator  \eqref{eq:KRR-compact} with	 $\lambda := 0$, $X := Z$ and $y := f_Z$. Setting $\lambda = 0$ leads to the {\em minimum-norm interpolation} in the RKHS; see e.g.~\citet[Section 3.2]{kanagawa2018gaussian} and references therein.  

%Therefore, each function in $\mathcal{H}_q$ is the best approximation of functions in $\mathcal{H}_k$ that pass $(z_j, f(z_j))_{j=1}^m$. In this sense, $\mathcal{H}_q$ consists of functions that approximate the functions in $\mathcal{H}_k$ on the landmark points $z_1, \dots, z_m$. Hence, Theorem \ref{thm_nystroem} shows that the Nystr\"om approximation is the solution of the KRR where the hypothesis space $\mathcal{H}_q$ consists of such approximate functions. 

%\subsection{Relation to the Deterministic Training Conditional}
\label{sec:connection-DTC}

%This discussion will be useful in our investigation of the connections between the Nystr\"om and SVGP.

\textcolor{black}{
Consider now GPR using the approximate kernel $q$ in \eqref{eq_approx_q}  as the prior, $F \sim GP(0, q)$, instead of the original kernel $k$ \citep[Section 4]{quinonero2005unifying}. 
%This approach is called the DTC approximation \citep[Section 5]{quinonero2005unifying}. 
The resulting posterior  mean function $\bar{m}(x)$ and covariance function ${\bar q}(x,x')$  are given by 
\begin{align}
	\bar{m}(x) &= q_X(x)^\top (q_{XX} + \sigma^2 I_n)^{-1} y \nonumber \\
	&= k_{Z}(x)^\top (\sigma^2 k_{ZZ} + k_{ZX} k_{XZ})^{-1} k_{ZX}^{} y, \label{eq:post-mean-DTC} \\
	\bar{q}(x,x') &=   q(x,x') - q_X(x) (q_{XX}+\sigma^2 I_n )^{-1} q_X(x')  \nonumber \\
	&= k_{Z}(x)^\top  ( k_{ZZ} + \sigma^{-2} k_{ZX}k_{XZ})^{-1}  k_{Z}(x'). \label{eq:post-cov-DTC}
\end{align}	
The posterior mean function \eqref{eq:post-mean-DTC} is identical to the the Nystr\"om approximation \eqref{eq_nystroem}, if $\sigma^2 = n \lambda$.   This equivalence follows from the equivalence between the KRR and GPR and Theorem \ref{thm_nystroem}.
}
%on the formulation of the Nystr\"om as the KRR with the approximate kernel $q$. 

%Note that even though the DTC mean $\bar{m}$ coincides with the optimal SVGP mean $m^*$ in \eqref{eq_optimal_m} the DTC covariance functions $\bar{q}$ differs significantly from $k^*$ in \eqref{eq_var_kernel}. Furthermore the inference procedures are different, since kernel hyperparamters (and $Z$) in DTC are learned by optimising the marginal log-likelihood whereas SVGP uses the ELBO. The use of the DTC kernel $\bar{q}$ and the DTC objective lead to certain pathologies discussed in \citet{bauer2016understanding}.

\subsection{RKHS Interpretations of the SVGP Covariance Function}
\label{sec:var_mean_cov}

\textcolor{black}{
Given the above preliminaries, we now analyze the optimized SVGP covariance function \eqref{eq_var_kernel}, which is given as
\begin{equation} \label{eq:SVGP-opt-cov-1321}
k^*(x,x') = \underbrace{ k(x,x') - k_Z(x)^\top k_{ZZ}^{-1} k_Z(x') }_{(a)}  + \underbrace{ k_Z(x)^\top  ( k_{ZZ} + \sigma^{-2} k_{ZX}k_{XZ})^{-1}  k_Z(x') }_{(b)}. 
\end{equation}
The term $(a)$ is the posterior covariance function of GPR using the prior $F \sim GP(0,k)$ given noise-free observations at the inducing inputs $Z = (z_1, \dots, z_m)$, as can be seen from \eqref{eq_bar_k} with $X =  Z$ and $\sigma^2 = 0$.    
The term $(b)$ is the posterior covariance function  \eqref{eq:post-cov-DTC} of GPR using the prior $F \sim GP(0,q)$ with the approximate kernel in \eqref{eq_approx_q} given noisy observations at training inputs $X = (x_1, \dots, x_m)$. Therefore the optimized SVGP covariance function is the sum of two posterior covariance functions. 
}

\textcolor{black}{
The optimized SVGP covariance function is an approximation to the exact GP posterior covariance function $\bar{k}(x,x')$ in \eqref{eq_bar_k}.  
To understand how  this approximation is  done, consider the optimized SVGP {\em variance}, i.e., $x = x'$ in \eqref{eq:SVGP-opt-cov-1321}, which approximates the exact GP posterior variance $\bar{k}(x,x)$. 
Intuitively, the term $(a)$ represents the uncertainty arising from approximating the GP prior  $F \sim GP(0,k)$ using the approximate prior $F \sim GP(0,q)$, as it can be written as $k(x,x) - q(x,x)$.  The term $(b)$ represents the posterior uncertainty using the approximate prior $F \sim GP(0,q)$. 
}

\textcolor{black}{
We provide two RKHS interpretations of the SVGP posterior covariance, one  geometric and one based on a function-space viewpoint.  They supplement the understanding of how the SVGP covariance approximates the exact GP posterior covariance. 
}

\subsubsection{Geometric Interpretation}

\textcolor{black}{
We describe a geometric interpretation of the optimized SVGP covariance function \eqref{eq:SVGP-opt-cov-1321}.  To this end, we first analyze the SVGP covariance function $k^\nu$ in \eqref{eq:cov-variational} parametrized by $\Sigma \in  \mathbb{R}^{m \times m}_{\succ 0}$:
\begin{equation} \label{eq:SVGP-cov1465}
k^\nu (x,x') = \underbrace{k(x,x') - k_Z(x)^\top k_{ZZ}^{-1} k_Z(x')}_{(a')} + \underbrace{ k_Z(x)^\top k_{ZZ}^{-1} \Sigma k_{ZZ}^{-1} k_Z(x') }_{ (b') }
\end{equation}
}

\textcolor{black}{
The first term  $(a')$ in \eqref{eq:SVGP-cov1465} is the same as the first term $(a) $ in \eqref{eq:SVGP-opt-cov-1321}. It can be written as
$$
k(x,x') - q(x,x') = \left< k(\cdot, x) - P_M \big( k(\cdot,x) \big), k(\cdot,x') - P_M \big( k(\cdot,x') \big) \right>_{\mathcal{H}_k},
$$
where $q(x,x')$ is the approximate kernel in \eqref{eq_approx_q} and 
$P_M\left( k(\cdot, x)\right) \in M$ is the orthogonal projection of the feature map $k(\cdot, x)$ onto the subspace $M = {\rm span}(k(\cdot, z_1), \dots, k(\cdot, z_m))$; see Section \ref{sec:charac_approx_kernel}.
The difference $ k(\cdot, x) - P_M \big( k(\cdot,x) \big)$ is thus the {\em residual of the approximation} of $k(\cdot,x)$ by $ P_M \big( k(\cdot,x) \big)$ (imagine Figure \ref{fig:projection} with $f = k(\cdot,x)$).  Therefore, the term $(a')$ is the inner product between two such residuals. 
In particular, for $x  = x'$ the term $(a')$ is equal to the squared length of the residual of approximation:
$$
\left\| k(\cdot, x) - P_M \big( k(\cdot,x) \big)  \right\|^2_{\mathcal{H}_k}.
$$
This quantifies how closely $k(\cdot, x)$ can be approximated by a linear combination of the feature maps $k(\cdot, z_1), \dots, k(\cdot, z_m)$ of the inducing inputs $Z = (z_1, \dots, z_m)$.  
}

%Since the subspace $M$ is spanned by the feature maps of the inducing variables $Z = (z_1, \dots, z_m)$ and $P_M \big( k(\cdot,x) \big)$ is the best approximation of $k(\cdot,x)$ on $M$ which is the error of the best approximation $P_M \big( k(\cdot,x) \big)$ of $k(\cdot, x)$ on the subspace how closely $k(\cdot, x)$ can be approximated by a linear combination of 

\textcolor{black}{
 The second term $(b')$ is interpreted as a kernel defined as 
\begin{equation} \label{eq:kernel-sigma1483}
k^\Sigma(x,x') :=  \left< \phi^\Sigma(x), \phi^\Sigma(x') \right>_{\mathbb{R}^m},
\end{equation}
where $\phi^\Sigma: \mathcal{X} \mapsto \mathbb{R}^m$ is a finite-dimensional feature map parametrized by $\Sigma$ and $Z$: 
\begin{equation} \label{eq:feature-vec}
\phi^\Sigma(x) := \Sigma^{1/2} k_{ZZ}^{-1} k_Z(x) \in \mathbb{R}^m. 
\end{equation}
The parameter matrix $\Sigma$ specifies the covariance matrix at the inducing variables $Z = (z_1, \dots, z_m)$: we have $k^\Sigma_{ZZ} = \Sigma$.  
}

\textcolor{black}{
The term $(b)$ of the optimized SVGP covariance in \eqref{eq:SVGP-opt-cov-1321} is the  kernel \eqref{eq:kernel-sigma1483}  with $\Sigma = \Sigma^*$  being the optimal covariance matrix  \eqref{eq:optimal-sigma} that maximizes the ELBO. 
By Theorem \ref{theo:ELBO-expression}, this optimal $\Sigma^*$ is the minimizer of the sum of   \eqref{eq:q_nu_term} and \eqref{eq:KL_term_ap}, which can be written as
\begin{align} \label{eq:KL-optimization}
 \sum_{i=1}^n k^\Sigma(x_i, x_i) + \sigma^2 KL( \mathcal{N}(0, \Sigma)\ \| \ \mathcal{N}(0, k_{ZZ}) ),
\end{align}
As an objective function of $\Sigma$, this is interpreted as a regularized empirical risk.
The first term, interpreted as an empirical risk, is the sum of uncertainties (variances) $k^\Sigma(x_i,x_i)$ of the latent function at the training inputs $x_1, \dots, x_n$; lower uncertainties are encouraged because observations $y_1, \dots, y_n$ there are given. 
The second term is a regularizer that encourages the covariance matrix $k_{ZZ}^\Sigma = \Sigma$ at the inducing inputs $Z = (z_1, \dots, z_m)$ to be similar to the covariance matrix $k_{ZZ}$ of the prior GP for the latent function. 
If the noise variance $\sigma^2$ is smaller (larger), the observations $y_1,\dots,y_n$ contain more (less) information about the latent function, and thus $\Sigma$ should make the uncertainties $k^\Sigma(x_i,x_i)$ smaller (larger). 
}

\textcolor{black}{
Dividing \eqref{eq:KL-optimization} by $n$, defining $\lambda = \sigma^2 /n$, and using the feature map \eqref{eq:feature-vec}, the objective function of $\Sigma$ can be written as
\begin{align} \label{eq:KL-optimization-length}
\frac{1}{n}   \sum_{i=1}^n \left\| \phi^\Sigma (x_i)  \right\|_{\mathbb{R}^m}^2 + \lambda KL( \mathcal{N}(0, \Sigma)\ \| \ \mathcal{N}(0, k_{ZZ}) ).
\end{align} Geometrically, $\Sigma$ is optimized to minimize the average lengths of the feature vectors $\phi^\Sigma(x_i)$ at the training input points $x_1, \dots, x_n$, with a constraint to that $\Sigma$ does not too deviate from $k_{ZZ}$.  In the limit $n \to \infty$ and for fixed $Z$,  the second term vanishes (since $\lambda = \sigma^2/n$), while the first term converges to  $$
\int \left\| \phi^\Sigma (x)  \right\|_{\mathbb{R}^m}^2 dP(x),
$$ assuming that  training inputs $x_1, \dots, x_n$ are i.i.d.~with a probability distribution $P$.  The optimal $\Sigma^*$ in this limit thus minimizes this population average of the length of the feature vector $\phi^\Sigma(x)$.  
}

\subsubsection{Worst Case Error Interpretation}

\textcolor{black}{
We next provide a {\em worst case error} interpretation of  the optimized SVGP variance function.
In general, the posterior  variance function of GPR can be written as a  worst case error of KRR in the unit ball of the RKHS \citep[Section 3.4]{kanagawa2018gaussian}. 
Therefore, the optimized SVGP variance function, $k^*(x,x)$, can be written as the sum of two worst case errors: one is that of KRR with noise-free observations at  $Z = (z_1, \dots, z_m)$, and the other is that of Nystr\"om KRR given training observations at $X = (x_1, \dots, x_m)$. Formally, the following RKHS interpretation holds for the optimized SVGP variance function.
}

\begin{theorem} \label{theo:RKHS-interp-variational-cov}
Suppose that data $(x_i, y_i)_{i=1}^n \subset \X \times \mathbb{R}$ are given, and that $Z = (z_1, \dots, z_m) \in \mathcal{X}^m$ are fixed inducing inputs such that the kernel matrix $k_{ZZ} = (k(z_i, z_j))_{i,j = 1}^m \in \mathbb{R}^{m \times m}$ is invertible. 
 For the approximate kernel $q$ in \eqref{eq_approx_q}, define $q^\sigma$ as  
 \begin{align}  \label{eq:aug-approx-kernel}
    q^\sigma(x,x') = q(x,x') + \sigma^2 \delta(x, x'),  
\end{align}
where $\delta(x, x') = 1$ if $x = x'$ and $\delta(x,x') =0 $ otherwise, and let  $\mathcal{H}_{q^\sigma}$ be the RKHS of $q^\sigma$. 
  Let $k^*$ be the optimized SVGP  covariance function \eqref{eq_var_kernel}.
Then, for $x \not \in \{x_1,\dots,x_n \}$, we have
\begin{align}
   k^*(x,x) + \sigma^2 & =  \Big( \sup_{ \| f \|_{\mathcal{H}_k} \leq  1}  \{ f(x) -  \underbrace{k_Z(x)^\top k_{ZZ}^{-1} f_Z}_{\text{\rm Kernel Interpolation} }  \} \Big)^2 \label{eq:SVGP-var-worst1} \\
 & + \Big( \sup_{ \| h \|_{\mathcal{H}_{q^\sigma}} \leq 1} \{ h(x) - \underbrace{ k_{Z}(x)^\top (\sigma^2 k_{ZZ} + k_{ZX} k_{XZ})^{-1} k_{ZX} h_X }_{\text{\rm Nystr\"om KRR}} \}  \Big)^2. \label{eq:SVGP-var-worst2}
\end{align}
\end{theorem}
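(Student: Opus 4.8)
The plan is to evaluate the optimized SVGP covariance function \eqref{eq_var_kernel} at $x = x'$ and decompose it as
\begin{equation*}
k^*(x,x) = \underbrace{ k(x,x) - k_Z(x)^\top k_{ZZ}^{-1} k_Z(x) }_{(a)} + \underbrace{ k_Z(x)^\top ( k_{ZZ} + \sigma^{-2} k_{ZX} k_{XZ})^{-1} k_Z(x) }_{(b)},
\end{equation*}
and then to match $(a)$ with the first squared worst-case error \eqref{eq:SVGP-var-worst1} and $(b) + \sigma^2$ with the second one \eqref{eq:SVGP-var-worst2}. The single tool I would use throughout is the classical power-function identity \citep[Section 3.4]{kanagawa2018gaussian}: for any kernel $\kappa$ with RKHS $\mathcal{H}_\kappa$ and points $W = (w_1, \dots, w_\ell)$ whose Gram matrix $\kappa_{WW}$ is invertible,
\begin{equation} \label{eq:pf-plan}
\sup_{ \| g \|_{\mathcal{H}_\kappa} \le 1 } \big( g(x) - \kappa_W(x)^\top \kappa_{WW}^{-1} g_W \big)^2 = \kappa(x,x) - \kappa_W(x)^\top \kappa_{WW}^{-1} \kappa_W(x).
\end{equation}
Its short proof, which I would spell out, is that $g \mapsto g(x) - \kappa_W(x)^\top \kappa_{WW}^{-1} g_W$ is a bounded linear functional on $\mathcal{H}_\kappa$ whose Riesz representer is $\kappa(\cdot,x) - \sum_{j} \beta_j \kappa(\cdot, w_j)$ with $\beta := \kappa_{WW}^{-1} \kappa_W(x)$ (a direct check using the reproducing property); hence the squared supremum over the unit ball equals the squared RKHS norm of that representer, which expands to the right-hand side of \eqref{eq:pf-plan}.

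Applying \eqref{eq:pf-plan} with $\kappa = k$ and $W = Z$ immediately yields that $(a)$ equals the right-hand side of \eqref{eq:SVGP-var-worst1}, because there $\kappa_W(x)^\top \kappa_{WW}^{-1} g_W$ is exactly the kernel interpolant $k_Z(x)^\top k_{ZZ}^{-1} f_Z$ at the inducing inputs. For the second term, I would first recognise $(b)$ as the GPR posterior variance $\bar q(x,x) = q(x,x) - q_X(x)^\top ( q_{XX} + \sigma^2 I_n )^{-1} q_X(x)$ of $GP(0,q)$ with noise $\sigma^2$ at the training inputs $X$; this is precisely \eqref{eq:post-cov-DTC}. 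Then I would absorb the noise into the kernel: since $x \notin \{ x_1, \dots, x_n \}$, the augmented kernel $q^\sigma = q + \sigma^2 \delta$ of \eqref{eq:aug-approx-kernel} satisfies $q^\sigma_X(x) = q_X(x)$, $q^\sigma_{XX} = q_{XX} + \sigma^2 I_n$, and $q^\sigma(x,x) = q(x,x) + \sigma^2$, so \eqref{eq:pf-plan} applied with $\kappa = q^\sigma$ and $W = X$ gives
\begin{align*}
\sup_{ \| h \|_{\mathcal{H}_{q^\sigma}} \le 1 } \big( h(x) - q_X(x)^\top ( q_{XX} + \sigma^2 I_n )^{-1} h_X \big)^2
&= q(x,x) + \sigma^2 - q_X(x)^\top ( q_{XX} + \sigma^2 I_n )^{-1} q_X(x) \\
&= \bar q(x,x) + \sigma^2 = (b) + \sigma^2 .
\end{align*}

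It then remains to rewrite the interpolant above as the Nystr\"om KRR map acting on $h$. Using $q(x,x') = k_Z(x)^\top k_{ZZ}^{-1} k_Z(x')$ gives $q_X(x) = k_{XZ} k_{ZZ}^{-1} k_Z(x)$ and $q_{XX} = k_{XZ} k_{ZZ}^{-1} k_{ZX}$, and the push-through identity $k_{ZZ}^{-1} k_{ZX} ( k_{XZ} k_{ZZ}^{-1} k_{ZX} + \sigma^2 I_n )^{-1} = ( \sigma^2 k_{ZZ} + k_{ZX} k_{XZ} )^{-1} k_{ZX}$ (verified by multiplying both sides by $\sigma^2 k_{ZZ} + k_{ZX} k_{XZ}$ on the left and $k_{XZ} k_{ZZ}^{-1} k_{ZX} + \sigma^2 I_n$ on the right) turns $q_X(x)^\top ( q_{XX} + \sigma^2 I_n )^{-1} h_X$ into $k_Z(x)^\top ( \sigma^2 k_{ZZ} + k_{ZX} k_{XZ} )^{-1} k_{ZX} h_X$, which is precisely the expression under the brace in \eqref{eq:SVGP-var-worst2}. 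Adding the two displays and using $k^*(x,x) = (a) + (b)$ then gives the claimed identity: $k^*(x,x) + \sigma^2$ equals the sum of \eqref{eq:SVGP-var-worst1} and \eqref{eq:SVGP-var-worst2}.

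The algebraic parts are routine: expressing $q_X(x)$ and $q_{XX}$ through $k_{XZ}, k_{ZZ}$, the push-through identity, and the norm expansion behind \eqref{eq:pf-plan}. The one step I would treat most carefully is the noise-absorption move: the augmented kernel \eqref{eq:aug-approx-kernel} reproduces the original noisy problem only at test points $x$ distinct from every training input, which is exactly why the hypothesis $x \notin \{x_1,\dots,x_n\}$ appears in the statement; without it $q^\sigma_X(x)$ would pick up a spurious $\sigma^2 \delta$ contribution and the identification of \eqref{eq:SVGP-var-worst2} with $(b) + \sigma^2$ would fail. I would also note in passing that \eqref{eq:pf-plan} is legitimately applied with $\kappa = q^\sigma$ because $q^\sigma_{XX} = q_{XX} + \sigma^2 I_n$ is invertible (as $\sigma^2 > 0$), even though $q_{XX}$ itself has rank at most $m < n$.
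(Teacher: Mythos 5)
Your proposal is correct and follows essentially the same route as the paper, which simply cites Theorem \ref{thm_nystroem}, the identity \eqref{eq:post-cov-DTC}, and Propositions 3.8 and 3.10 of \citet{kanagawa2018gaussian} — your power-function identity, the noise-absorption via $q^\sigma$, and the push-through identity are precisely the contents of those references, spelled out. The details all check (including the role of $x \notin \{x_1,\dots,x_n\}$ and the invertibility of $q_{XX}+\sigma^2 I_n$), so this is a valid, self-contained version of the paper's argument.
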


\begin{proof}
\textcolor{black}{
The assertion follows from  Theorem \ref{thm_nystroem}, \eqref{eq:post-cov-DTC}, and   \citet[Propositions 3.8 and 3.10]{kanagawa2018gaussian}.
} 
%Let $r$ be an arbitrary kernel on $\X$. Recall that the posterior kernel $\overline{r}$ corresponding to observations given at locations $D=(d_1,\hdots,d_L)$ with observation variance $\sigma^2 \ge 0$ is given as 
%\begin{equation}
 %   \overline{r}(x,x') = r(x,x') - r_D(x)^T \big( r_{DD} + \sigma^2)^{-1} r_D(x'). \nonumber
%\end{equation}
%This kernel can for $x \notin \{d_1,\hdots,d_L$ be interpreted as the worst-case error (cp. Theorem \ref{thm_worst_case_error})
	%\begin{align*}
%		  \overline{r}(x,x) + \sigma^2 =  \left(
%			\underset{g \in \mathcal{H}_{r^\sigma}:\, \| g\|_{\mathcal{H}_{r^\sigma}} \le 1}{\sup} \Big( g(x) -  \sum_{i=1}^{L} w^\sigma_i(x)g(d_i) \Big) \right)^2, 
%	\end{align*} 
%where $w^{\sigma}(x)= \big( r_{DD} + \sigma^2)^{-1} r_D(x) $. 
%We now apply Theorem \ref{thm_worst_case_error} twice. First for the kernel $r=k$, locations $D=Z$ and observation variance $\sigma^2=0$. Second for the kernel $r=q$, locations $D=X$ and variance $\sigma^2$. The claim follows immediately. 
\end{proof}

%The posterior variance function of the SVGP provides a means for uncertainty quantification, and thus it is important to understand its behaviors. 
\textcolor{black}{
Theorem \ref{theo:RKHS-interp-variational-cov}  shows that the optimized SVGP variance  $k^*(x,x)$ plus the noise variance $\sigma^2$ is equal to the sum of two terms in \eqref{eq:SVGP-var-worst1}  and \eqref{eq:SVGP-var-worst2}. 
The first term \eqref{eq:SVGP-var-worst1} is the worst case error of kernel-based interpolation $k_Z(x)^\top k_{ZZ}^{-1} f_Z$, where each $f \in \mathcal{H}_k$ with $\| f \|_{\mathcal{H}_k} \leq 1$ represents an unknown ground-truth whose output $f(x)$ at test input $x$ is to be interpolated from noise-free observations $f_Z = (f(z_1), \dots, f(z_m))^\top$ at the inducing inputs  $Z = (z_1, \dots, z_m)$. 
}

\textcolor{black}{
The second term \eqref{eq:SVGP-var-worst2} is the worst case error of the Nystr\"om KRR prediction (see  \eqref{eq_nystroem}) 
$$k_{Z}(x)^\top (\sigma^2 k_{ZZ} + k_{ZX} k_{XZ})^{-1} k_{ZX} h_X$$   of the output $h(x)$ based on observations $h_X = (h(x_1), \dots, h(x_n))^\top$ at training inputs $X = (x_1, \dots, x_n)$, where each $h \in \mathcal{H}_{q^\sigma}$ with $\left\| h \right\|_{\mathcal{H}_{q^\sigma}} \leq 1$ represents an unknown ground-truth function  plus a noise function: it can be written as $h = f + \xi$ for some $f \in \mathcal{H}_q$ and $\xi  \in \mathcal{H}_{\sigma^2 \delta}$, where $\mathcal{H}_{\sigma^2 \delta}$ is the RKHS of the Kronecker delta kernel  $\sigma^2 \delta (x,x')  $ and $\xi$ is interpreted as a noise function; see \citet[Section 3.4]{kanagawa2018gaussian}; the $\sigma^2$ in the left hand side of \eqref{eq:SVGP-var-worst1} corresponds to the prediction error of this noise component. 
}

%One the worst case error of kernel interpolation in the original RKHS $\mathcal{H}_k$ given noise-free observations at inducing inputs $Z = (z_1, \dots, z_m)$, and (b) the worst case error of the Nystr\"om KRR in the RKHS  $\mathcal{H}_{q^\sigma}$ of the augmented approximate kernel $q^\sigma$ in \eqref{eq:aug-approx-kernel} given noisy observations at training inputs $x_1, \dots, x_n$. 

%The first part (a) becomes large when the test input point $x$ is far from inducing inputs $z_1, \dots, z_m$, and the second part (b) becomes large when the test input $x$ is far from training inputs $x_1, \dots, x_n$.  Note that the part (b) depends on the capacity of the RKHS $\mathcal{H}_{q^\sigma}$, which depends on the inducing inputs $z_1, \dots z_m$ since $q^\sigma$ is defined from the approximate kernel $q$. Therefore, in general, as the number $m$ of inducing inputs increases, the capacity of $\mathcal{H}_{q^\sigma}$ increases and approaches that of the RKHS $\mathcal{H}_{k^\sigma}$ in Theorem \ref{thm_worst_case_error}, and thus the part (b) would also increase. 

\textcolor{black}{
We have discussed the equivalences and connections between the Nystr\"om and SVGP approximations. In the next section, we focus on their theoretical properties, focusing on the quality of approximation. 
}

\section{Connections in Theoretical Properties of Sparse Approximations}\label{chapter_convergence_bounds}

We investigate here connections between the theoretical properties of the Nystr\"om and SVGP approximations.  The Nystr\"om method provides an approximation to the exact KRR solution, and the SVGP approximates the exact GP posterior. The quality of approximation of either approach depends on the choice of inducing inputs $Z = (z_1, \dots, z_m)$. We focus here on theoretical error bounds for the approximation quality of either approach, and investigate how they are related.

For the Nystr\"om approximation, researchers have studied various approaches for subsampling inducing inputs $z_1, \dots, z_m$ from training inputs $x_1, \dots, x_n$ and their theoretical properties. These range from uniform subsampling to subsampling methods based on leverage scores \citep{rudi2015less,musco2016recursive,chen2021fast}, determinantal point processes (DPPs) \citep{li2016fast}, and to ensemble methods \citep{kumar2009ensemble,Kumar12Sampling}. Theoretical works either quantify a (relative) deviation of the approximate kernel matrix from the exact one and its impact on downstream tasks  \cite[e.g.,][]{cortes2010impact,musco2016recursive}, or more directly bound the expected loss of the resulting approximate KRR estimator \cite[e.g.][]{bach2013sharp,alaoui2015fast,rudi2015less}. 
On the other hand, for the SVGP approach, \citet{burt2019rates,BurtJMLR20} provided the first theoretical analysis of its approximation quality. 
\textcolor{black}{
Subsequently, \citet{Nieman22Contraction} extended this analysis to the Frequentist analysis of the contraction of the SVGP posterior to the regression function. \citet{vakili22Improved} conducted an asymptotic analysis of confidence intervals obtained from the SVGP approximation. 
}

%On the other hand, for the SVGP approach, \citet{burt2019rates,BurtJMLR20} recently provided a theoretical analysis of its quality of approximation for the first time. 

\textcolor{black}{
In Section \ref{sec:burt-lemma}, we first discuss a fundamental bound of \citet{BurtJMLR20} on the KL divergence between the approximate and exact GP posteriors.
Section \ref{sec:theory-connection-Nystrom} then discusses a connection between the KL divergence for the SVGP approximation and the excess risk of the Nystr\"om KRR over the exact KRR.  Specifically, we show how an upper bound on the KL divergence leads to an upper bound on the excess risk.  Section \ref{sec:RKHS-error-bound} establishes an upper bound on the approximation error of the Nystr\"om KRR as measured by the RKHS distance. We then describe how this bound leads to upper bounds on the approximation errors of the SVGP posterior mean function and its derivatives. Lastly, Section \ref{sec:lower-bound-approx} describes how a lower bound for the approximation error of the SVGP leads to a lower bound for the approximation error of the Nystr\"om KRR. 
}

\subsection{A Fundamental Result of Burt et al. (2020)} \label{sec:burt-lemma}

We first consider the approximation quality of the SVGP approach.
In particular, we study a fundamental result of Burt (Lemma 3), from which many other results in \citet{BurtJMLR20} are derived. 
As before, let $\mu^*$ and $\Sigma^*$ be the optimal variational parameters in \eqref{eq:optimal-mu} and \eqref{eq:optimal-sigma}, respectively, and let $Z = (z_1, \dots, z_m) \in \mathcal{X}^m$ be $m$ inducing inputs such that the kernel matrix $k_{ZZ}$ is invertible. 
Let $\nu^* := (Z, \mu^*, \Sigma^*)$ and $\Q^{\nu^*} = GP(m^*, k^*)$ be the resulting variational GP posterior with mean function $m^*$ and covariance function $k^*$ in \eqref{eq:mean-variational} and \eqref{eq:cov-variational}, respectively.

A natural metric of quantifying the approximation quality of $\Q^{\nu^*}$ is the KL divergence to the exact GP posterior $\P^{F|y}$, which is given by \eqref{eq_VGPR_ELBO} with $\nu^* = (Z, \mu^*, \Sigma^*)$
\begin{align*}
        &KL \big(\Q^{\nu^*} \ \| \ \P^{F|y} \big) 
        = \log p(y) - \mathcal{L}^*,
\end{align*}
where $p(y)$ is the marginal likelihood and $\mathcal{L}^*$ is the ELBO in \eqref{eq:ELBO-optimal}. 
We know that  \cite[e.g.,][Eq.~(5.8)]{RasmussenWilliams}
\begin{align*}
    \log p(y) & = - \frac{1}{2} \log \det (k_{XX} + \sigma^2 I_n)  - \frac{1}{2} y^\top (k_{XX} + \sigma^2 I_n)^{-1} y - \frac{n}{2} \log 2 \pi 
\end{align*} 
Therefore,
\begin{align}
        &2 KL \big(\Q^{\nu^*} \ \| \ \P^{F|y} \big) 
        = 2 \log p(y) - 2 \mathcal{L}^* \nonumber \\
        & =  - \log \det (k_{XX} + \sigma^2 I_n) + \log \det (q_{XX} + \sigma^2 I_n)  \label{eq:KL-expression}  \\
        &  \quad -  y^\top (k_{XX} + \sigma^2 I_n)^{-1} y+  y^\top (q_{XX} + \sigma^2 I_n)^{-1} y  + \frac{1}{\sigma^2} {\rm tr}(k_{XX} - q_{XX}) \nonumber \\
        & \leq   -  y^\top (k_{XX} + \sigma^2 I_n)^{-1} y +  y^\top (q_{XX} + \sigma^2 I_n)^{-1} y \label{eq:KL-bound-data-terms}  + \frac{1}{\sigma^2} {\rm tr}(k_{XX} - q_{XX}), 
\end{align} 
where the inequality follows from $k_{XX} - q_{XX}$ being positive semi-definite.
\begin{comment}
\footnote{
Geometric proof of $k_{XX} \succeq q_{XX}$:
\begin{align*}
   & k(x,x') - q(x,x') \\
   & = k(x, x') - k_X(x)^\top k_{XX}^{-1} k_X(x') \\
   & = \left< k(\cdot, x ) - P_M(k(\cdot, x)), k(\cdot, x') - P_M(k(\cdot,x')) \right>_{\mathcal{H}_k}
\end{align*}
Therefore, for any $c_1, \dots, c_n \in \mathbb{R}$,
\begin{align*}
    &\sum_{i,j = 1}^n   c_i c_j ( k(x_i,x_j) - q(x_i,x_j) )\\
    & =   \| \sum_{i=1}^n   c_i \left(  k(\cdot, x_i ) - P_M(k(\cdot, x_i)) \right) \|_{\mathcal{H}_k}^2 \geq 0
\end{align*} 
\qed
}

\end{comment}

\citet[Proof of Lemma 3]{BurtJMLR20} proceed to bound the first two terms in \eqref{eq:KL-bound-data-terms} as
\begin{align}
&  - y^\top (k_{XX} + \sigma^2 I_n)^{-1} y +  y^\top (q_{XX} + \sigma^2 I_n)^{-1} y \nonumber \\
& \leq \frac{\| y \|^2 \| k_{XX} - q_{XX} \|_{\rm op}}{\sigma^2 (\| k_{XX} - q_{XX} \|_{\rm op} + \sigma^2 )}  \leq \frac{\| y \|^2 {\rm tr}(k_{XX} - q_{XX} ) }{\sigma^2 ( {\rm tr} (k_{XX} - q_{XX} ) + \sigma^2 )}, \label{eq:bound-difference}
\end{align}
where \textcolor{black}{$\| A \|_{\rm op} := \sup_{ v \in \mathbb{R}^d,  \| v \| \leq 1 } \leq \| A v \|$ denotes the operator norm for any $A \in \R^{n \times n}$.}
%\begin{align}
% \| A \|_{\rm op}=\sup \bigg\{ \frac{\| Av \|}{\| v \|} \, | \,  v \ne 0 \mbox{ and } v \in \R^n \bigg\}  
%\end{align}
%for any matrix $A \in \R^{n \times n}$. 
Thus, we arrive at the following bound \citep[Lemma 3]{BurtJMLR20}: 
\begin{align}
        2 KL \big(\Q^{\nu^*} \ \| \ \P^{F|y} \big) & \leq \frac{ {\rm tr}(k_{XX} - q_{XX})}{\sigma^2} \left(  \frac{\| y \|^2   }{  {\rm tr} (k_{XX} - q_{XX} ) + \sigma^2 }   + 1 \right).  \nonumber \\
        & \leq \frac{ {\rm tr}(k_{XX} - q_{XX})}{\sigma^2} \left(  \frac{\| y \|^2   }{ \sigma^2 }   + 1 \right).  \label{eq:KL-bound} 
\end{align}

This result shows that the KL divergence becomes small if ${\rm tr}(k_{XX} - q_{XX})$ is small. The latter quantity becomes small if, intuitively, the approximate kernel matrix $q_{XX} = k_{XZ}k_{ZZ}^{-1}k_{ZX}$ is close to the exact one $k_{XX}$. 
 
\citet{BurtJMLR20} then establish various results on the KL divergence for the SVGP approximation by i) relating ${\rm tr}(k_{XX} - q_{XX})$ to the eigenvalues of $k_{XX}$ by considering a specific sampling scheme for $Z$, such as DPPs and leverage score sampling, ii) relating the eigenvalues of $k_{XX}$ to those of the corresponding kernel integral operator, and iii) bounding the eigenvalue decays of the integral operator by considering specific choices of the kernel $k$ and the probability distribution of training input points $x_1, \dots, x_n$.

\subsection{Connection to the Nystr\"om KRR}
\label{sec:theory-connection-Nystrom}

We now investigate how the bounds on the KL divergence for the SVGP approach are related to the Nystr\"om KRR. The key is the following lemma, which provides an RKHS interpretation of $y^\top (k_{XX} + \sigma^2 I)^{-1} y$ (and $y^\top (q_{XX} + \sigma^2 I)^{-1} y$) appearing in the bound \eqref{eq:KL-bound-data-terms}.
\begin{lemma} \label{lemma:emp-risk-expression}
Let $k$ be a kernel with RKHS $\mathcal{H}_k$.
Let $X = (x_1, \dots, x_n) \in \mathcal{X}^n$ and $y = (y_1, \dots, y_n)^\top \in \mathbb{R}^n$ be given. 
 Then for any $\sigma^2 > 0$, we have
\begin{align*}
   & y^\top (k_{XX} + \sigma^2 I_n)^{-1} y   = \min_{f \in \mathcal{H}_k} \frac{1}{ \sigma^2 } \sum_{i=1}^n (y_i - f(x_i))^2 +  \| f \|_{\mathcal{H}_k}^2.
\end{align*}    
\end{lemma}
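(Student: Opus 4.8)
The plan is to recognize the right-hand side as a rescaled kernel ridge regression problem and to evaluate its minimum value in closed form.

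First I would note that, after multiplying the KRR objective in \eqref{eq_krr} by $n/\sigma^2$ and setting $\lambda = \sigma^2/n$, the two objective functions coincide; hence the minimizer is the KRR estimator \eqref{eq_form_2} with this $\lambda$, i.e., $f^\star = \sum_{i=1}^n \alpha_i k(\cdot, x_i)$ with $\alpha := (k_{XX} + \sigma^2 I_n)^{-1} y \in \mathbb{R}^n$ (this vector is well defined because $k_{XX} \succeq 0$ and $\sigma^2 > 0$, so $k_{XX} + \sigma^2 I_n \succ 0$). If a self-contained argument is preferred, one can instead orthogonally project an arbitrary $f \in \mathcal{H}_k$ onto $\mathrm{span}(k(\cdot,x_1),\dots,k(\cdot,x_n))$: by the reproducing property this leaves each $f(x_i) = \langle f, k(\cdot,x_i)\rangle_{\mathcal{H}_k}$ unchanged and does not increase $\|f\|_{\mathcal{H}_k}$, so the infimum is attained on that finite-dimensional subspace, and writing $f = \sum_i \alpha_i k(\cdot,x_i)$ reduces the problem to $\min_{\alpha \in \mathbb{R}^n} \sigma^{-2}\|y - k_{XX}\alpha\|^2 + \alpha^\top k_{XX}\alpha$, whose stationarity equation $k_{XX}\big(\alpha - \sigma^{-2}(y - k_{XX}\alpha)\big) = 0$ is solved by $\alpha = (k_{XX}+\sigma^2 I_n)^{-1}y$.

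Next I would substitute $f^\star$ back into the objective. By the reproducing property, $f^\star_X = k_{XX}\alpha$ and $\|f^\star\|_{\mathcal{H}_k}^2 = \alpha^\top k_{XX}\alpha$, so the minimum value equals $\sigma^{-2}\|y - k_{XX}\alpha\|^2 + \alpha^\top k_{XX}\alpha$. Writing $A := k_{XX} + \sigma^2 I_n$, the identity $y - k_{XX}\alpha = \sigma^2 A^{-1}y$ gives $\sigma^{-2}\|y-k_{XX}\alpha\|^2 = \sigma^2 y^\top A^{-2} y$ and $\alpha^\top k_{XX}\alpha = y^\top A^{-1}(A - \sigma^2 I_n)A^{-1}y = y^\top A^{-1}y - \sigma^2 y^\top A^{-2}y$; summing the two cancels the $\sigma^2 y^\top A^{-2} y$ contributions and leaves $y^\top A^{-1} y = y^\top (k_{XX}+\sigma^2 I_n)^{-1} y$, which is the claimed identity.

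I do not expect a genuine obstacle. The only point needing a word of care is that the infimum over the infinite-dimensional space $\mathcal{H}_k$ is attained and lies in $\mathrm{span}(k(\cdot,x_1),\dots,k(\cdot,x_n))$, which is precisely the representer theorem \citep{scholkopf2001generalized} (or the orthogonal-projection observation above); once that is in place, only the elementary linear-algebra simplification remains, and it is routine.
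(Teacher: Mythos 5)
Your proposal is correct and follows essentially the same route as the paper's proof: identify the minimizer as the KRR solution $\alpha = (k_{XX}+\sigma^2 I_n)^{-1}y$ via the representer theorem, substitute it back, and simplify using the identity $k_{XX}(k_{XX}+\sigma^2 I_n)^{-1} = I_n - \sigma^2(k_{XX}+\sigma^2 I_n)^{-1}$. Your algebra is slightly more streamlined than the paper's term-by-term expansion, and the optional projection argument is a harmless extra, but the substance is identical.
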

\begin{proof}
The proof can be found in Appendix \ref{sec:proof-lemma-emp-risk-expression}.
\end{proof}

\textcolor{black}{
Lemma \ref{lemma:emp-risk-expression} shows that $y^\top (k_{XX} + \sigma^2 I_n)^{-1}y$ is  the minimum of a scaled version of  the KRR objective function \eqref{eq_krr} with $\sigma^2 = n \lambda$. Since the minimum is attained by the KRR solution $\hat{f} = k_X(\cdot)^\top (k_{XX} + \sigma^2 I_n)^{-1} y$, Lemma \ref{lemma:emp-risk-expression} thus implies that, for $\sigma^2 = n \lambda$,  
\begin{align*}
& y^\top (k_{XX} + \sigma^2 I_n)^{-1} y  = \frac{1}{\sigma^2} \sum_{i=1}^n (y_i - \hat{f}(x_i))^2 +  \| \hat{f} \|_{\mathcal{H}_k}^2 = n R_n (\hat{f}; y),  
\end{align*}
where we define $R_n (f; y)$ for any $f:\mathcal{X} \mapsto \mathbb{R}$  as the regularized empirical risk of KRR:
$$
R_n(f; y) := \frac{1}{n} \sum_{i=1}^n (y_i  - f(x_i))^2 + \lambda \| f \|_{\mathcal{H}_k}^2
$$
%Lemma \ref{lemma:emp-risk-expression} shows that $y^\top (k_{XX} + \sigma^2 I_n)^{-1}y$ is a scaled version of the KRR objective function in \eqref{eq_krr} with $\sigma^2 = n \lambda$. Since the solution to the KRR is given by  $\hat{f} = k_X(\cdot)^\top (k_{XX} + \sigma^2 I_n)^{-1} y$, Lemma \ref{lemma:emp-risk-expression} implies that 
%\begin{align*}
%& y^\top (k_{XX} + \sigma^2 I_n)^{-1} y  = \frac{1}{\sigma^2} \sum_{i=1}^n (y_i - \hat{f}(x_i))^2 +  \| \hat{f} \|_{\mathcal{H}_k}^2,  
%\end{align*}
Similarly, Lemma \ref{lemma:emp-risk-expression},  Theorem \ref{thm_nystroem} and Lemma \ref{lemma_M=H_q} imply that $y^\top (q_{XX} + \sigma^2 I_n) y$ is proportional to the KRR objective value of the Nystr\"om KRR solution $\bar{f} = k_{Z}(\cdot)^\top (\sigma^2 k_{ZZ} + k_{ZX} k_{XZ})^{-1} k_{ZX}y$:
%Similarly, Lemma \ref{lemma:emp-risk-expression} implies that $y^\top (q_{XX} + \sigma^2 I_n) y$ can be written as
\begin{align*}
    y^\top (q_{XX} + \sigma^2 I_n)^{-1} y  
  & = \min_{f \in \mathcal{H}_q} \frac{1}{\sigma^2} \sum_{i=1}^n (y_i  - \bar{f}(x_i))^2 + \| f \|_{\mathcal{H}_q}^2 \\
  & = \frac{1}{\sigma^2} \sum_{i=1}^n (y_i  - \bar{f}(x_i))^2 + \| \bar{f} \|_{\mathcal{H}_k}^2 = n R(\bar{f}; y).
\end{align*}
}
%where $\bar{f} = k_{Z}(\cdot)^\top (\sigma^2 k_{ZZ} + k_{ZX} k_{XZ})^{-1} k_{ZX}y$ is the Nystr\"om approximation by Theorem \ref{thm_nystroem}, and we used $\| \bar{f} \|_{\mathcal{H}_k} = \| \bar{f} \|_{\mathcal{H}_q}$ from Lemma \ref{lemma_M=H_q}. Therefore,  $y^\top (q_{XX} + \sigma^2 I)^{-1} y$ is a scaled version of the regularized empirical risk of the Nystr\"om KRR.

\textcolor{black}{
Therefore the quantity $y^\top (q_{XX} + \sigma^2 I_n)^{-1} y - y^\top (k_{XX} + \sigma^2 I_n)^{-1} y$ appearing in the KL divergence \eqref{eq:KL-expression} can be written as the difference between the KRR objectives for the Nystr\"om and exact solutions:
\begin{align}
&  y^\top (q_{XX} + \sigma^2 I_n)^{-1} y - y^\top (k_{XX} + \sigma^2 I_n)^{-1} y  \label{eq:difference-KRR-objective} \\
& =  n \left( R_n(\bar{f}, y) -  R_n(\hat{f}, y) \right) =  n\left( \min_{f \in M} R_n(f, y) - \min_{f \in \mathcal{H}_k} R_n(f; y) \right) \geq 0  \nonumber,
\end{align}
where the last identity follows from \eqref{eq_krr} and \eqref{eq:Nystrom-opt} and the inequality  from $M \subset \mathcal{H}_k$.
Thus, one can understand the difference \eqref{eq:difference-KRR-objective} as the ``excess risk'' of the Nystr\"om KRR estimator  $\bar{f}$ over the exact KRR estimator  $\hat{f}$. In this sense, it measures the approximation quality of the Nyst\"om KRR. 
 }

%Note that the difference \eqref{eq:difference-KRR-objective} is non-negative since the Nystr\"om approximation is obtained from the subspace $M$ of the RKHS $\mathcal{H}_k$; 
%\begin{align*}
 %\eqref{eq:difference-KRR-objective} =   \min_{f \in M} A(f) - \min_{f \in \mathcal{H}_k} A(f) \geq 0
%end{align*}
%where $A(f) :=  \left( \frac{1}{\sigma^2} \sum_{i=1}^n (y_i  - f(x_i))^2 + \| f \|_{\mathcal{H}_k}^2 \right) $. We can interpret the difference \eqref{eq:difference-KRR-objective} as quantifying the accuracy of the Nystr\"om approximation to the exact KRR solution. That is, if \eqref{eq:difference-KRR-objective} is small, the Nystr\"om approximation is accurate in the sense that the approximation makes the objective function as small as the exact solution. 

%Therefore, the first two terms in the upper bound \eqref{eq:KL-bound-data-terms} of the KL divergence can be written as
%\begin{align}
%&  y^\top (q_{XX} + \sigma^2 I_n)^{-1} y - y^\top (k_{XX} + \sigma^2 I_n)^{-1} y   \nonumber \\
%& =  \left( \frac{1}{\sigma^2} \sum_{i=1}^n (y_i  - \bar{f}(x_i))^2 + \| \bar{f} \|_{\mathcal{H}_k}^2 \right)  - \left( \frac{1}{\sigma^2} \sum_{i=1}^n (y_i  - \hat{f}(x_i))^2 + \| \hat{f} \|_{\mathcal{H}_k}^2 \right)    \label{eq:difference-KRR-objective}
%\end{align}
%which is essentially the difference between the KRR objectives for the Nystr\"om and exact estimators. 

Now, combining the bounds in \eqref{eq:bound-difference} and the expression \eqref{eq:difference-KRR-objective}, we immediately have the corresponding bounds on the ``excess risk'' of the Nystr\"om KRR.  

%This lemma allows us to given an RKHS interpretation of both quantities on the LHS of \eqref{eq:bound-difference} and allows to translate Lemma 3 of \citet{BurtJMLR20} to a result about the difference in the regularised Risk between exact KRR and Nystr\"om KRR.
\begin{corollary} \label{coro:bound-difference}
Let $k$ be a kernel with RKHS $\mathcal{H}_k$.
Let $X = (x_1, \dots, x_n) \in \mathcal{X}^n$ and $y = (y_1, \dots, y_n)^\top \in \mathbb{R}^n$ be given, and let $Z = (z_1, \dots, z_m) \in \mathcal{X}^m$ be such that the kernel matrix $k_{ZZ} \in \mathbb{R}^{m \times m}$ is invertible. 
  Let $\bar{f}$ and $\hat{f}$ be the Nystr\"om and exact KRR estimators in \eqref{eq_nystroem} and \eqref{eq:KRR-compact}, respectively, with a regularization constant $\lambda > 0$.
  Then we have
    \begin{align*}
  R_n(\bar{f}; y) - R_n(\hat{f}; y) 
 & \leq \frac{\| y \|^2 \| k_{XX} - q_{XX} \|_{\rm op}}{n^2 \lambda (\| k_{XX} - q_{XX} \|_{\rm op} + n \lambda )}   \leq \frac{\| y \|^2 {\rm tr}(k_{XX} - q_{XX} ) }{n^2 \lambda ( {\rm tr} (k_{XX} - q_{XX} ) +  n \lambda )},
\end{align*}
where $R_n(f; y) := \frac{1}{n} \sum_{i=1}^n (y_i  - f(x_i))^2 + \lambda \| f \|_{\mathcal{H}_k}^2$.
\end{corollary}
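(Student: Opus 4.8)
The plan is to assemble the statement from two facts already established in this section, under the calibration $\sigma^2 = n\lambda$. The first is the identity \eqref{eq:difference-KRR-objective}, which rewrites the difference of quadratic forms as $n$ times the excess regularized risk:
\[
y^\top (q_{XX} + \sigma^2 I_n)^{-1} y - y^\top (k_{XX} + \sigma^2 I_n)^{-1} y = n\big( R_n(\bar{f}; y) - R_n(\hat{f}; y) \big).
\]
The second is the matrix-perturbation bound \eqref{eq:bound-difference} of \citet{BurtJMLR20}, which controls the same difference of quadratic forms by $\tfrac{\| y \|^2 \| k_{XX} - q_{XX} \|_{\rm op}}{\sigma^2 (\| k_{XX} - q_{XX} \|_{\rm op} + \sigma^2 )}$, and in turn by $\tfrac{\| y \|^2 {\rm tr}(k_{XX} - q_{XX} ) }{\sigma^2 ( {\rm tr} (k_{XX} - q_{XX} ) + \sigma^2 )}$.

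Concretely, I would set $\sigma^2 = n\lambda$, combine the two displays above to obtain $n\big( R_n(\bar{f}; y) - R_n(\hat{f}; y) \big) \le \tfrac{\| y \|^2 \| k_{XX} - q_{XX} \|_{\rm op}}{n\lambda (\| k_{XX} - q_{XX} \|_{\rm op} + n\lambda )}$, and then divide both sides by $n$; this yields the first claimed inequality verbatim. The second inequality, replacing the operator norm by the trace, follows because $t \mapsto \tfrac{t}{t + n\lambda}$ is monotonically increasing on $(0,\infty)$ while $\| k_{XX} - q_{XX} \|_{\rm op} \le {\rm tr}(k_{XX} - q_{XX})$, the latter holding since $k_{XX} - q_{XX}$ is positive semidefinite (it is the Gram matrix of the residuals $k(\cdot, x_i) - P_M(k(\cdot, x_i))$; cf.\ Section \ref{sec:charac_approx_kernel}). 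The nonnegativity of the left-hand side is of course automatic from $M \subset \mathcal{H}_k$, consistent with the ``excess risk'' reading.

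I do not expect a substantial obstacle here, because the real content has already been done upstream: the conceptual bridge is Lemma \ref{lemma:emp-risk-expression}, combined with Theorem \ref{thm_nystroem} and Lemma \ref{lemma_M=H_q} applied to the approximate kernel $q$, which identifies $y^\top (k_{XX} + \sigma^2 I_n)^{-1} y$ and $y^\top (q_{XX} + \sigma^2 I_n)^{-1} y$ with the (scaled) KRR objective values of $\hat{f}$ and $\bar{f}$; once that translation is in place the corollary is purely algebraic. The only points needing care are the bookkeeping of the $\sigma^2 = n\lambda$ substitution and the final division by $n$. If one wanted a self-contained argument rather than citing \eqref{eq:bound-difference}, the extra step would be to write $B^{-1} - A^{-1} = B^{-1}(A-B)A^{-1}$ with $A = k_{XX} + \sigma^2 I_n \succeq B = q_{XX} + \sigma^2 I_n \succeq \sigma^2 I_n$ and bound the resulting quadratic form $y^\top B^{-1}(k_{XX}-q_{XX})A^{-1}y$ using $\|A^{-1}\|_{\rm op} \le \sigma^{-2}$ and $\|B^{-1}(k_{XX}-q_{XX})\|_{\rm op} \le \tfrac{\|k_{XX}-q_{XX}\|_{\rm op}}{\|k_{XX}-q_{XX}\|_{\rm op}+\sigma^2}$; but since the excerpt already cites this from \citet{BurtJMLR20}, I would simply invoke it.
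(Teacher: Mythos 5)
Your proposal is correct and follows exactly the paper's own proof: combine the identity \eqref{eq:difference-KRR-objective} with the bound \eqref{eq:bound-difference} under $\sigma^2 = n\lambda$, then divide by $n$. The additional remarks on monotonicity of $t \mapsto t/(t+n\lambda)$ and the optional self-contained perturbation argument are fine but not needed beyond what the paper does.
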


\textcolor{black}{
\begin{proof} 
Using  \eqref{eq:bound-difference} and \eqref{eq:difference-KRR-objective} and setting $\sigma^2 = n \lambda$, we have
\begin{align*}
 n \left( R_n(\bar{f}, y) -  R_n(\hat{f}, y) \right)  = &  - y^\top (k_{XX} + n \lambda I_n)^{-1} y +  y^\top (q_{XX} + n \lambda I_n)^{-1} y \nonumber \\
& \leq \frac{\| y \|^2 \| k_{XX} - q_{XX} \|_{\rm op}}{n \lambda(\| k_{XX} - q_{XX} \|_{\rm op} + n \lambda )}  \leq \frac{\| y \|^2 {\rm tr}(k_{XX} - q_{XX} ) }{n \lambda ( {\rm tr} (k_{XX} - q_{XX} ) + n\lambda )}. 
\end{align*}
The assertion follows immediately.
%Thus the corollary follows immediately from \eqref{eq:bound-difference} by multiplying both sides by $\sigma^2/n$ and setting $\sigma^2 = n \lambda$.
\end{proof}
}

\textcolor{black}{
We have shown that the KL divergence \eqref{eq:KL-expression} for the SVGP approximation contains the excess risk of the Nystr\"om KRR \eqref{eq:difference-KRR-objective}. Using this connection and bounds for the KL divergence from \citet{BurtJMLR20}, we have derived corresponding bounds for the excess risk of the Nystr\"om KRR. This demonstrates how one can translate results on GPs to RKHS-based methods. 
}

%As a result of the equivalence established in \ref{coro:bound-difference} the results in \citet{BurtJMLR20} on the KL divergence for the SVGP based on \citet[Lemma 3]{BurtJMLR20} can be directly translated to the corresponding results for the Nystr\"om KRR. This is one useful consequence of the RKHS interpretation. 

On the other hand, since we now know that there exist KRR objective functions in the expression \eqref{eq:KL-expression} of the KL divergence, it may be possible to use more sophisticated theoretical arguments for the Nystr\"om KRR \cite[e.g.][]{bach2013sharp,alaoui2015fast,rudi2015less,chen2021fast} to obtain sharper bounds on the KL divergence for the SVGP approximation. This investigation is reserved for future research.

\subsection{RKHS Error Bound for Nystr\"om KRR and its Implications to SVGP} 
\label{sec:RKHS-error-bound}

We present here an upper bound on the RKHS distance between the Nystr\"om and exact KRR estimators, which is novel to the best of our knowledge. 
We will apply this bound to obtain error bounds for the SVGP posterior mean function and its {\em derivatives}. 
%This bound shares a structural similarity to the bound  \eqref{eq:KL-bound} of \citet[Lemma 3]{BurtJMLR20} on the KL divergence between the SVGP and exact GP posteriors.
The bound is summarized below, whose proof can be found in Appendix \ref{sec:Proof-nystrom-RKHS-error}.

\begin{theorem} \label{thm_a_posteriori_bound} \label{theo:nystrom-RKHR-error}
Let $k$ be a kernel with RKHS $\mathcal{H}_k$.
Let $X = (x_1, \dots, x_n) \in \mathcal{X}^n$ and $y = (y_1, \dots, y_n)^\top \in \mathbb{R}^n$ be given, and let $Z = (z_1, \dots, z_m) \in \mathcal{X}^m$ be such that the kernel matrix $k_{ZZ} \in \mathbb{R}^{m \times m}$ is invertible. 
  Let $\bar{f}$ and $\hat{f}$ be the Nystr\"om and exact KRR estimators in \eqref{eq_nystroem} and \eqref{eq:KRR-compact}, respectively, with a regularization constant $\lambda > 0$.
  Then we have
\begin{equation*}
    \| \hat{f} - \bar{f} \|_{\mathcal{H}_k}^2 \le \frac{2~{\rm tr}(k_{XX}-q_{XX}) \|y\|^2}{(n \lambda)^2} .
\end{equation*}
\end{theorem}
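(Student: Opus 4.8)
The plan is to obtain an \emph{exact} expression for $\|\hat f - \bar f\|_{\mathcal{H}_k}^2$ by exploiting the ``residual form'' shared by both estimators, and then to bound the single term that survives via Cauchy--Schwarz together with elementary operator-norm estimates.

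First I would record two preliminary facts. Writing the exact KRR estimator as $\hat f = \sum_{i=1}^n \alpha_i k(\cdot, x_i)$ with $\alpha = (k_{XX}+n\lambda I_n)^{-1}y$, the linear system gives $n\lambda\,\alpha = y - \hat f_X$. By Theorem \ref{thm_nystroem} and Lemma \ref{lemma_M=H_q}, the Nystr\"om estimator $\bar f$ equals the KRR estimator built from the approximate kernel $q$; hence, by the representer theorem, $\bar f = \sum_{i=1}^n \beta_i\, q(\cdot, x_i)$ with $\beta = (q_{XX}+n\lambda I_n)^{-1}y$ and $n\lambda\,\beta = y - \bar f_X$, while $q(\cdot, x_i) = P_M(k(\cdot, x_i))$ by \eqref{eq:projection-form} and \eqref{eq_approx_q}. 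Subtracting the two residual identities yields the key cancellation $\alpha - \beta = -\tfrac{1}{n\lambda}(\hat f_X - \bar f_X)$.

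Next I would expand $\|\hat f - \bar f\|_{\mathcal{H}_k}^2 = \langle \hat f - \bar f, \hat f\rangle_{\mathcal{H}_k} - \langle \hat f - \bar f, \bar f\rangle_{\mathcal{H}_k}$. The reproducing property gives $\langle \hat f - \bar f, \hat f\rangle = \alpha^\top(\hat f_X - \bar f_X)$, and using $q(\cdot,x_i) = P_M k(\cdot, x_i)$, the self-adjointness of $P_M$, and $P_M\bar f = \bar f$, gives $\langle \hat f - \bar f, \bar f\rangle = \beta^\top\big((P_M\hat f)_X - \bar f_X\big)$. Setting $s := \hat f_X - (P_M\hat f)_X \in \mathbb{R}^n$, the reproducing property together with $k(\cdot,x_j) - P_M k(\cdot,x_j)\perp M$ gives $s_i = \langle \hat f, k(\cdot,x_i) - P_M k(\cdot,x_i)\rangle = \big((k_{XX}-q_{XX})\alpha\big)_i$, i.e.\ $s = (k_{XX}-q_{XX})\alpha$. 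Combining these with $\alpha-\beta = -\tfrac{1}{n\lambda}(\hat f_X - \bar f_X)$ and $\beta = \tfrac{1}{n\lambda}(y-\bar f_X)$ yields
\[
\|\hat f - \bar f\|_{\mathcal{H}_k}^2 \;=\; (\alpha-\beta)^\top(\hat f_X - \bar f_X) + \beta^\top s \;=\; -\tfrac{1}{n\lambda}\|\hat f_X - \bar f_X\|^2 + \tfrac{1}{n\lambda}(y-\bar f_X)^\top(k_{XX}-q_{XX})\alpha .
\]

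Finally I would drop the nonpositive term and estimate the remaining one: by Cauchy--Schwarz, $(y-\bar f_X)^\top(k_{XX}-q_{XX})\alpha \le \|y-\bar f_X\|\,\|k_{XX}-q_{XX}\|_{\rm op}\,\|\alpha\|$, and then (i) $\|\alpha\|\le \|y\|/(n\lambda)$ since $\|(k_{XX}+n\lambda I_n)^{-1}\|_{\rm op}\le 1/(n\lambda)$; (ii) $\|k_{XX}-q_{XX}\|_{\rm op}\le {\rm tr}(k_{XX}-q_{XX})$ because $k_{XX}-q_{XX}$ is positive semidefinite (it has the Gram representation $k(x_i,x_j)-q(x_i,x_j) = \langle k(\cdot,x_i)-P_M k(\cdot,x_i),\,k(\cdot,x_j)-P_M k(\cdot,x_j)\rangle_{\mathcal{H}_k}$, cf.\ Section \ref{sec:burt-lemma}); and (iii) $\|y-\bar f_X\|\le \|y\|+\|\bar f_X\|\le 2\|y\|$ since $\bar f_X = q_{XX}(q_{XX}+n\lambda I_n)^{-1}y$ and $\|q_{XX}(q_{XX}+n\lambda I_n)^{-1}\|_{\rm op}\le 1$. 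Putting these together gives $\|\hat f - \bar f\|_{\mathcal{H}_k}^2 \le \frac{2\,{\rm tr}(k_{XX}-q_{XX})\|y\|^2}{(n\lambda)^2}$. The main obstacle is the bookkeeping behind the exact identity above — pinning down $s = (k_{XX}-q_{XX})\alpha$ and the cancellation $\alpha-\beta = -\frac{1}{n\lambda}(\hat f_X-\bar f_X)$ — after which only a one-line estimate remains; a purely computational alternative would expand $\|\hat f - \bar f\|_{\mathcal{H}_k}^2 = \alpha^\top k_{XX}\alpha - 2\alpha^\top k_{XZ}\gamma + \gamma^\top k_{ZZ}\gamma$ directly from the closed forms \eqref{eq:KRR-compact} and \eqref{eq_nystroem} (with $\gamma$ the coefficient vector of $\bar f$), but this is messier and hides the structure.
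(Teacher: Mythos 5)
Your argument is correct, and it reaches the stated bound by a genuinely different route than the paper. The paper's proof (Appendix \ref{sec:Proof-nystrom-RKHS-error}) expands $\|\hat f-\bar f\|_{\mathcal{H}_k}^2=\|\hat f\|_{\mathcal{H}_k}^2-2\langle\hat f,\bar f\rangle_{\mathcal{H}_k}+\|\bar f\|_{\mathcal{H}_q}^2$, rewrites everything as traces of products of $K\tilde K^{-1}$, $Q\tilde Q^{-1}$, $yy^\top$ and the resolvents, regroups into two trace terms, discards ${\rm tr}\big(Q(\tilde K^{-1}-\tilde Q^{-1})\big)\le 0$, and bounds each surviving term by $\tfrac{1}{(n\lambda)^2}{\rm tr}(K-Q)\|y\|^2$ via ${\rm tr}(AB)\le{\rm tr}(A)\|B\|_{\rm op}$; the factor $2$ arises from summing the two terms. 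You instead exploit the residual identities $n\lambda\alpha=y-\hat f_X$ and $n\lambda\beta=y-\bar f_X$ together with the self-adjointness of $P_M$ to obtain the exact decomposition
\begin{equation*}
\|\hat f-\bar f\|_{\mathcal{H}_k}^2=-\tfrac{1}{n\lambda}\|\hat f_X-\bar f_X\|^2+\tfrac{1}{n\lambda}(y-\bar f_X)^\top(k_{XX}-q_{XX})\alpha,
\end{equation*}
drop the manifestly nonpositive term, and finish with Cauchy--Schwarz; here the factor $2$ comes from $\|y-\bar f_X\|\le 2\|y\|$. I checked the key steps: $s=(k_{XX}-q_{XX})\alpha$ follows from $\langle k(\cdot,x_j),P_Mk(\cdot,x_i)\rangle_{\mathcal{H}_k}=q(x_i,x_j)$, and the cancellation $\alpha-\beta=-\tfrac{1}{n\lambda}(\hat f_X-\bar f_X)$ is immediate from the two residual identities, so the identity is exact. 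Your version buys a cleaner structural picture (an explicit nonpositive correction term plus a single error term governed by the projection residual $k_{XX}-q_{XX}$, which could in principle be kept as a sharper intermediate bound), at the cost of slightly more bookkeeping up front; the paper's version is more mechanical but requires only the two generic trace inequalities. Both are valid proofs of the theorem as stated.
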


The upper-bound takes a similar form as the bound \eqref{eq:KL-bound} on KL divergence for the SVGP approximation in terms of the dependence on $ {\rm tr}(k_{XX}-q_{XX})$, $\| y \|^2$ and $\sigma^2 = n \lambda$. %Similar to our discussion on Corollary \ref{coro:bound-difference}, one can thus translate the results from \citet{BurtJMLR20} to the correspond bounds on the RKHS distance between the Nystr\"om and exact KRR estimators. 

By the equivalence between the KRR estimator $\hat{f}$ and the GP posterior mean function $m^*$, and that of the Nystr\"om KRR estimator $\bar{f}$ and the SVGP posterior mean function $\bar{m}$ in Theorem  \ref{theo:equivalence-pred}, Theorem \ref{thm_a_posteriori_bound} directly leads to the corresponding bound for the SVGP posterior mean function, as summarize as follows.

%By the equivalence results in Theorems \ref{theo:equiv-KRR-GPR} and  \ref{theo:equivalence-pred}, we immediately obtain the following corollary for the SVGP and exact GP posterior mean functions:
\begin{corollary} \label{coro:SVGP-mean}
Let $k$ be a kernel with RKHS $\mathcal{H}_k$.
Let $X = (x_1, \dots, x_n) \in \mathcal{X}^n$ and $y = (y_1, \dots, y_n)^\top \in \mathbb{R}^n$ be given, and let $Z = (z_1, \dots, z_m) \in \mathcal{X}^m$ be such that the kernel matrix $k_{ZZ} \in \mathbb{R}^{m \times m}$ is invertible. 
Let $m^*$ and $\bar{m}$ be the SVGP and exact GP posterior mean functions in \eqref{eq_optimal_m} and \eqref{eq_bar_m}, respectively, with  prior $F \sim GP(0,k)$ and likelihood model \eqref{eq:likelihood} with noise variance $\sigma^2 > 0$. 
Then we have
\begin{align*}
    \| \bar{m}  - m^*\|_{\mathcal{H}_k}^2 \le \frac{2~{\rm tr}(k_{XX}-q_{XX}) \|y\|^2}{\sigma^4} .    
\end{align*}    
\end{corollary}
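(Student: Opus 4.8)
The plan is to derive the bound directly from Theorem~\ref{thm_a_posteriori_bound} together with the two equivalences relating GP posterior mean functions to KRR estimators. First I would fix the regularization constant of KRR to be $\lambda := \sigma^2 / n$, so that the condition $\sigma^2 = n\lambda$ required by Theorem~\ref{theo:equivalence-pred}, and by the KRR--GPR equivalence recalled in Section~\ref{sec:connections_exact}, is satisfied. Under this choice, the exact GP posterior mean function $\bar m$ in \eqref{eq_bar_m} with prior $F \sim GP(0,k)$ coincides with the exact KRR estimator $\hat f$ in \eqref{eq:KRR-compact}, and the SVGP posterior mean function $m^*$ in \eqref{eq_optimal_m} coincides with the Nyström KRR estimator $\bar f$ in \eqref{eq_nystroem} by Theorem~\ref{theo:equivalence-pred}. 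Hence $\|\bar m - m^*\|_{\mathcal{H}_k} = \|\hat f - \bar f\|_{\mathcal{H}_k}$, reducing the claim to a statement purely about the two KRR estimators.

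Second, I would invoke Theorem~\ref{thm_a_posteriori_bound} for the pair $(\hat f, \bar f)$ with this choice of $\lambda$ (its hypothesis, invertibility of $k_{ZZ}$, is exactly the hypothesis assumed in the present statement), which gives $\|\hat f - \bar f\|_{\mathcal{H}_k}^2 \le 2\,{\rm tr}(k_{XX}-q_{XX})\|y\|^2 / (n\lambda)^2$. Substituting $n\lambda = \sigma^2$, so that $(n\lambda)^2 = \sigma^4$, and combining with the identity $\|\bar m - m^*\|_{\mathcal{H}_k} = \|\hat f - \bar f\|_{\mathcal{H}_k}$ from the previous step yields the asserted bound $\|\bar m - m^*\|_{\mathcal{H}_k}^2 \le 2\,{\rm tr}(k_{XX}-q_{XX})\|y\|^2 / \sigma^4$.

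There is essentially no genuine obstacle in this corollary: all the analytic work is carried out in the proof of Theorem~\ref{thm_a_posteriori_bound} (deferred to Appendix~\ref{sec:Proof-nystrom-RKHS-error}), and the corollary is merely a transcription of that bound through the mean/KRR equivalences together with the reparametrization $\lambda = \sigma^2/n$. The only point deserving a brief remark is that the assumed invertibility of $k_{ZZ}$ is precisely what makes the Nyström estimator \eqref{eq_nystroem}, the SVGP variational family \eqref{eq_VGP_par_fam}, and hence $m^*$ well defined, so the hypotheses of all the invoked results align without any additional assumption; if one wishes to avoid prescribing $\lambda$, one can instead phrase Theorem~\ref{thm_a_posteriori_bound} directly for the GP quantities with noise variance $\sigma^2$ in place of $n\lambda$.
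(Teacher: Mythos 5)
Your proposal is correct and follows exactly the paper's route: the paper derives the corollary by combining Theorem~\ref{theo:equivalence-pred} (SVGP mean $=$ Nystr\"om KRR) and the KRR--GPR equivalence of Section~\ref{sec:connections_exact} under $\sigma^2 = n\lambda$, then substituting into the bound of Theorem~\ref{thm_a_posteriori_bound} so that $(n\lambda)^2 = \sigma^4$. No gaps; the identification of the hypotheses (invertibility of $k_{ZZ}$) is handled just as in the paper.
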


Note that the RKHS distance is stronger than the supremum norm between two functions. 
In fact, by the reproducing property, it can be shown that
\begin{align*}
( \bar{f}(x) - \hat{f}(x) )^2 \leq \| \bar{f} - \hat{f} \|_{\mathcal{H}_k}^2  k(x,x), \quad \forall x \in \mathcal{X}.
\end{align*}
Moreover, if the kernel $k$ is smooth, then the RKHS distance upper-bounds the derivatives of the RKHS functions. To describe this, let $\mathcal{X} \subset \mathbb{R}^d$ be an open set. Suppose that the kernel $k$ is continuously differentiable\footnote{Many commonly used kernels, such as the Gaussian kernel, satisfy this requirement. } on $\mathcal{X}$ in the sense that, for any $j = 1, \dots, d$, the partial derivative $\partial_j \partial'_j  k(x, x')$ exists and is continuous on $\mathcal{X}$, where $\partial_j$ and $\partial'_j $ denote the partial derivatives with respect to the $j$-th coordinate of the first and second arguments of $k(x,x')$, respectively. 
Then \citet[Corollary 4.36]{SteChr2008} implies that, for all $j=1,\dots,d$ and all $x \in \mathcal{X}$,
\begin{align*}
(  \partial_j \bar{f}(x) - \partial_j \hat{f}(x) )^2 \leq \| \bar{f} - \hat{f} \|_{\mathcal{H}_k}^2 \partial_j \partial'_j  k(x, x) ,  
\end{align*}
Thus, the bound in Theorem \ref{theo:nystrom-RKHR-error} implies that, if ${\rm tr}(k_{XX} - q_{XX})$ is small, then the partial derivatives (and thus the gradients) of the Nystr\"om KRR approximate well those of the exact KRR. 
By the same argument and Corollary \ref{coro:SVGP-mean}, we immediately obtain the following corollary on the equivalent result for the SVGP approximation.

\begin{corollary}
Suppose the same notation and assumptions in Corollary \ref{coro:SVGP-mean}. 
Let $\mathcal{X} \subset \mathbb{R}^d$ be an open set and assume that $k$ is continuously differentiable on $\mathcal{X}$. 
Then we have
for all $j=1,\dots,d$ and all $x \in \mathcal{X}$,
\begin{align*}
& ( \partial_j m^*(x) - \partial_j \bar{m}(x)  )^2  \leq  \frac{2~{\rm tr}(k_{XX}-q_{XX}) \|y\|^2 \partial_j \partial'_j  k(x, x)}{\sigma^4}.
\end{align*}
\end{corollary}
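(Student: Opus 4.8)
The plan is to combine the RKHS-distance bound of Corollary \ref{coro:SVGP-mean} with the derivative-reproducing inequality already recorded in the excerpt just above the statement. Both $m^*$ and $\bar m$ lie in $\mathcal{H}_k$: indeed, by Theorem \ref{theo:equivalence-pred} (with $\sigma^2 = n\lambda$) we have $m^* = \bar f$, the Nyst\"om KRR estimator, which is a finite linear combination of $k(\cdot,z_1),\dots,k(\cdot,z_m)\in\mathcal{H}_k$; and by the equivalence between exact GPR and KRR recalled in Section \ref{sec:connections_exact}, $\bar m = \hat f\in\mathcal{H}_k$. Hence the difference $g := m^* - \bar m$ is an element of $\mathcal{H}_k$, and since partial differentiation is a linear operation, $\partial_j m^*(x) - \partial_j \bar m(x) = \partial_j g(x)$ for every $j = 1,\dots,d$ and $x\in\mathcal{X}$.

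First I would invoke \citet[Corollary 4.36]{SteChr2008}, exactly as stated in the excerpt, which under the assumed continuous differentiability of $k$ on the open set $\mathcal{X}\subset\mathbb{R}^d$ gives, for all $j$ and all $x\in\mathcal{X}$,
\begin{align*}
\big(\partial_j m^*(x) - \partial_j \bar m(x)\big)^2 = \big(\partial_j g(x)\big)^2 \;\le\; \| g \|_{\mathcal{H}_k}^2 \, \partial_j \partial'_j k(x,x) = \| m^* - \bar m \|_{\mathcal{H}_k}^2 \, \partial_j \partial'_j k(x,x).
\end{align*}
This is the same step applied to $\bar f - \hat f$ in the displayed inequality preceding the corollary; the only change is relabelling via the two equivalences above, so no new computation is needed.

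Next I would substitute the bound from Corollary \ref{coro:SVGP-mean}, namely $\| \bar m - m^* \|_{\mathcal{H}_k}^2 \le \dfrac{2\,{\rm tr}(k_{XX}-q_{XX})\,\|y\|^2}{\sigma^4}$, into the right-hand side, which yields
\begin{align*}
\big(\partial_j m^*(x) - \partial_j \bar m(x)\big)^2 \;\le\; \frac{2\,{\rm tr}(k_{XX}-q_{XX})\,\|y\|^2\,\partial_j \partial'_j k(x,x)}{\sigma^4},
\end{align*}
which is precisely the claimed inequality, valid for all $j = 1,\dots,d$ and all $x\in\mathcal{X}$.

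There is no substantial obstacle here: the corollary is a direct composition of two already-established facts. The only point requiring a line of justification is that $g = m^* - \bar m$ genuinely belongs to $\mathcal{H}_k$ (so that Corollary 4.36 of \citet{SteChr2008} applies to it), which is handled by the two equivalences noted in the first paragraph; everything else is substitution.
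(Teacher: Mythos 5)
Your proposal is correct and follows essentially the same route as the paper: the paper likewise obtains this corollary by applying \citet[Corollary 4.36]{SteChr2008} to the RKHS element $m^* - \bar{m}$ and then substituting the bound from Corollary \ref{coro:SVGP-mean}. Your additional remark verifying that $m^* - \bar{m} \in \mathcal{H}_k$ via the KRR/GPR equivalences is a reasonable (if brief) justification that the paper leaves implicit.
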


This shows that the SVGP can approximate not only the exact posterior mean function but also its derivatives, if ${\rm tr}(k_{XX}-q_{XX})$ is small. In applications where the derivative estimates are used (e.g., see \citealt{Jian17BO-gradients}), this result provides a support for using the SVGP approximation in place of the exact GP posterior means of derivatives.

\subsection{Lower Bounds for Approximation Errors} 
\label{sec:lower-bound-approx}

Lastly, we describe how lower bounds for the SVGP approximation lead to lower bounds for the Nystr\"om approximation. 
We discuss lower bounds for the average case errors of sparse approximations, by assuming a probabilistic model for training outputs $y = (y_1, \dots, y_n)^\top$. As before, we fix training inputs $X = (x_1, \dots, x_n)$ and inducing inputs $Z = (z_1, \dots, z_m)$. 
Following \citet{BurtJMLR20}, we consider the following model for $y$:
\begin{equation} \label{eq:model-y}
y | X \sim \mathcal{N}(0,k_{XX}+ \sigma^2 I_n)   
\end{equation}
which is given by the likelihood model \eqref{eq:likelihood} and by marginalizing the latent prior GP, $F \sim GP(0, k)$. 
\citet[Lemma 4]{BurtJMLR20} shows the following lower and upper bounds for the averaged KL divergence between the SVGP and exact GP posteriors:
	\begin{align}
	     \frac{ {\rm tr}(k_{XX}-q_{XX})}{2 \sigma^2}  & \leq  \E_y \left[ KL\big(\Q^{\nu^*} \ \| \ \P^{F|y}\big) \right] \label{eq:lower-bound}   \leq \frac{ {\rm tr}(k_{XX}-q_{XX})}{\sigma^2}, 
	\end{align}
where $\mathbb{E}_y$ denotes the expectation with respect to $y$ generated according to \eqref{eq:model-y}.

These lower and upper bounds are {\it a priori} bounds in the sense that they hold for the average with respect to the prior model and thus are informative before observing the actual training outputs $y_1, \dots, y_n$. While this performance measure (the averaged KL divergence) is less informative for the approximation accuracy after one has observed actual training outputs $y_1, \dots, y_n$ (the {\em a posteori} setting), the lower bound still provides a useful insight. Specifically, the lower bound \eqref{eq:lower-bound} is proportional to ${\rm tr}(k_{XX} - q_{XX})$. 
Thus if  ${\rm tr}(k_{XX} - q_{XX})$  is large, then the SVGP posterior $\Q^{\nu^*}$ {\em cannot} accurately approximate the exact posterior $\P^{F|y}$ on average. This is intuitively the case where the inducing inputs $Z = (z_1, \dots, z_m)$ do not effectively ``cover'' the training inputs $X = (x_1, \dots, x_n)$. 
Since ${\rm tr}(k_{XX} - q_{XX})$ appears both in the upper and lower bounds, the above result shows that ${\rm tr}(k_{XX} - q_{XX})$ can serve as an average performance metric for the SVGP approximation.

Now, based on the interpretation that the KL divergence \eqref{eq:KL-expression} contains the excess risk \eqref{eq:difference-KRR-objective} of the Nystr\"om KRR, the lower bound \eqref{eq:lower-bound} for the KL divergence leads to a lower bound for the excess risk of the Nystr\"om KRR. 

%By combining \eqref{eq:KL-expression}, \eqref{eq:difference-KRR-objective} and \eqref{eq:lower-bound}, we can obtain the corresponding lower bound on the difference of the KRR objectives for the Nystr\"om and exact KRR estimators:
 \begin{corollary}  \label{eq:lower-bound-KRR}
 Let $k$ be a kernel with RKHS $\mathcal{H}_k$.
Let $X = (x_1, \dots, x_n) \in \mathcal{X}^n$ and let $Z = (z_1, \dots, z_m) \in \mathcal{X}^m$ be such that the kernel matrix $k_{ZZ} \in \mathbb{R}^{m \times m}$ is invertible. 
Suppose $y = (y_1, \dots, y_n)^\top \in \mathbb{R}^n$ are generated as \eqref{eq:model-y}.
  Let $\bar{f}$ and $\hat{f}$ be the Nystr\"om and exact KRR estimators in \eqref{eq_nystroem} and \eqref{eq:KRR-compact}, respectively, with a regularization constant $\lambda > 0$.
  Then we have
\begin{align*}
\frac{1}{n} \log \frac{\det (k_{XX} + n \lambda I_n)}{\det (q_{XX} + n \lambda I_n)}           
        & \leq \mathbb{E}_y\left[ R_n(\bar{f}; y) - R_n(\hat{f}; y) \right]
\end{align*}
where $R_n(f; y) := \frac{1}{n} \sum_{i=1}^n (y_i  - f(x_i))^2 + \lambda \| f \|_{\mathcal{H}_k}^2$.
\end{corollary}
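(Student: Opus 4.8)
The plan is to translate the already-established lower bound \eqref{eq:lower-bound} on the expected KL divergence into the claimed lower bound on the expected excess risk, using the algebraic identity that links these two quantities. No new analytic ingredient is needed; the work is bookkeeping.

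First I would assemble the two identities already available in the excerpt. From \eqref{eq:difference-KRR-objective} with $\sigma^2 = n\lambda$, the excess risk equals
$$
n\bigl(R_n(\bar f;y) - R_n(\hat f;y)\bigr) = y^\top (q_{XX}+\sigma^2 I_n)^{-1} y - y^\top (k_{XX}+\sigma^2 I_n)^{-1} y,
$$
while from the expression \eqref{eq:KL-expression} of the KL divergence,
$$
2\,KL\bigl(\Q^{\nu^*}\,\|\,\P^{F|y}\bigr) = \log\frac{\det(q_{XX}+\sigma^2 I_n)}{\det(k_{XX}+\sigma^2 I_n)} + \Bigl(y^\top(q_{XX}+\sigma^2 I_n)^{-1}y - y^\top(k_{XX}+\sigma^2 I_n)^{-1}y\Bigr) + \frac{1}{\sigma^2}\operatorname{tr}(k_{XX}-q_{XX}).
$$
Eliminating the quadratic-form difference between these two displays gives
$$
n\bigl(R_n(\bar f;y) - R_n(\hat f;y)\bigr) = 2\,KL\bigl(\Q^{\nu^*}\,\|\,\P^{F|y}\bigr) - \log\frac{\det(q_{XX}+\sigma^2 I_n)}{\det(k_{XX}+\sigma^2 I_n)} - \frac{1}{\sigma^2}\operatorname{tr}(k_{XX}-q_{XX}).
$$

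Next I would take the expectation $\mathbb{E}_y$ with $y\sim\mathcal{N}(0,k_{XX}+\sigma^2 I_n)$ of both sides (the log-det and trace terms are deterministic in $y$), and invoke the lower bound in \eqref{eq:lower-bound}, namely $\mathbb{E}_y[KL(\Q^{\nu^*}\|\P^{F|y})] \geq \operatorname{tr}(k_{XX}-q_{XX})/(2\sigma^2)$. The factor $2$ multiplying the KL term makes the resulting contribution $\operatorname{tr}(k_{XX}-q_{XX})/\sigma^2$ cancel exactly against the last term, leaving
$$
n\,\mathbb{E}_y\bigl[R_n(\bar f;y) - R_n(\hat f;y)\bigr] \geq \log\frac{\det(k_{XX}+\sigma^2 I_n)}{\det(q_{XX}+\sigma^2 I_n)}.
$$
Dividing by $n$ and substituting $\sigma^2 = n\lambda$ gives the statement.

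There is no genuine obstacle here: the heavy lifting was done in establishing \eqref{eq:difference-KRR-objective}, \eqref{eq:KL-expression} and \eqref{eq:lower-bound}, so the corollary is essentially a translation. The only point requiring care is the exact cancellation of the two trace terms, which relies on the lower-bound constant in \eqref{eq:lower-bound} being precisely $\operatorname{tr}(k_{XX}-q_{XX})/(2\sigma^2)$. As a sanity check and an alternative route one can bypass the KL bound altogether: using $\mathbb{E}_y[y^\top A^{-1}y] = \operatorname{tr}\bigl(A^{-1}(k_{XX}+\sigma^2 I_n)\bigr)$ for $y\sim\mathcal{N}(0,k_{XX}+\sigma^2 I_n)$, one obtains
$$
n\,\mathbb{E}_y\bigl[R_n(\bar f;y)-R_n(\hat f;y)\bigr] = \operatorname{tr}\bigl((q_{XX}+\sigma^2 I_n)^{-1}(k_{XX}+\sigma^2 I_n)\bigr) - n = \sum_{i=1}^n (\mu_i - 1),
$$
where $\mu_1,\dots,\mu_n > 0$ are the eigenvalues of $(q_{XX}+\sigma^2 I_n)^{-1}(k_{XX}+\sigma^2 I_n)$ (real and positive since it is similar to a symmetric positive definite matrix); the elementary inequality $\mu - 1 \geq \log\mu$ then yields $\sum_i(\mu_i-1) \geq \sum_i \log\mu_i = \log\det\bigl((q_{XX}+\sigma^2 I_n)^{-1}(k_{XX}+\sigma^2 I_n)\bigr)$, recovering the same bound. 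I would include this as a remark, as it also confirms that the constant $1/2$ in \eqref{eq:lower-bound} is exactly what the cancellation requires.
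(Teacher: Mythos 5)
Your main argument is exactly the paper's proof: the paper likewise writes $2\,\E_y[KL]$ via \eqref{eq:KL-expression} with the quadratic-form difference replaced by the excess risk through \eqref{eq:difference-KRR-objective}, applies the lower bound \eqref{eq:lower-bound}, and lets the two $\operatorname{tr}(k_{XX}-q_{XX})/(n\lambda)$ terms cancel, so the corollary follows by the same bookkeeping. Your closing remark, however, is a genuinely different and worthwhile route: computing $\E_y[y^\top A^{-1}y]=\operatorname{tr}\bigl(A^{-1}(k_{XX}+\sigma^2 I_n)\bigr)$ directly and invoking $\mu-1\geq\log\mu$ for the (real, positive) eigenvalues of $(q_{XX}+\sigma^2 I_n)^{-1}(k_{XX}+\sigma^2 I_n)$ proves the bound without any reference to the SVGP machinery, and in fact yields the expected excess risk \emph{exactly} as $\frac{1}{n}\sum_i(\mu_i-1)$ rather than merely a lower bound on it; the price is that it no longer exhibits the result as a translation of the GP-side bound, which is the conceptual point the paper is making.
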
 

\textcolor{black}{
\begin{proof}
By \eqref{eq:KL-expression}, \eqref{eq:difference-KRR-objective} and \eqref{eq:lower-bound} with $\sigma^2 = n \lambda$, we have
\begin{align*}
&  \frac{ {\rm tr}(k_{XX}-q_{XX})}{n \lambda}   \leq  2 \E_y \left[ KL\big(\Q^{\nu^*} \ \| \ \P^{F|y}\big) \right]  \\
        & =  - \log \frac{  \det ( k_{XX} + n \lambda I_n) } { \det (q_{XX} + n \lambda I_n) } + \mathbb{E}_y \left[ n \left( R_n(\bar{f}, y) -  R_n(\hat{f}, y) \right)\right] + \frac{{\rm tr}(k_{XX} - q_{XX})}{n \lambda}.  \\
\end{align*}
The assertion immediately follows.
\end{proof}
}

In the left hand side of Corollary \ref{eq:lower-bound-KRR}, $\log {\rm det} (k_{XX} + n \lambda I)$ and $\log {\rm det} (q_{XX} + n \lambda I)$ can intuitively be interpreted as the complexities of the models associated with the kernels $k$ and $q$, respectively. Thus, Corollary shows that, if the complexity  for $q$ is much smaller than that for $k$, then the difference of the KRR objectives cannot be small on average. This suggests that the left hand side of Corollary \ref{eq:lower-bound-KRR} may be useful as a quality metric for the Nystr\"om approximation in the {\em a priori} setting.

\section{Conclusions} \label{sec:conclusion}
We have established various connections and equivalences between sparse approximation methods for GPR and KRR, namely the SVGP and Nystr\"om approximations. \textcolor{black}{We hope these contributions will help the two fields grow closer together and allow researchers to readily translate results from one field to another. 
In general, equivalent characterizations of the same problem enable one to look at the problem from a new angle and ultimately lead to new approaches. For example, the equivalent formulation of Bayesian posterior inference as an optimization problem \citep{csiszar1975divergence,donsker1975asymptotic} has led to a whole new class of inference algorithms 
\citep[e.g.,][]{jordan1999introduction,knoblauch2019generalized,khan2021bayesian}. Similarly, we hope our contributions will stimulate new research in both GP and RKHS communities by leveraging and extending the equivalences and connections in the paper.    
}   

\subsection*{Acknowledgements}
We express our gratitude to the Action Editor and the anonymous reviewers for their time and thoughtful comments that helped improve the paper. 
This work in part has been supported by the French Government, through the 3IA Cote d’Azur Investment in the Future Project managed by the National Research Agency (ANR) with the reference number ANR-19-P3IA-0002.

\renewcommand{\theHsection}{A\arabic{section}}

\appendix

{\color{black}

\section{
Derivations of the Optimal Variational Mean and Covariance Parameters
}\label{ap:alter_der}

We present derivations of the optimal variational mean $\mu^*$ and covariance matrix $\Sigma^*$ in \eqref{eq:optimal-mu} and \eqref{eq:optimal-sigma} based on the formulation of \citet{khan2021bayesian}.  To this end, let us rewrite  the ELBO in \eqref{eq_def_L} as 
    \begin{align}  
        \mathcal{L}(\nu) & = \E_{F^\nu \sim \Q^\nu} \big[ \log p(y|F^\nu_X) \big] - KL \big( \Q_Z^\nu \  \| \  \P_Z  \big)  \nonumber \\
        &  \stackrel{(A)}{=} \E_{F_Z^\nu \sim \Q_Z^\nu} \left[ \E_{F^\nu \sim  Q^\nu} \left[ \log p(y|F^\nu_X) \mid F_Z^\nu  \right]\right]  - KL \big( \Q_Z^\nu \  \| \  \P_Z  \big)  \nonumber \\
       &  \stackrel{(B)}{=} \E_{F_Z^\nu \sim \Q_Z^\nu} \left[ \E_{F^\nu \sim  Q^\nu} \left[ \log p(y|F^\nu_X) \mid F_Z^\nu  \right]\right]  -  \E_{F_Z^\nu \sim \Q_Z^\nu} \left[ \log  \Q_Z^\nu(F_Z^\nu) \right]  \nonumber  \\
       & = \E_{F_Z^\nu \sim \Q_Z^\nu} \left[ \E_{F^\nu \sim  Q^\nu} \left[ \log p(y|F^\nu_X) \mid F_Z^\nu  \right] + \log \P_Z( F_Z^\nu )   \right]  -  \E_{F_Z^\nu \sim \Q_Z^\nu} \left[ \log  \Q_Z^\nu(F_Z^\nu) \right]  \nonumber \\
       & = - \E_{F_Z^\nu \sim \Q_Z^\nu} \left[ \bar{\ell}(F_Z^\nu) \right]  + H(\Q_Z^\nu),  \label{eq:2422}
    \end{align}
  where $(A)$ follows from the law of total expectation,  $(B)$ from the definition of the KL divergence, and we defined  
\begin{align*}
\bar{\ell}(u) & :=  - \E_{F^\nu \sim  Q^\nu} \left[ \log p(y|F^\nu_X) \mid F_Z^\nu = u \right] - \log \P_Z( u ),  \\
& = -  \E_{F^{\nu} \sim \Q^{\nu} } \big[ \log \mathcal{N}(y| F^{\nu}_X, \sigma^2 I_n ) \mid F_Z^\nu = u ] - \log \P_Z( u ), \quad u \in \mathbb{R}^m, \\
 H(\Q_Z^\nu) & :=  -  \E_{F_Z^\nu \sim \Q_Z^\nu} \left[ \log  \Q_Z^\nu(F_Z^\nu) \right] =   \frac{1}{2} \log {\rm det}( 2 \pi e \Sigma ).
\end{align*}
The $ H(\Q_Z^\nu)$ is the entropy of the distribution $\Q_Z^\nu = \mathcal{N}(\mu, \Sigma)$. 

The optimality conditions for the variational parameters $\mu \in \mathbb{R}^m$ and $\Sigma \in \mathbb{R}^{m \times m}$ are thus given by setting the gradients of \eqref{eq:2422} to zero \citep[Eq.(5)]{khan2021bayesian}; this leads to 
\begin{align}
& \nabla_\mu \E_{F_Z^\nu \sim \Q_Z^\nu} \left[ \bar{\ell}(F_Z^\nu) \right] = \nabla_\mu   H(\Q_Z^\nu) = 0, \label{eq:optimality-mu-2434} \\
& \nabla_\Sigma \E_{F_Z^\nu \sim \Q_Z^\nu} \left[ \bar{\ell}(F_Z^\nu) \right] = \nabla_\Sigma  H(\Q_Z^\nu) = \frac{1}{2} \Sigma^{-1}. \label{eq:optimality-sigma-2435} 
\end{align}
To use these conditions to derive optimal $\mu$ and $\Sigma$, we first analyze the function $\bar{\ell}(u)$.

%We show here how the equivalence for the mean $\mu$ can be derived from equation (13) in \cite{khan2021bayesian}. In order to apply this result we need to bring our ELBO into the form described in Section 1.2 of \citet{khan2021bayesian}. 

%To this end we define $U:=F(Z)$ which is the latent variable (it corresponds to $\theta$ in \citet{khan2021bayesian}) in our model and we write $p(u)$ for the prior pdf and $q(u)$ for the variational pdf of $U$.
%We then have:
%\begin{align}
  %  \mathcal{L} &= \E_{F^{\nu} \sim \Q^{\nu} } \big[ \log p(y|F^{\nu}_X ) \big] - KL\big( q(u) \, \| p(u) \big) \nonumber \\
   % &= \E_{F^{\nu} \sim \Q^{\nu} }  \Big[ \E_{F^{\nu} \sim \Q^{\nu} } \big[ \log p(y|F^{\nu}_X) |U ] \Big]- KL\big( q(u) \, \| p(u) \big)  &&\text{(tower property)} \nonumber
%\end{align}
%We now define $\bar{\ell}$ according to equation (1) in \cite{khan2021bayesian} as
%\begin{align}
  %  \bar{l}(u):= - \E_{F^{\nu} \sim \Q^{\nu} } \big[ \log p(y|F^{\nu}_X ) |U=u ] - \log p(u) \nonumber
%\end{align}
%for $u \in \R^M$. 
Note that we have $F_X^\nu | F_Z = u  \sim \mathcal{N}(\mu_u, \Sigma_u)$, where
\begin{align}
    &\mu_u:= k_{XZ} k_{ZZ}^{-1} u \in \mathbb{R}^m, \quad \Sigma_{u}:= k_{XX}- k_{XZ} k_{ZZ}^{-1} k_{ZX} = k_{XX}- q_{XX} \in \mathbb{R}^{m \times m}. \nonumber
\end{align}
%Note that $F^{\nu}_X|U=u \sim \mathcal{N}(\mu_u. \Sigma_u)$ with 
%\begin{align}
 %   &\mu_u:= k_{XZ} k_{ZZ}^{-1} u \nonumber \\
  %  &\Sigma_u:= k_{XX}- k_{XZ} k_{ZZ}^{-1} k_{ZX} = k_{XX}- q_{XX}. \nonumber
%\end{align}
%Therefore we have 
%\begin{align*}
%& \E_{F^\nu \sim  Q^\nu} \left[ \log p(y|F^\nu_X) \mid F_Z^\nu = f_Z \right] \\
%= &  - n \log ( \sqrt{2\pi \sigma^2} ) - \E_{F^\nu \sim \Q^\nu}\Big[   \sum_{i=1}^n  \frac{ (y_i - F^\nu (x_i) )^2 }{2\sigma^2}\ \mid \  F_Z^\nu = f_Z \Big]  
%\end{align*}
From this, one can show that
\begin{equation}
    \E_{F^{\nu} \sim \Q^{\nu} } \big[ \log \mathcal{N}(y| F^{\nu}_X, \sigma^2 I_n ) \mid F_Z^\nu = u ] = \log \mathcal{N}(y \,  | \, \mu_u, \sigma^2 I_n) - \frac{1}{2 \sigma^2} {\rm tr}(k_{XX}- q_{XX}), \nonumber
\end{equation}
and therefore we obtain
\begin{align*}
    \bar{\ell}(u) &= -\log \mathcal{N}(y \,  | \, \mu_u, \sigma^2 I_n) + \frac{1}{2 \sigma^2} {\rm tr}(k_{XX}- q_{XX}) - \log \P_Z( f_Z ) \nonumber \\
    &= \frac{1}{2 \sigma^2} \sum_{i=1}^n \big(y_i - k_Z(x_i)^T k_{ZZ}^{-1} u\big)^2 + \frac{1}{2} u^\top k_{ZZ}^{-1} u + C \\
    &= \frac{1}{2 \sigma^2} \| y- \mu_u \|_2^2 + \frac{1}{2} u^\top k_{ZZ}^{-1} u + C, 
\end{align*}
where we absorb everything that does not depend on $u$ in $C$. 
Thus, the gradient and the Hessian matrix of $\bar{\ell}(u)$ with respect to $u$ are given by
\begin{align}
%    \bar{\ell}(u) &= \frac{1}{2 \sigma^2} \| y- \mu_u \|_2^2 + \frac{1}{2} u^T k_{ZZ}^{-1} u + C, \nonumber \\
    \nabla_u \bar{\ell}(u) &= - \frac{1}{\sigma^2} k_{ZZ}^{-1} k_{ZX}^{} (y- \mu_u) + k_{ZZ}^{-1} u  \nonumber \\
    &= - \frac{1}{\sigma^2} k_{ZZ}^{-1} k_{ZX}^{}y + \frac{1}{\sigma^2} k_{ZZ}^{-1} k_{ZX}^{} \mu_u + k_{ZZ}^{-1} u, \nonumber \\
    \nabla_u^2 \bar{\ell}(u) &= \frac{1}{\sigma^2}  k_{ZZ}^{-1} k_{ZX}^{} k_{XZ}^{} k_{ZZ}^{-1} + k_{ZZ}^{-1}. \nonumber
\end{align}

%Note that $\bar{l}$ is a regularised loss and the minimisation problem 
%\begin{equation}
%    \underset{u \in \R^m}{\text{argmin  }}\bar{l}(u) \nonumber
%\end{equation}
%is equivalent to the minimisation problem discussed in \eqref{eq:eqiv_opti}. This is the non-Bayesian solution since it does not take any distributional assumption on $U$ into account. It corresponds to the Nyström approximation to the KRR solution in our context. The Bayesian solution can be derived from the optimality conditions (25) and (26) in \citet{khan2021bayesian}. 

We are ready to derive optimal $\mu$ and $\Sigma$. 
We first use \eqref{eq:optimality-mu-2434} to derive optimal $\mu$, following the derivation in \citet[Eq.(25)]{khan2021bayesian}.
We have 
\begin{align*}
& \nabla_\mu \E_{F_Z^\nu \sim \Q_Z^\nu} \left[ \bar{\ell}(F_Z^\nu) \right]  \stackrel{(A)}{=}  \E_{F_Z^\nu \sim \Q_Z^\nu} \left[ \left. \nabla_u \bar{\ell}(u) \right|_{u = F_Z^\nu} \right] \\
& = \E_{F_Z^\nu \sim \Q_Z^\nu} \left[   - \frac{1}{\sigma^2} k_{ZZ}^{-1} k_{ZX}^{}y + \frac{1}{\sigma^2} k_{ZZ}^{-1} k_{ZX}^{} k_{XZ} k_{ZZ}^{-1} F_Z^\nu + k_{ZZ}^{-1} F_Z^\nu \right] \\
& =  - \frac{1}{\sigma^2} k_{ZZ}^{-1} k_{ZX}^{}y + \frac{1}{\sigma^2} k_{ZZ}^{-1} k_{ZX}^{} k_{XZ} k_{ZZ}^{-1} \mu + k_{ZZ}^{-1} \mu. 
\end{align*}
where $(A)$ follows Bonnet's theorem. 
Therefore \eqref{eq:optimality-mu-2434} leads to
\begin{equation}
    \mu = k_{ZZ} (\sigma^2 k_{ZZ} + k_{ZX}k_{XZ})^{-1} k_{ZX} y,  \nonumber
\end{equation}
which recovers the optimal $\mu^*$ in  \eqref{eq:optimal-mu}.

%The optimality condition for the mean is according to \citet[Eq.(25)]{khan2021bayesian} given as:
%\begin{equation}
%    \E_{U \sim q} \big[ \nabla \bar{l}(U)\big] = 0 \nonumber
%\end{equation}
%which leads to 
%\begin{align}
%    \E_{U \sim q} \big[ \nabla \bar{l}(U)\big] &= - \frac{1}{\sigma^2} k_{ZZ}^{-1} k_{ZX}^{}y + \frac{1}{\sigma^2} k_{ZZ}^{-1} k_{ZX}^{} \E_{U\sim q} \big[ k_{XZ} k_{ZZ}^{-1} U \big] + k_{ZZ}^{-1} \mu \nonumber  \\
%    &= - \frac{1}{\sigma^2} k_{ZZ}^{-1} k_{ZX}^{}y + \frac{1}{\sigma^2} k_{ZZ}^{-1} k_{ZX}^{} k_{XZ} k_{ZZ}^{-1} \mu + k_{ZZ}^{-1} \nonumber \mu
%\end{align}
%which is zero iff 
%\begin{equation}
 %   \mu^* = k_{ZZ} (\sigma^2 k_{ZZ} + k_{ZX}k_{XZ})^{-1} k_{ZX} y  \nonumber
%\end{equation}
%and therefore recovers the optimal solution \eqref{eq:optimal-mu}. 

 We next use \eqref{eq:optimality-sigma-2435}  to derive optimal $\Sigma$, following the derivation in \citet[Eq.(26)]{khan2021bayesian}. We have 
 \begin{align*}
 \nabla_\Sigma \E_{F_Z^\nu \sim \Q_Z^\nu} \left[ \bar{\ell}(F_Z^\nu) \right] \stackrel{(A)}{=} \frac{1}{2}  \E_{F_Z^\nu \sim \Q_Z^\nu} \left[ \left.   \nabla_u^2 \bar{\ell}(u) \right|_{u = F_Z^\nu } \right] = \frac{1}{\sigma^2}  k_{ZZ}^{-1} k_{ZX}^{} k_{XZ}^{} k_{ZZ}^{-1} + k_{ZZ}^{-1}.
 \end{align*}
where $(A)$ holds from \citet[Eq.(A.3)]{opper2009variational}. 
Thus  \eqref{eq:optimality-sigma-2435} leads to 
\begin{align}
    \Sigma &= \Big( \frac{1}{\sigma^2}  k_{ZZ}^{-1} k_{ZX}^{} k_{XZ}^{} k_{ZZ}^{-1} + k_{ZZ}^{-1} \Big)^{-1}  = k_{ZZ}^{} \big( \frac{1}{\sigma^2} k_{ZX}^{} k_{XZ}^{} + k_{ZZ}^{}  \big)^{-1} k_{ZZ}, \nonumber
\end{align}
recovering the optimal $\Sigma^*$ in \eqref{eq:optimal-sigma}.

%The optimal condition for the covariance matrix according to (26) in \citet{khan2021bayesian} is given as
%\begin{align}
 %   \E_{U \sim q}[ \nabla^2 \bar{l}(U) ] = %\Sigma^{-1} \nonumber
%\end{align}
%which is equivalent to
%\begin{equation}
%    \frac{1}{\sigma^2}  k_{ZZ}^{-1} k_{ZX}^{} k_{XZ}^{} k_{ZZ}^{-1} + k_{ZZ}^{-1} = \Sigma^{-1} \nonumber
%\end{equation}
%and therefore gives
%\begin{align}
 %   \Sigma^{*} &= \Big( \frac{1}{\sigma^2}  k_{ZZ}^{-1} k_{ZX}^{} k_{XZ}^{} k_{ZZ}^{-1} + k_{ZZ}^{-1} \Big)^{-1} \nonumber \\
%    &= k_{ZZ}^{} \big( \frac{1}{\sigma^2} k_{ZX}^{} k_{XZ}^{} + k_{ZZ}^{}  \big)^{-1} k_{ZZ}, \nonumber
%\end{align}
%which is the optimal solution for $\Sigma$ mentioned in \eqref{eq:optimal-sigma}. 

}

\section{Proofs}

\subsection{Proof of Lemma \ref{lemma_M=H_q}}\label{ap_proof_Hq=M}

\begin{proof}
We first show $\mathcal{H}_q = M$ as a set of functions.
First note that 
$$
q(\cdot, x) = k_Z(\cdot)^\top k_{ZZ}^{-1} k_Z(x)  = P_M \big(k(\cdot, x)\big), \quad \forall x \in \mathcal{X}.
$$
Define $\mathcal{H}_{0,q}$ as the vector space
\begin{align*}
	&\mathcal{H}_{0,q}:= \Big\{ f = \sum_{i=1}^{n} \alpha_i q(\cdot, d_i)   \mid  n \in \mathbb{N},~  \alpha = (\alpha_1,...,\alpha_n)^\top \in \mathbb{R}^n,~ 
	D=(d_1,...,d_n) \in \mathcal{X}^n 
	\Big\}. 
\end{align*}
Let $f := \sum_{i=1}^{n} \alpha_i q(\cdot, d_i) \in \mathcal{H}_{0,q}$ be arbitrary.
Since $q(\cdot,d_i) = P_M \big(k(\cdot,d_i)\big) \in M$, we have $f = \sum_{i=1}^{n} \alpha_i q(\cdot, d_i) \in M$ by the linearity of $M$.
Therefore $\mathcal{H}_{0,q} \subset M$.  
On the other hand, for any $f = \sum_{j=1}^m \beta_j k(\cdot, z_j) \in M$ with some $\beta_1, \dots, \beta_m \in \mathbb{R}$, we have $f = \sum_{j=1}^m \beta_j k(\cdot, z_j) =  \sum_{j=1}^m \beta_j P_M(k(\cdot, z_j)) = \sum_{j=1}^m \beta_j q(\cdot, z_j) \in \mathcal{H}_{q, 0}$. 
Therefore $M \subset \mathcal{H}_{q, 0}$.
Thus we have shown $\mathcal{H}_{0,q} = M$ as a set.

Note that the RKHS $\mathcal{H}_q$ is by definition the closure\footnote{\textcolor{black}{The closure of a subset $A$ in a Hilbert space $\mathcal{H}$ is defined as $\bar{A}:=\{h \in \mathcal{H} \, | \, \text{ there exists a sequence }  \{a_n\}_{n=1}^\infty \subset A$ \text{ such that } $\lim_{n \to \infty } \| a_n - h \|_{\mathcal{H}} = 0 \}$.}} of $\mathcal{H}_{0,q}$ with respect to the norm 
\begin{align*}
& \left\| \sum_{i=1}^{n} \alpha_i q(\cdot, d_i) \right\|_{\mathcal{H}_q}^2 =  \alpha^\top q_{DD} \alpha =  \alpha^\top k_{DZ}k_{ZZ}^{-1}k_{ZD} \alpha  =  \alpha^\top k_{DZ}k_{ZZ}^{-1} k_{ZZ} k_{ZZ}^{-1} k_{ZD} \alpha \\
&= \left\|   k_Z(\cdot)^\top  k_{ZZ}^{-1} k_{ZD} \alpha \right\|_{\mathcal{H}_k}^2   = \left\| q_D(\cdot)^\top \alpha  \right\|_{\mathcal{H}_k}^2 = \left\| \sum_{i=1}^{n} \alpha_i q(\cdot, d_i)   \right\|_{\mathcal{H}_k}^2,
\end{align*}
which coincides with the norm of $\mathcal{H}_k$. 
As $\mathcal{H}_q = M$ is a finite-dimensional subspace of $\mathcal{H}_k$, it is closed and therefore 
\begin{equation*}
	\mathcal{H}_q =\overline{\mathcal{H}_{0,q}} = \overline{M} = M.
\end{equation*}
where the closure is with respect to the norm $\| \cdot \|_{\mathcal{H}_k} = \| \cdot \|_{\mathcal{H}_q} $.

Next we show that the scalar products on $M$ and $\mathcal{H}_q$ also coincide. Take arbitrary $f$ and $g$ from $\mathcal{H}_q$. As $\mathcal{H}_q = \mathcal{H}_{0,q}$, we find a representation of the form 
\begin{align*}
	f &= q_{D}(\cdot)^\top \alpha = k_{Z}(\cdot) ^\top k_{ZZ}^{-1} k_{ZD}^{} \alpha = k_{Z}(\cdot) ^\top \tilde{\alpha} \\ 
	g &= q_{E}(\cdot)^\top \beta = k_{Z}(\cdot) ^\top k_{ZZ}^{-1} k_{ZE}^{} \beta = k_{Z}(\cdot) ^\top \tilde{\beta},
\end{align*}
where $D=(d_1,...,d_n) \in \X^n$,  $E= (e_1,...,e_\ell) \in \X^\ell$, $\alpha \in \R^n$, $\beta \in \R^\ell$, $\tilde{\alpha}:= k_{ZZ}^{-1} k_{ZD}^{} \alpha$ and $\tilde{\beta}:= k_{ZZ}^{-1} k_{ZE}^{} \beta$. This leads to 
\begin{equation*}
	\langle f,g \rangle_{\mathcal{H}_q} = \alpha^\top q_{DE} \beta = \alpha^\top k_{DZ}^{} k_{ZZ}^{-1} k_{ZE}^{} \beta
\end{equation*}
and 
\begin{equation*}
	\langle f,g \rangle_{\mathcal{H}_k} = \tilde{\alpha}^\top k_{ZZ} \tilde{\beta} = \alpha^\top k_{DZ}^{} k_{ZZ}^{-1} k_{ZE}^{} \beta,
\end{equation*}
which shows that the scalar products are the same. 

\end{proof}

\subsection{Proof of Theorem \ref{theo:ELBO-expression}}
\label{sec:proof-elbo-interp}

\begin{comment}
\noindent \textbf{Theorem \ref{theo:ELBO-expression}}  Let $\nu = (Z, \mu, \Sigma) \in \mathcal{X}^m \times \mathbb{R}^m \times \mathbb{R}^{m \times m}_{> 0}$ be such that the kernel matrix $k_{ZZ} \in \mathbb{R}^{m \times m}$ is invertible, and let $\mathcal L(\nu)$ be the ELBO in \eqref{eq_def_L}. 
Then we have
\begin{align}
-2 \sigma^2  \mathcal L(\nu)  & =    \sum_{i=1}^n \left( y_i - k_{Z}(x_i)^\top k_{ZZ}^{-1}  \mu \right)^2 + \sigma^2  \mu k_{ZZ}^{-1} \mu  \label{eq:RKHS_ELBO_1_ap} \\
& +   \sum_{i=1}^n q^{\nu}(x_i,x_i) \label{eq:RKHS_ELBO_2_ap}\\
& + \sigma^2 KL\big( \mathcal{N}(0,\Sigma) \| N(0,k_{ZZ}) \big) \label{eq:RKHS_ELBO_3_ap} \\
& + \sum_{i=1}^n \| k(\cdot,x_i) - P_M\big( k(\cdot,x_i) \big) \|^2_{\mathcal{H}_k} \label{eq:RKHS_ELBO_4_ap}
\end{align}
\end{comment}

\begin{proof} Recall that the ELBO, which we defined in \eqref{eq_def_L} is given as
   \begin{align} 
        \mathcal{L}(\nu) = - KL \big( \Q_Z^\nu \  \| \  \P_Z  \big) + \E_{F^\nu \sim \Q^\nu} \big[ \log p(y|F^\nu_X) \big].
    \end{align}
The KL-term in $\mathcal{L}$ is tractable as KL-divergence between the two Gaussians $\mathcal{N}(\mu, \Sigma)$ and $\mathcal{N}\big(0,k(Z,Z)\big)$ and given as
\begin{align}
KL \big( \Q_Z^\nu \  \| \  \P_Z  \big) = \frac{1}{2} \left( {\rm tr} (k_{ZZ}^{-1} \Sigma) + \mu^\top k_{ZZ}^{-1} \mu - m + \log \left(  \frac{ {\rm det} k_{ZZ}  }{ {\rm det} \Sigma  } \right) \right). \label{eq:KL-Gaussians_ap}
\end{align}
We now focus on the expected log-likelihood-term. Let $f_Z \in \mathbb{R}^m$ be an arbitrary vector in the support of $\Q^\nu_Z$.
Define a notation for the conditional expectation 
\begin{align*}
F_m(x) := \mathbb{E}[ F(x) \mid F_Z = f_Z ] = \mathbb{E}[ F^\nu(x) \mid F^\nu_Z = f_Z ] .
\end{align*}
where the identity follows from the definition of $F^\nu \sim \Q^\nu$. 
Then,  from the standard bias-variance decomposition argument, we have
\begin{align}
& \sum_{i=1}^n \mathbb{E} \left[ (y_i - F^\nu(x_i))^2 \mid F^\nu_Z = f_Z \right] =  \sum_{i=1}^n \mathbb{E} \left[ (y_i - F (x_i))^2 \mid F_Z = f_Z \right] \nonumber \\
& = \sum_{i=1}^n \mathbb{E} \left[ (y_i -   F_m(x_i))^2  \mid F_Z = f_Z  \right]   + \sum_{i=1}^n \mathbb{E} \left[ ( F_m (x_i) -   F(x_i))^2 \mid F_Z = f_Z \right]  \nonumber \\
& = \sum_{i=1}^n  (y_i -   F_m(x_i))^2   + \sum_{i=1}^n \mathbb{E} \left[ ( F_m (x_i) -   F(x_i))^2 \mid F_Z = f_Z \right]. \label{eq:proof-elbo-two-terms}
\end{align}

Note that, because $F_m(x)$ is the conditional expectation of $F(x)$ given $F_Z = f_Z$, it is equivalent to the kernel interpolator with training data $(z_i, F(z_i))_{i=1}^n$ and can be written as
$$
F_m(x) = k_Z(x)^\top k_{ZZ}^{-1} f_Z.
$$
Therefore 
\begin{align*}
&  \int     (y_i -   F_m(x_i))^2  dQ_Z^\nu (f_Z) 
=   \int  (y_i -    k_Z(x_i)^\top k_{ZZ}^{-1} f_Z )^2  dQ_Z^\nu (f_Z) \\
&=   \int  (y_i -    k_Z(x_i)^\top k_{ZZ}^{-1} \mu )^2  dQ_Z^\nu (f_Z)   + \int  (k_Z(x_i)^\top k_{ZZ}^{-1} \mu -    k_Z(x_i)^\top k_{ZZ}^{-1} f_Z )^2  dQ_Z^\nu (f_Z) \\
& =    (y_i -    k_Z(x_i)^\top k_{ZZ}^{-1} \mu )^2   +  k_Z(x_i)^\top k_{ZZ}^{-1} \Sigma k_{ZZ}^{-1} k_Z(x_i),
\end{align*}
where the last identity follows from $Q_Z^\nu = \mathcal{N}(\mu, \Sigma)$ by definition. 

On the other hand, the second term in \eqref{eq:proof-elbo-two-terms} is the conditional variance of $F(x_i)$ given $F_Z = f_Z$, and thus given by 
\begin{align*}
& \mathbb{E} \left[ ( F_m (x_i) -   F(x_i))^2 \mid F_Z = f_Z \right]  = k(x_i,x_i) - k_Z(x_i)^\top k_{ZZ}^{-1} k_Z(x_i),
\end{align*}
which is independent to the ``observations'' $f_Z$. 
Therefore 
\begin{align*}
& \int \mathbb{E} \left[ ( F_m (x_i) -   F(x_i))^2 \mid F_Z = f_Z \right] dQ^\nu_Z(f_Z)  =  k(x_i,x_i) - k_Z(x_i)^\top k_{ZZ}^{-1} k_Z(x_i).
\end{align*}
Using these identities, we have
\begin{align*}
 & 
\int \left( \sum_{i=1}^n \mathbb{E} \left[ (y_i - F^\nu(x_i))^2 \mid F^\nu_Z = f_Z \right] \right) dQ^\nu_Z(f_Z) \\
& =  \sum_{i=1}^n    (y_i -    k_Z(x_i)^\top k_{ZZ}^{-1} \mu )^2 + \sum_{i=1}^n   k_Z(x_i)^\top k_{ZZ}^{-1} \Sigma k_{ZZ}^{-1} k_Z(x_i) \\
& +     \sum_{i=1}^n   \left( k(x_i,x_i) - k_Z(x_i)^\top k_{ZZ}^{-1} k_Z(x_i) \right).
%& =   \sum_{i=1}^n    (y_i -    k_Z(x_i)^\top k_{ZZ}^{-1} \mu )^2 + {\rm tr} (k_{XZ} k_{ZZ}^{-1} \Sigma k_{ZZ}^{-1} k_{ZX})    + \mathrm{tr}( k_{XX} ) - {\rm tr}(k_{XZ} k_{ZZ}^{-1} k_{ZX} ).
\end{align*}
The assertion follows from these derived expressions.

\end{proof}

\subsection{Proof of  Lemma \ref{lemma:emp-risk-expression}} \label{sec:proof-lemma-emp-risk-expression}

\begin{proof}
Recall that $\hat{f} :=   k_X(\cdot)^\top (k_{XX} + \sigma^2 I_n)^{-1} y$ is the solution of KRR.
We have 
\begin{align*}
 \hat{f}_X &=  k_{XX} (k_{XX} + \sigma^2 I_n)^{-1} y = (I_n-\sigma^2 (k_{XX} + \sigma^2 I_n)^{-1} ) y,
\end{align*}
where we used the formula $A (A + \sigma^2 I_n)^{-1} = I_n - \sigma^2 (A + \sigma^2 I_n)^{-1}$ that holds for any positive semidefinte matrix $A$.
Now we have
\begin{align*}
 &   \min_{f \in \mathcal{H}_k} \sum_{i=1}^n (y_i - f(x_i))^2 + \sigma^2 \| f \|_{\mathcal{H}_k}^2   =   \| y - \hat{f}_X \|^2  + \sigma^2 \| \hat{f} \|_{\mathcal{H}_k}^2 
\end{align*}  
The first term can be expanded as
\begin{align*}
  &   \| y - \hat{f}_X \|^2  = \| y \|^2 - 2  y^\top \hat{f}_X  + \| \hat{f}_X \|^2    \\
 & = \| y \|^2 - 2 y^\top  (I_n-\sigma^2 (k_{XX} + \sigma^2 I_n)^{-1} ) y  + y^\top (I_n-\sigma^2 (k_{XX} + \sigma^2 I_n)^{-1} )^2 y \\
  & = \| y \|^2 - 2 y^\top  (I_n-\sigma^2 (k_{XX} + \sigma^2 I_n)^{-1} ) y \\
 & + y^\top (I_n-2 \sigma^2 (k_{XX} + \sigma^2 I_n)^{-1} + \sigma^4 (k_{XX} + \sigma^2 I_n)^{-2} ) y \\
 & = \sigma^4 y^\top (k_{XX} + \sigma^2 I_n)^{-2}  y .
\end{align*}
The second term is
\begin{align*}
     &  \sigma^2 \| \hat{f} \|_{\mathcal{H}_k}^2 = \sigma^2  y^\top (k_{XX} + \sigma^2 I_n)^{-1}   k_{XX}(k_{XX} + \sigma^2 I_n)^{-1} y \\
     & = \sigma^2 y^\top (k_{XX} + \sigma^2 I_n)^{-1} ( I_n - \sigma^2 (k_{XX} + \sigma^2 I_n)^{-1} ) y \\
     & = \sigma^2 y^\top (k_{XX} + \sigma^2 I_n)^{-1} y  - \sigma^4 y^\top(k_{XX} + \sigma^2 I_n)^{-2} y.
\end{align*} 
Therefore,
\begin{align*}
    & \| y - \hat{f}_X \|^2  + \sigma^2 \| \hat{f} \|_{\mathcal{H}_k}^2  = \sigma^2 y^\top (k_{XX} + \sigma^2 I_n)^{-1} y.
\end{align*}

\end{proof}

\subsection{Proof of Theorem \ref{theo:nystrom-RKHR-error}}
\label{sec:Proof-nystrom-RKHS-error}

\begin{proof}
We first make preliminaries for proving the theorem.
For a symmetric matrix $B \in \mathbb{R}^{n \times n}$ with $n \in \mathbb{N}$, denote by $\lambda_1(B) \geq \cdots \geq \lambda_n(B)$ its eigenvalues with multiplicities in the decreasing order. 
For any symmetric and positive semi-definite (SPSD) matrix $A\in \mathbb{R}^{n\times n}$  and any $B \in \mathbb{R}^{n\times n}$, we have \cite[see][]{saniuk1987matrix} 
\begin{equation}\label{eq_trace_ineq}
    {\rm tr}(AB) \le {\rm tr}(A) \|B\|_{\rm op},
\end{equation}
where \textcolor{black}{$\| B \|_{\rm op} := \sup_{v \in \mathbb{R}:~\| v \| \leq 1} \| B v \|$ denotes the operator norm (or spectral norm). If $B$ is symmetric, we have $\| B \|_{\rm op} = \max(|\lambda_{1}(B)|, |\lambda_n(B)|)$.}

For any SPSD matrix $A \in \mathbb{R}^{n \times n}$ and any symmetric and negative semi-definite (SNSD) matrix $B \in \mathbb{R}^{n \times n}$, we have 
\begin{align}
{\rm tr}(AB) &= {\rm tr}(BA) 
%&= {\rm tr}\big( A^{-1/2} (A^{1/2} B A^{1/2}) A^{1/2} \big) \\
= {\rm tr} \big( A^{1/2} B A^{1/2} \big)  = \sum_{i=1}^{n} \lambda_i(A^{1/2} B A^{1/2})
\le 0, \label{eq_trace_neg}
\end{align}
where the inequality follows from the fact that $A^{1/2} B A^{1/2}$ is SNSD and hence all its eigenvalues are non-positive. 

We also use the following short hand notation
\begin{align*}
    &K:= k_{XX}, \quad Q:=q_{XX}, \quad \tilde{K}:= K+n\lambda I_n, \quad \tilde{Q}:= Q+n\lambda I_n, \\
        &\alpha = \tilde{K}^{-1}y,\quad\beta = \tilde{Q}^{-1} y, \quad \tilde{\beta} = \big( n \lambda k_{ZZ} + k_{ZX} k_{XZ}\big)^{-1} k_{ZX}  y.
\end{align*}
Note that the matrices $K, Q, \tilde{K}, \tilde{Q}$ are SPSD. It holds that
\begin{align*}
   & \| \tilde{K}^{-1} \|_{\rm op} \le \frac{1}{n \lambda},\quad  \| \tilde{Q}^{-1} \|_{\rm op} \le \frac{1}{n \lambda},\quad  \|K \tilde{K}^{-1} \|_{\rm op} \le 1,\quad \|Q \tilde{Q}^{-1} \|_{\rm op} \le 1.
\end{align*}
Using the above notation, the KRR estimator $\hat{f}$ and the Nystr\"om approximation $\bar{f}$ can be written for any $x \in \mathcal{X}$ as
\begin{align*}
    \hat{f}(x) &= k_X(x)^\top \tilde{K}^{-1} y, \\ 
    \bar{f}(x) &= q_X(x)^\top  \tilde{Q}^{-1}y  = q_X(x)^\top  \beta \\
    & = k_Z(x)^\top \big( n \lambda k_{ZZ} + k_{ZX} k_{XZ}\big)^{-1} k_{ZX}  y = k_Z(x)^\top \tilde{\beta}.
\end{align*}
We will use the following identity:
\begin{align*}
    k_{XZ} \tilde{\beta} = \bar{f}_X = q_{XX} \tilde{Q}^{-1}y = Q \tilde{Q}^{-1} y.
\end{align*}

With these preparations, we now prove the assertion.
First we have
\begin{align*}
    \| \hat{f} - \bar{f} \|^2_{\mathcal{H}_k} &= \| \hat{f} \|^2_{\mathcal{H}_k} - 2 \langle \hat{f}, \bar{f} \rangle_{\mathcal{H}_k} + \| \bar{f} \|^2_{\mathcal{H}_k} \\
    &= \| \hat{f} \|^2_{\mathcal{H}_k} - 2 \langle \hat{f}, \bar{f} \rangle_{\mathcal{H}_k} + \| \bar{f} \|^2_{\mathcal{H}_q},
\end{align*}
where we used $\| \bar{f} \|^2_{\mathcal{H}_q}$ = $\| \bar{f} \|^2_{\mathcal{H}_k}$, which holds from $\bar{f} \in M $ and Lemma \ref{lemma_M=H_q}.
The expression is equal to
\begin{align*} 
 &= \alpha^\top K \alpha - 2 \alpha^\top k_{XZ} \tilde{\beta} + \beta^\top Q \beta \\ 
&= {\rm tr}(K \alpha \alpha^\top) - 2 {\rm tr}(k_{XZ} \tilde{\beta} \alpha^\top) + {\rm tr}(Q \beta \beta^\top ) \\
&= {\rm tr}(K \tilde{K}^{-1}y y^\top \tilde{K}^{-1} ) - {\rm tr}(Q \tilde{Q}^{-1} y y^\top \tilde{K}^{-1} )   + {\rm tr}(Q \tilde{Q}^{-1} y y^\top \tilde{Q}^{-1} ) -  {\rm tr}(Q \tilde{Q}^{-1} y y^\top \tilde{K}^{-1} ) \\
&= {\rm tr}\big( (K \tilde{K}^{-1} - Q \tilde{Q}^{-1})yy^\top \tilde{K}^{-1} \big)  + {\rm tr} \big( Q \tilde{Q}^{-1} y y^\top (\tilde{Q}^{-1}-\tilde{K}^{-1}) \big) \\
&\le {\rm tr}(K \tilde{K}^{-1} - Q  \tilde{Q}^{-1} ) \|y y^\top \|_{\rm op} \|\tilde{K}^{-1} \|_{\rm op}   + {\rm tr}(\tilde{Q}^{-1}-\tilde{K}^{-1}) \| Q \tilde{Q}^{-1}\|_{\rm op} \| y y^\top \|_{\rm op} \\
&\le  \frac{1}{n \lambda} {\rm tr}(K \tilde{K}^{-1} - Q \tilde{Q}^{-1} ) \| y \|^2  + {\rm tr}(\tilde{Q}^{-1}-\tilde{K}^{-1}) \| y \|^2 \\
&= \frac{1}{n \lambda} \Big( {\rm tr}\big( (K-Q)\tilde{K}^{-1}  \big) + {\rm tr}\big( Q (\tilde{K}^{-1}- \tilde{Q}^{-1} )\big) \Big) \| y \|^2  + {\rm tr}(\tilde{Q}^{-1}(K-Q) \tilde{K}^{-1} ) \|y \|^2
\end{align*}
Since $Q$ is SPSD and $\tilde{K}^{-1}-\tilde{Q}^{-1}$ is SNSD, we have 
\begin{equation*}
    {\rm tr}\big( Q (\tilde{K}^{-1}- \tilde{Q}^{-1} )\big) \le 0
\end{equation*}
due to \eqref{eq_trace_neg}. Using this and \eqref{eq_trace_ineq}, we obtain
\begin{align*}
&\le \frac{1}{n \lambda} {\rm tr}(K-Q) \|\tilde{K}^{-1} \|_{\rm op} \|y \|^2  + {\rm tr}(K-Q) \|\tilde{K}^{-1}\|_{\rm op} \|\tilde{Q}^{-1} \|_{\rm op} \|y \|^2 \\
&\le \frac{2}{(n\lambda)^2} {\rm tr}(K-Q) \|y \|^2,
\end{align*}
which concludes the proof. 
\end{proof}

\vskip 0.2in
\bibliography{Thesis_bibliography}

\begin{thebibliography}{64}
\providecommand{\natexlab}[1]{#1}
\providecommand{\url}[1]{\texttt{#1}}
\expandafter\ifx\csname urlstyle\endcsname\relax
  \providecommand{\doi}[1]{doi: #1}\else
  \providecommand{\doi}{doi: \begingroup \urlstyle{rm}\Url}\fi

\bibitem[Adam et~al.(2020)Adam, Eleftheriadis, Artemev, Durrande, and
  Hensman]{adam2020doubly}
Vincent Adam, Stefanos Eleftheriadis, Artem Artemev, Nicolas Durrande, and
  James Hensman.
\newblock Doubly sparse variational gaussian processes.
\newblock In \emph{International Conference on Artificial Intelligence and
  Statistics}, pages 2874--2884. PMLR, 2020.

\bibitem[Affandi et~al.(2013)Affandi, Kulesza, Fox, and
  Taskar]{affandi2013nystrom}
Raja~Hafiz Affandi, Alex Kulesza, Emily Fox, and Ben Taskar.
\newblock Nystr\"om approximation for large-scale determinantal processes.
\newblock In \emph{Artificial Intelligence and Statistics}, pages 85--98. PMLR,
  2013.

\bibitem[Aronszajn(1950)]{aronszajn1950theory}
Nachman Aronszajn.
\newblock Theory of reproducing kernels.
\newblock \emph{Transactions of the American Mathematical Society}, 68\penalty0
  (3):\penalty0 337--404, 1950.

\bibitem[Bach(2013)]{bach2013sharp}
Francis Bach.
\newblock Sharp analysis of low-rank kernel matrix approximations.
\newblock In \emph{Conference on Learning Theory}, pages 185--209. PMLR, 2013.

\bibitem[Bauer(2020)]{bauer2020advances}
Matthias Bauer.
\newblock \emph{{Advances in Probabilistic Modelling: Sparse Gaussian
  Processes, Autoencoders, and Few-shot Learning}}.
\newblock PhD thesis, University of Cambridge, 2020.

\bibitem[Bauer et~al.(2016)Bauer, van~der Wilk, and
  Rasmussen]{bauer2016understanding}
Matthias Bauer, Mark van~der Wilk, and Carl~Edward Rasmussen.
\newblock {Understanding probabilistic sparse Gaussian process approximations}.
\newblock \emph{Advances in neural information processing systems}, 29, 2016.

\bibitem[Belabbas and Wolfe(2009)]{belabbas2009spectral}
Mohamed-Ali Belabbas and Patrick~J Wolfe.
\newblock Spectral methods in machine learning and new strategies for very
  large datasets.
\newblock \emph{Proceedings of the National Academy of Sciences}, 106\penalty0
  (2):\penalty0 369--374, 2009.

\bibitem[Berlinet and {Thomas-Agnan}(2004)]{Berlinet2004}
A.~Berlinet and C.~{Thomas-Agnan}.
\newblock \emph{{Reproducing Kernel Hilbert Spaces in Probability and
  Statistics}}.
\newblock Kluwer, 2004.

\bibitem[Burt et~al.(2019)Burt, Rasmussen, and Van Der~Wilk]{burt2019rates}
David Burt, Carl~Edward Rasmussen, and Mark Van Der~Wilk.
\newblock Rates of convergence for sparse variational {G}aussian process
  regression.
\newblock In \emph{Proceedings of the 36th International Conference on Machine
  Learning}, pages 862--871, 2019.

\bibitem[Burt et~al.(2020)Burt, Rasmussen, and van~der Wilk]{BurtJMLR20}
David~R. Burt, Carl~Edward Rasmussen, and Mark van~der Wilk.
\newblock Convergence of sparse variational inference in {G}aussian processes
  regression.
\newblock \emph{Journal of Machine Learning Research}, 21\penalty0
  (131):\penalty0 1--63, 2020.

\bibitem[Chen and Yang(2021)]{chen2021fast}
Yifan Chen and Yun Yang.
\newblock Fast statistical leverage score approximation in kernel ridge
  regression.
\newblock In \emph{International Conference on Artificial Intelligence and
  Statistics}, pages 2935--2943. PMLR, 2021.

\bibitem[Cortes et~al.(2010)Cortes, Mohri, and Talwalkar]{cortes2010impact}
Corinna Cortes, Mehryar Mohri, and Ameet Talwalkar.
\newblock On the impact of kernel approximation on learning accuracy.
\newblock In \emph{Proceedings of the Thirteenth International Conference on
  Artificial Intelligence and Statistics}, pages 113--120. JMLR Workshop and
  Conference Proceedings, 2010.

\bibitem[Csat{\'o} and Opper(2002)]{csato2002sparse}
Lehel Csat{\'o} and Manfred Opper.
\newblock Sparse on-line {G}aussian processes.
\newblock \emph{Neural Computation}, 14\penalty0 (3):\penalty0 641--668, 2002.

\bibitem[Csisz{\'a}r(1975)]{csiszar1975divergence}
Imre Csisz{\'a}r.
\newblock I-divergence geometry of probability distributions and minimization
  problems.
\newblock \emph{The annals of probability}, pages 146--158, 1975.

\bibitem[Derezinski et~al.(2020)Derezinski, Khanna, and
  Mahoney]{Derezinski2020Improved}
Michal Derezinski, Rajiv Khanna, and Michael~W Mahoney.
\newblock Improved guarantees and a multiple-descent curve for {Column Subset
  Selection} and the {N}ystr\"om method.
\newblock In H.~Larochelle, M.~Ranzato, R.~Hadsell, M.~F. Balcan, and H.~Lin,
  editors, \emph{Advances in Neural Information Processing Systems}, volume~33,
  pages 4953--4964. Curran Associates, Inc., 2020.

\bibitem[Donsker and Varadhan(1975)]{donsker1975asymptotic}
Monroe~D Donsker and SR~Srinivasa Varadhan.
\newblock Asymptotic evaluation of certain markov process expectations for
  large time, i.
\newblock \emph{Communications on Pure and Applied Mathematics}, 28\penalty0
  (1):\penalty0 1--47, 1975.

\bibitem[Drineas and Mahoney(2005)]{drineas2005nystrom}
Petros Drineas and Michael~W. Mahoney.
\newblock On the {N}ystr\"om method for approximating a {G}ram matrix for
  improved kernel-based learning.
\newblock \emph{Journal of Machine Learning Research}, 6\penalty0
  (72):\penalty0 2153--2175, 2005.

\bibitem[Dutordoir et~al.(2020)Dutordoir, Durrande, and
  Hensman]{dutordoir20-sparse}
Vincent Dutordoir, Nicolas Durrande, and James Hensman.
\newblock Sparse {G}aussian processes with spherical harmonic features.
\newblock In Hal~Daumé III and Aarti Singh, editors, \emph{Proceedings of the
  37th International Conference on Machine Learning}, volume 119 of
  \emph{Proceedings of Machine Learning Research}, pages 2793--2802. PMLR,
  2020.

\bibitem[El~Alaoui and Mahoney(2015)]{alaoui2015fast}
Ahmed El~Alaoui and Michael~W Mahoney.
\newblock Fast randomized kernel ridge regression with statistical guarantees.
\newblock In \emph{Advances in Neural Information Processing Systems}, pages
  775--783, 2015.

\bibitem[Fowlkes et~al.(2004)Fowlkes, Belongie, Chung, and
  Malik]{fowlkes2004spectral}
Charless Fowlkes, Serge Belongie, Fan Chung, and Jitendra Malik.
\newblock Spectral grouping using the {N}ystr\"om method.
\newblock \emph{IEEE Transactions on Pattern Analysis and Machine
  Intelligence}, 26\penalty0 (2):\penalty0 214--225, 2004.

\bibitem[Gittens and Mahoney(2016)]{gittens2016revisiting}
Alex Gittens and Michael~W Mahoney.
\newblock Revisiting the nystr{\"o}m method for improved large-scale machine
  learning.
\newblock \emph{The Journal of Machine Learning Research}, 17\penalty0
  (1):\penalty0 3977--4041, 2016.

\bibitem[Hensman et~al.(2013)Hensman, Fusi, and Lawrence]{hensman2013gaussian}
James Hensman, Nicol{\`o} Fusi, and Neil~D Lawrence.
\newblock Gaussian processes for big data.
\newblock In \emph{Proceedings of the Twenty-Ninth Conference on Uncertainty in
  Artificial Intelligence}, pages 282--290, 2013.

\bibitem[Hensman et~al.(2015{\natexlab{a}})Hensman, Matthews, and
  Ghahramani]{hensman2015scalable}
James Hensman, Alexander Matthews, and Zoubin Ghahramani.
\newblock Scalable variational {G}aussian process classification.
\newblock In \emph{Artificial Intelligence and Statistics}, pages 351--360.
  PMLR, 2015{\natexlab{a}}.

\bibitem[Hensman et~al.(2015{\natexlab{b}})Hensman, Matthews, Filippone, and
  Ghahramani]{Hensman2015MCMC}
James Hensman, Alexander~G Matthews, Maurizio Filippone, and Zoubin Ghahramani.
\newblock {MCMC} for variationally sparse {G}aussian processes.
\newblock In C.~Cortes, N.~Lawrence, D.~Lee, M.~Sugiyama, and R.~Garnett,
  editors, \emph{Advances in Neural Information Processing Systems}, volume~28.
  Curran Associates, Inc., 2015{\natexlab{b}}.

\bibitem[Hensman et~al.(2018)Hensman, Durrande, and
  Solin]{hensman2017variational}
James Hensman, Nicolas Durrande, and Arno Solin.
\newblock Variational {F}ourier features for {G}aussian processes.
\newblock \emph{Journal of Machine Learning Research}, 18\penalty0
  (151):\penalty0 1--52, 2018.

\bibitem[Hofmann et~al.(2008)Hofmann, Sch{\"o}lkopf, and
  Smola]{hofmann2008kernel}
Thomas Hofmann, Bernhard Sch{\"o}lkopf, and Alexander~J Smola.
\newblock Kernel methods in machine learning.
\newblock \emph{Annals of Statistics}, 36\penalty0 (3):\penalty0 1171--1220,
  2008.

\bibitem[Jordan et~al.(1999)Jordan, Ghahramani, Jaakkola, and
  Saul]{jordan1999introduction}
Michael~I Jordan, Zoubin Ghahramani, Tommi~S Jaakkola, and Lawrence~K Saul.
\newblock An introduction to variational methods for graphical models.
\newblock \emph{Machine learning}, 37\penalty0 (2):\penalty0 183--233, 1999.

\bibitem[Kanagawa et~al.(2018)Kanagawa, Hennig, Sejdinovic, and
  Sriperumbudur]{kanagawa2018gaussian}
Motonobu Kanagawa, Philipp Hennig, Dino Sejdinovic, and Bharath~K
  Sriperumbudur.
\newblock Gaussian processes and kernel methods: A review on connections and
  equivalences.
\newblock \emph{arXiv preprint arXiv:1807.02582}, 2018.

\bibitem[Khan and Rue(2021)]{khan2021bayesian}
Mohammad~Emtiyaz Khan and H{\aa}vard Rue.
\newblock {The Bayesian Learning Rule}.
\newblock \emph{arXiv preprint arXiv:2107.04562}, 2021.

\bibitem[Kimeldorf and Wahba(1970)]{kimeldorf1970correspondence}
G.~S. Kimeldorf and G.~Wahba.
\newblock A correspondence between {B}ayesian estimation on stochastic
  processes and smoothing by splines.
\newblock \emph{The Annals of Mathematical Statistics}, 41\penalty0
  (2):\penalty0 495--502, 1970.

\bibitem[Knoblauch et~al.(2019)Knoblauch, Jewson, and
  Damoulas]{knoblauch2019generalized}
Jeremias Knoblauch, Jack Jewson, and Theodoros Damoulas.
\newblock Generalized variational inference: Three arguments for deriving new
  posteriors.
\newblock \emph{arXiv preprint arXiv:1904.02063}, 2019.

\bibitem[Kumar et~al.(2009)Kumar, Mohri, and Talwalkar]{kumar2009ensemble}
Sanjiv Kumar, Mehryar Mohri, and Ameet Talwalkar.
\newblock Ensemble {N}ystr\"om method.
\newblock In Y.~Bengio, D.~Schuurmans, J.~Lafferty, C.~Williams, and
  A.~Culotta, editors, \emph{Advances in Neural Information Processing
  Systems}, volume~22. Curran Associates, Inc., 2009.

\bibitem[Kumar et~al.(2012)Kumar, Mohri, and Talwalkar]{Kumar12Sampling}
Sanjiv Kumar, Mehryar Mohri, and Ameet Talwalkar.
\newblock Sampling methods for the nystr\"om method.
\newblock \emph{Journal of Machine Learning Research}, 13\penalty0
  (34):\penalty0 981--1006, 2012.

\bibitem[Leibfried et~al.(2020)Leibfried, Dutordoir, John, and
  Durrande]{leibfried2020tutorial}
Felix Leibfried, Vincent Dutordoir, ST~John, and Nicolas Durrande.
\newblock A tutorial on sparse {G}aussian processes and variational inference.
\newblock \emph{arXiv preprint arXiv:2012.13962}, 2020.

\bibitem[Li et~al.(2016)Li, Jegelka, and Sra]{li2016fast}
Chengtao Li, Stefanie Jegelka, and Suvrit Sra.
\newblock Fast {DPP} sampling for {N}ystr\"om with application to kernel
  methods.
\newblock In Maria~Florina Balcan and Kilian~Q. Weinberger, editors,
  \emph{Proceedings of The 33rd International Conference on Machine Learning},
  volume~48 of \emph{Proceedings of Machine Learning Research}, pages
  2061--2070, New York, New York, USA, 20--22 Jun 2016. PMLR.

\bibitem[Matthews et~al.(2016)Matthews, Hensman, Turner, and
  Ghahramani]{matthews2016sparse}
Alexander G de~G Matthews, James Hensman, Richard Turner, and Zoubin
  Ghahramani.
\newblock On sparse variational methods and the {Kullback-Leibler} divergence
  between stochastic processes.
\newblock In \emph{Artificial Intelligence and Statistics}, pages 231--239,
  2016.

\bibitem[Meanti et~al.(2020)Meanti, Carratino, Rosasco, and
  Rudi]{Meanti_NeurIPS2020_kernel}
Giacomo Meanti, Luigi Carratino, Lorenzo Rosasco, and Alessandro Rudi.
\newblock Kernel methods through the roof: Handling billions of points
  efficiently.
\newblock In H.~Larochelle, M.~Ranzato, R.~Hadsell, M.~F. Balcan, and H.~Lin,
  editors, \emph{Advances in Neural Information Processing Systems}, volume~33,
  pages 14410--14422. Curran Associates, Inc., 2020.
\newblock URL
  \url{https://proceedings.neurips.cc/paper/2020/file/a59afb1b7d82ec353921a55c579ee26d-Paper.pdf}.

\bibitem[Musco and Musco(2017)]{musco2016recursive}
Cameron Musco and Christopher Musco.
\newblock Recursive sampling for the {N}ystr\"om method.
\newblock In I.~Guyon, U.~V. Luxburg, S.~Bengio, H.~Wallach, R.~Fergus,
  S.~Vishwanathan, and R.~Garnett, editors, \emph{Advances in Neural
  Information Processing Systems}, volume~30. Curran Associates, Inc., 2017.

\bibitem[Nieman et~al.(2022)Nieman, Szabo, and van Zanten]{Nieman22Contraction}
Dennis Nieman, Botond Szabo, and Harry van Zanten.
\newblock {Contraction rates for sparse variational approximations in Gaussian
  process regression}.
\newblock \emph{Journal of Machine Learning Research}, 23\penalty0
  (205):\penalty0 1--26, 2022.
\newblock URL \url{http://jmlr.org/papers/v23/21-1128.html}.

\bibitem[Opper and Archambeau(2009)]{opper2009variational}
Manfred Opper and C{\'e}dric Archambeau.
\newblock {The variational Gaussian approximation revisited}.
\newblock \emph{Neural Computation}, 21\penalty0 (3):\penalty0 786--792, 2009.

\bibitem[Parzen(1961)]{Par61}
E.~Parzen.
\newblock An approach to time series analysis.
\newblock \emph{The Annals of Mathematical Statistics}, 32\penalty0
  (4):\penalty0 951--989, 1961.

\bibitem[Qui{\~n}onero-Candela and Rasmussen(2005)]{quinonero2005unifying}
Joaquin Qui{\~n}onero-Candela and Carl~Edward Rasmussen.
\newblock A unifying view of sparse approximate {G}aussian process regression.
\newblock \emph{Journal of Machine Learning Research}, 6:\penalty0 1939--1959,
  2005.

\bibitem[Rasmussen and Williams(2006)]{RasmussenWilliams}
C.E. Rasmussen and C.K.I. Williams.
\newblock \emph{{Gaussian Processes for Machine Learning}}.
\newblock MIT Press, 2006.

\bibitem[Rossi et~al.(2021)Rossi, Heinonen, Bonilla, Shen, and
  Filippone]{Rossi21sparse}
Simone Rossi, Markus Heinonen, Edwin Bonilla, Zheyang Shen, and Maurizio
  Filippone.
\newblock Sparse {G}aussian processes revisited: Bayesian approaches to
  inducing-variable approximations.
\newblock In Arindam Banerjee and Kenji Fukumizu, editors, \emph{Proceedings of
  The 24th International Conference on Artificial Intelligence and Statistics},
  volume 130 of \emph{Proceedings of Machine Learning Research}, pages
  1837--1845. PMLR, 13--15 Apr 2021.

\bibitem[Rudi et~al.(2015)Rudi, Camoriano, and Rosasco]{rudi2015less}
Alessandro Rudi, Raffaello Camoriano, and Lorenzo Rosasco.
\newblock Less is more: Nystr{\"o}m computational regularization.
\newblock In \emph{Advances in Neural Information Processing Systems}, pages
  1657--1665, 2015.

\bibitem[Rudi et~al.(2017)Rudi, Carratino, and Rosasco]{rudi2017falkon}
Alessandro Rudi, Luigi Carratino, and Lorenzo Rosasco.
\newblock {FALKON}: An optimal large scale kernel method.
\newblock In I.~Guyon, U.~V. Luxburg, S.~Bengio, H.~Wallach, R.~Fergus,
  S.~Vishwanathan, and R.~Garnett, editors, \emph{Advances in Neural
  Information Processing Systems}, volume~30. Curran Associates, Inc., 2017.

\bibitem[Saniuk and Rhodes(1987)]{saniuk1987matrix}
J~Saniuk and I~Rhodes.
\newblock A matrix inequality associated with bounds on solutions of algebraic
  {R}iccati and {L}yapunov equations.
\newblock \emph{IEEE Transactions on Automatic Control}, 32\penalty0
  (8):\penalty0 739--740, 1987.

\bibitem[Sch{\"o}lkopf and Smola(2002)]{scholkopf2002learning}
Bernhard Sch{\"o}lkopf and Alexander~J Smola.
\newblock \emph{Learning with Kernels: Support Vector Machines, Regularization,
  Optimization, and Beyond}.
\newblock MIT press, 2002.

\bibitem[Sch{\"o}lkopf et~al.(2001)Sch{\"o}lkopf, Herbrich, and
  Smola]{scholkopf2001generalized}
Bernhard Sch{\"o}lkopf, Ralf Herbrich, and Alex~J Smola.
\newblock A generalized representer theorem.
\newblock In \emph{International Conference on Computational Learning Theory},
  pages 416--426. Springer, 2001.

\bibitem[Seeger et~al.(2003)Seeger, Williams, and Lawrence]{seeger2003fast}
Matthias Seeger, Christopher Williams, and Neil Lawrence.
\newblock Fast forward selection to speed up sparse {G}aussian process
  regression.
\newblock In \emph{Artificial Intelligence and Statistics}, 2003.

\bibitem[Shi et~al.(2020)Shi, Titsias, and Mnih]{shi2019sparse}
Jiaxin Shi, Michalis Titsias, and Andriy Mnih.
\newblock Sparse orthogonal variational inference for {G}aussian processes.
\newblock In Silvia Chiappa and Roberto Calandra, editors, \emph{Proceedings of
  the Twenty Third International Conference on Artificial Intelligence and
  Statistics}, volume 108 of \emph{Proceedings of Machine Learning Research},
  pages 1932--1942. PMLR, 26--28 Aug 2020.

\bibitem[Smola and Sch\"olkopf(2000)]{smola2000sparse}
Alex~J Smola and Bernhard Sch\"olkopf.
\newblock Sparse greedy matrix approximation for machine learning.
\newblock In \emph{Proceedings of the Seventeenth International Conference on
  Machine Learning}, pages 911--918, 2000.

\bibitem[Snelson and Ghahramani(2006)]{snelson2006sparse}
Edward Snelson and Zoubin Ghahramani.
\newblock Sparse {G}aussian processes using pseudo-inputs.
\newblock In \emph{Advances in Neural Information Processing Systems}, pages
  1257--1264, 2006.

\bibitem[Snelson and Ghahramani(2007)]{snelson2007local}
Edward Snelson and Zoubin Ghahramani.
\newblock Local and global sparse {G}aussian process approximations.
\newblock In \emph{Artificial Intelligence and Statistics}, pages 524--531,
  2007.

\bibitem[Steinwart and Christmann(2008)]{SteChr2008}
I.~Steinwart and A.~Christmann.
\newblock \emph{Support Vector Machines}.
\newblock Springer, 2008.

\bibitem[Talwalkar et~al.(2008)Talwalkar, Kumar, and
  Rowley]{talwalkar2008large}
Ameet Talwalkar, Sanjiv Kumar, and Henry Rowley.
\newblock Large-scale manifold learning.
\newblock In \emph{2008 IEEE Conference on Computer Vision and Pattern
  Recognition}. IEEE, 2008.

\bibitem[Talwalkar et~al.(2013)Talwalkar, Kumar, Mohri, and
  Rowley]{talwalkar2013large}
Ameet Talwalkar, Sanjiv Kumar, Mehryar Mohri, and Henry Rowley.
\newblock Large-scale {SVD} and manifold learning.
\newblock \emph{Journal of Machine Learning Research}, 14\penalty0
  (60):\penalty0 3129--3152, 2013.

\bibitem[Titsias(2009{\natexlab{a}})]{titsias2009variational}
Michalis Titsias.
\newblock Variational learning of inducing variables in sparse {G}aussian
  processes.
\newblock In \emph{Artificial Intelligence and Statistics}, pages 567--574,
  2009{\natexlab{a}}.

\bibitem[Titsias(2009{\natexlab{b}})]{titsias2009techreport}
Michalis~K Titsias.
\newblock Variational model selection for sparse {G}aussian process regression.
\newblock \emph{Technical Report, University of Manchester, UK},
  2009{\natexlab{b}}.

\bibitem[Tran et~al.(2021)Tran, Milios, Michiardi, and Filippone]{tran21sparse}
Gia-Lac Tran, Dimitrios Milios, Pietro Michiardi, and Maurizio Filippone.
\newblock Sparse within sparse {G}aussian processes using neighbor information.
\newblock In \emph{Proceedings of the Thirty-eighth International Conference on
  Machine Learning}, 2021.

\bibitem[Vakili et~al.(2022)Vakili, Scarlett, Shiu, and
  Bernacchia]{vakili22Improved}
Sattar Vakili, Jonathan Scarlett, Da-Shan Shiu, and Alberto Bernacchia.
\newblock {Improved Convergence Rates for Sparse Approximation Methods in
  Kernel-Based Learning}.
\newblock In Kamalika Chaudhuri, Stefanie Jegelka, Le~Song, Csaba Szepesvari,
  Gang Niu, and Sivan Sabato, editors, \emph{Proceedings of the 39th
  International Conference on Machine Learning}, volume 162 of
  \emph{Proceedings of Machine Learning Research}, pages 21960--21983. PMLR,
  17--23 Jul 2022.
\newblock URL \url{https://proceedings.mlr.press/v162/vakili22a.html}.

\bibitem[Wahba(1990)]{wahba1990spline}
Grace Wahba.
\newblock \emph{Spline Models for Observational Data}.
\newblock SIAM, 1990.

\bibitem[Williams and Seeger(2001)]{williams2001using}
Christopher~KI Williams and Matthias Seeger.
\newblock Using the {N}ystr\"om method to speed up kernel machines.
\newblock In \emph{Advances in Neural Information Processing Systems}, pages
  682--688, 2001.

\bibitem[Wu et~al.(2017)Wu, Poloczek, Wilson, and Frazier]{Jian17BO-gradients}
Jian Wu, Matthias Poloczek, Andrew~G Wilson, and Peter Frazier.
\newblock Bayesian optimization with gradients.
\newblock In I.~Guyon, U.~V. Luxburg, S.~Bengio, H.~Wallach, R.~Fergus,
  S.~Vishwanathan, and R.~Garnett, editors, \emph{Advances in Neural
  Information Processing Systems}, volume~30. Curran Associates, Inc., 2017.

\end{thebibliography}

\end{document}